\definecolor{mydarkblue}{rgb}{0,0.08,0.45}
\newtheorem{theorem}{Theorem}
\newtheorem{corollary}{Corollary}
\newtheorem{lemma}{Lemma}
\newtheorem{assumption}{Assumption}
\theoremstyle{definition}
\newtheorem{definition}[theorem]{Definition}
\newtheorem*{theorem*}{Theorem}
\def\eb{\mathbf{e}}
\newcommand{\TODO}[1]{}
\newcommand{\td}[1]{\ensuremath{\tilde{#1}}}
\newcommand{\rowa}[1]{\ensuremath{a_{(#1)}}}
\newcommand{\cola}[1]{\ensuremath{a^{(#1)}}}
\newcommand{\colof}[2]{\ensuremath{#1^{(#2)}}}
\newcommand{\sqnorm}[1]{\ensuremath{\|#1\|_2^2}}
\newcommand{\remove}[1]{}
\newcommand{\Sigmab}{\boldsymbol{\Sigma}}
\newcommand{\Lambdab}{\boldsymbol{\Lambda}}
\newcommand{\Omegab}{\boldsymbol{\Omega}}
\newcommand{\Xb}{\mathbf{X}}
\newcommand{\Ub}{\mathbf{U}}
\newcommand{\Vb}{\mathbf{V}}
\newcommand{\Qb}{\mathbf{Q}}
\newcommand{\Rb}{\mathbf{R}}
\newcommand{\Ab}{\mathbf{A}}
\newcommand{\Nb}{\mathbf{N}}
\newcommand{\Pb}{\mathbf{P}}
\newcommand{\Cb}{\mathbf{C}}
\newcommand{\Db}{\mathbf{D}}
\newcommand{\Ib}{\mathbf{I}}
\newcommand{\norm}[1]{\| #1 \|}
\newcommand{\defeq}{:=}
\newcommand{\rank}{\text{rank}}
\newcommand{\normfsq}[1]{\| #1 \|_{\normalfont\text{F}}^2}
\newtheorem*{Conj*}{Conjecture}
\newcommand{\eat}[1]{}
\newcommand{\commented}{no}
\newcommand{\pnote}[1]{\footnote{{\bf [[Parik: {#1}\bf ]] }}}
\newcommand{\pnote}[1]{}
\newcommand{\ignore}[1]{}
\def\colorful{3}
\newcommand{\diag}{\mathrm{diag}}
\newcommand{\sr}{\mathrm{sr}}
\newcommand{\rk}{\mathrm{rank}}
\newcommand{\R}{\mathbb R}
\newcommand{\eps}{\varepsilon}
\newcommand{\poly}{\mathrm{poly}}
\newcounter{this-list}
\title{Efficient Anomaly Detection via Matrix Sketching}
\author{
	Vatsal Sharan \\
	Stanford University\thanks{Part of the work was done while the author was an intern at VMware Research.} \\
	\fontsize{10}{10}\selectfont {\tt vsharan@stanford.edu}
	\And
	Parikshit Gopalan \\
	VMware Research \\
	\fontsize{10}{10}\selectfont {\tt pgopalan@vmware.com}
	\And
	Udi Wieder \\
	VMware Research \\
	\fontsize{10}{10}\selectfont {\tt uwieder@vmware.com}
}
\begin{document}

\maketitle
\begin{abstract}
We consider the problem of finding anomalies in high-dimensional data using popular PCA based anomaly scores. 
The naive algorithms for computing these scores explicitly compute the PCA of the covariance matrix which
uses space quadratic in the dimensionality of the data. We give the
first streaming algorithms that use space that is linear or sublinear in the dimension. We prove general results showing that \emph{any} sketch of a matrix that satisfies a certain operator
norm guarantee can be used to approximate these scores. We
instantiate these results with powerful matrix sketching techniques
such as Frequent Directions and random projections to
derive efficient and practical algorithms for these problems, which we validate over real-world data sets.
Our main technical contribution is to prove matrix perturbation
inequalities for operators arising in the computation of these measures.

\eat{
	We present efficient streaming algorithms to find anomalies in high dimensional data for two commonly anomaly scores: the rank-$k$ leverage scores (aka Mahalanobis distance) and the rank-$k$ projection distance. 
	The naive algorithms for these tasks,
	based on computing the covariance matrix, and then computing the SVD
	use space quadratic in the dimensionality of the data. We give the
	first streaming algorithms that use space that is linear or sublinear in the dimension. We prove general results showing that \emph{any} sketch of a matrix that satisfies a certain operator
	norm guarantee can be used to approximate these measures. We
	instantiate these results with powerful matrix sketching techniques
	such as the Frequent Directions sketch and random projections to
	derive efficient and practical algorithms for these problems.
	Our main technical contribution is to prove matrix perturbation
	inequalities for operators arising in the computation of these measures.
}

\end{abstract}



\section{Introduction}


Anomaly detection in high-dimensional numeric data is a ubiquitous
problem in machine learning \citep{Aggarwal2013,ChandolaBK09}. A typical scenario is where we have a
constant stream of measurements (say parameters regarding the
health of machines in a data-center), and our goal is to detect any unusual
behavior. An algorithm to detect anomalies in such high dimensional settings faces computational
challenges: the dimension of the data matrix $\Ab \in \mathbb{R}^{n \times d}$ may be very large both
in terms of the number of data points $n$ and their dimensionality $d$
(in the datacenter example, $d$ could be $10^6$ and $n \gg d$).
The desiderata for an algorithm to be efficient in such settings are---

1. As $n$ is too large for the data to be stored in memory, the algorithm must
work in a streaming fashion where it only gets a constant
number of passes over the dataset.\\
2. As $d$ is also very large, the algorithm should ideally use memory
linear or even sublinear in $d$.


In this work we focus on two popular subspace based anomaly
scores: rank-$k$ leverage scores and rank-$k$ projection distance. The
key idea behind subspace based anomaly scores is that real-world data
often has most of its variance in a low-dimensional rank $k$ subspace,
where $k$ is usually much smaller than $d$. In this section, we assume
$k = O(1)$ for simplicity. These scores are based on identifying this
principal $k$ subspace using Principal Component Analyis (PCA) and
then computing how ``normal'' the projection of a point on the
principal $k$ subspace looks. Rank-$k$ leverage scores compute the
normality of the projection of the point \emph{onto} the principal $k$
subspace using Mahalanobis distance, and rank-$k$ projection distance
compute the $\ell_2$ distance of the point \emph{from} the principal
$k$ subspace (see Fig. \ref{fig:lev} for an
illustration). These scores have found widespread use for detection of
anomalies in many applications such as finding outliers in network
traffic data
\citep{lakhina2004diagnosing,lakhina2005mining,huang2007communication,huang2007network},
detecting anomalous behavior in social networks
\citep{viswanath2014towards,Portnoff2018thesis},  intrusion detection
in computer security
\citep{shyu2003novel,wang2004novel,davis2011data}, in industrial
systems for fault detection
\citep{chiang2000fault,russell2000fault,joe2003statistical} and for
monitoring data-centers \citep{xu2009detecting,mi2013toward}.

	\begin{figure}[t]
		\centering
		\includegraphics[width=2 in]{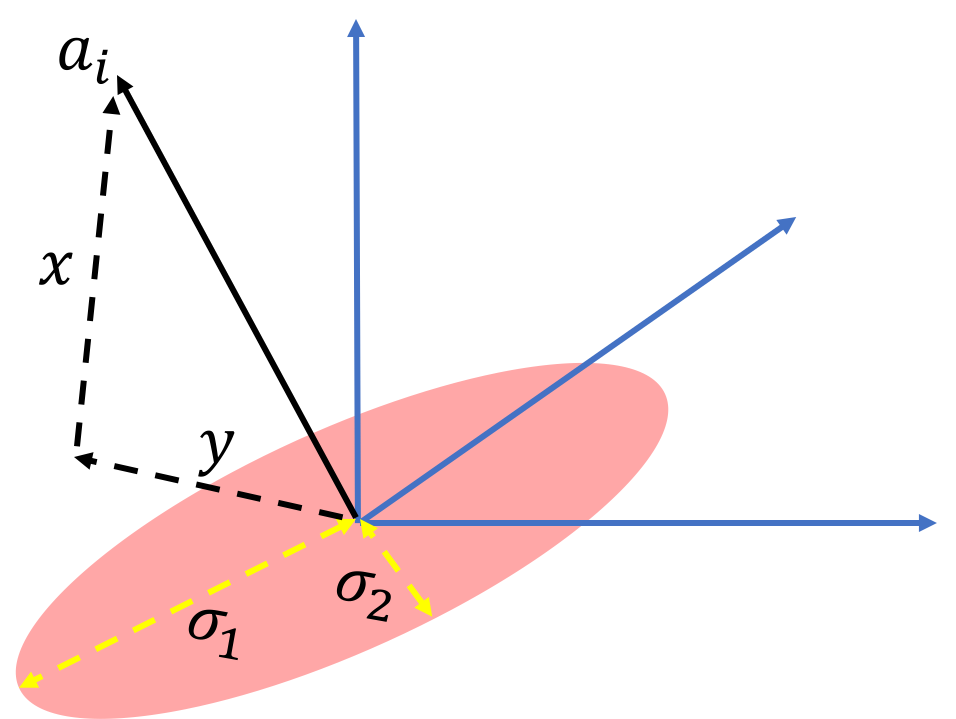}
		\caption{Illustration of subspace based anomaly
                  scores. Here, the data lies mostly in the $k=2$
                  dimensional principal subspace shaded in red. For a
                  point $a_{(i)}$ the rank-$k$ projection distance
                  equals $\norm{x}_2$, where $x$ is the component of
                  $a_{(i)}$ orthogonal to the principal subspace. The
                  rank-$k$ leverage score measures the {\em normality}
                  of the projection $y$ onto the principal subspace.}
		\label{fig:lev}
	\end{figure}

The standard approach to compute principal $k$ subspace based anomaly
scores in a streaming setting is by computing $\Ab^T\Ab$, the
$(d\times d)$ covariance matrix of the data, and then computing
the top $k$ principal components. This takes space $O(d^2)$ and time
$O(nd^2)$. The quadratic dependence on $d$ renders this approach inefficient
in high dimensions. It raises the natural question of whether better
algorithms exist.

\subsection{Our Results}
In this work, we answer the above question affirmatively, by giving
algorithms for computing these anomaly scores that require space
linear and even sublinear in $d$. Our algorithms use
popular matrix sketching techniques while their analysis uses new matrix perturbation
inequalities that we prove. Briefly, a sketch of a matrix produces a
much smaller matrix that preserves some desirable properties of the
large matrix (formally, it is close in some suitable norm).
Sketching techniques have found numerous applications to numerical
linear algebra. Several efficient sketching algorithms
are known in the streaming setting \citep{woodruff2014sketching}.

\paragraph{Pointwise guarantees with linear space: }We show that any sketch $\tilde{\Ab}$ of $\Ab$ with the property that
$\norm{\Ab^T\Ab-\tilde{\Ab}^T\tilde{\Ab}}$ is small, can be used to
additively approximate the rank-$k$ leverage scores and rank-$k$ projection
distances for each row. By instantiating this with suitable sketches such as the Frequent
Directions sketch \citep{liberty2013simple}, row-sampling
\citep{drineas2006fast} or a random projection of the columns of the input, we get a
streaming algorithm that uses $O(d)$ memory and $O(nd)$ time.

\paragraph{A matching lower bound: } Can we get such an additive approximation using
memory only $o(d)$?\footnote{Note
		that even though each row is $d$ dimensional an algorithm need not
		store the entire row in memory, and could instead perform
		computations as each coordinate of the row streams
                in.}
The answer is no, we show a lower bound saying that any algorithm that computes
such an approximation to  the rank-$k$ leverage scores or the rank-$k$
projection distances for all the rows of a matrix {must} use
$\Omega(d)$ working space, using techniques from communication complexity. Hence our algorithm has near-optimal dependence on $d$
for the task of approximating the outlier scores for every data point.

\paragraph{Average-case guarantees with logarithmic space: }
Perhaps surprisingly, we show that it is actually possible to circumvent the lower bound by relaxing the requirement that the outlier
scores be preserved for each and every point to only preserving the outlier
scores on average. For this we require sketches where
${\norm{\Ab\Ab^T-\tilde{\Ab}\tilde{\Ab}^T}}$ is small: this can be
achieved via random projection of the rows of the input matrix or column
subsampling \citep{drineas2006fast}. Using any such sketch, we give a
streaming algorithm that
can preserve the outlier scores for the rows up to small additive error on average, and hence
preserve most outliers.
The space required by this algorithm is only $\poly(k)\log(d)$, and hence we get significant space savings in this setting (recall that we assume $k = O(1)$).


\paragraph{Technical contributions.}
A sketch of a matrix $\Ab$ is a significantly smaller matrix
$\tilde{\Ab}$  which approximates it well in some norm, say for
instance ${\norm{\Ab^T\Ab-\tilde{\Ab}^T\tilde{\Ab}}}$ is small. We can
think of such a sketch as a noisy approximation of the true
matrix.  In order to use such sketches for anomaly detection, we need
to understand how the noise affects the anomaly scores of the rows of the matrix.  Matrix perturbation theory studies the effect of adding noise
to the spectral properties of a matrix, which makes it the natural tool
for us. The basic results here include
Weyl's inequality \citep{horn1994} and Wedin's
theorem~\citep{Wedin1972}, which respectively give such bounds for eigenvalues and
eigenvectors. We use these results to derive perturbation bounds on
more complex projection operators that arise while computing outlier
scores, these operators involve projecting onto the
top-$k$ principal subspace, and rescaling each co-ordinate by some
function of the corresponding singular values. We believe these
results could be of independent interest.

\paragraph{Experimental results.}
Our results have a parameter $\ell$ that controls the size
and the accuracy of the sketch. While our theorems imply that $\ell$
can be chosen independent of $d$, they depend polynomially on $k$, the
desired accuracy and other parameters, and are probably pessimistic. We validate both our
algorithms on real world data. In our experiments, we found that choosing
$\ell$ to be a small multiple of $k$ was sufficient to get good
results. Our results show that one can get outcomes comparable to running full-blown SVD
using sketches which are significantly smaller in memory footprint,
faster to compute and easy to implement (literally a few lines of
Python code).

This contributes to a line of work that aims to make SVD/PCA scale to
massive datasets \citep{halko2011finding}. We give simple and practical
algorithms for anomaly score computation, that give SVD-like guarantees at a
significantly lower cost in terms of memory, computation and
communication.

\paragraph{Outline.} 
We describe the setup and define our anomaly scores in Section \ref{sec:setup}. We state our theoretical results in Section \ref{sec:results}, and the results of our experimental evaluations in Section \ref{sec:experiments}. We review related work in Section \ref{sec:related}. The technical part begins with Section \ref{sec:prelims} where we state and prove our matrix perturbation bounds. We use these bounds to get point-wise
approximations for outlier scores in Section \ref{sec:point}, and our
average-case approximations in Section \ref{sec:avg}. We prove our lower
bound in Section \ref{sec:lower_bound}. Missing proofs are deferred to
the Appendix.



\eat{
For instance,
clearly a data point which is unusual in magnitude should be
considered anomalous. A more subtle notion of anomalies are those that
stem from changes in the correlation  across dimensions. Consider for
instance the case where the CPU usage of a machine rises by
$10\%$. That in itself may not be considered an anomaly, however when
a large fraction of machines in the data center see this behavior
simultaneously an alarm should be issued. Another more subtle scenario
is when, say, a group of servers that are typically correlated
suddenly demonstrate uncorrelated changes in load. Detecting these
types of behaviors requires a more sophisticated way of assigning an
\emph{anomaly score} to a data point. We focus on two commonly used scores.

The Mahalanobis distance~\citep{Mahalanobis36} is well known  and
heavily used way to score this type of anomaly\footnote{See
  \citet{Wilks63,Hawkins74} for classic uses of the Mahalanobis
  distance in statistical outlier detection.}. It relies on the
Singular Value Decomposition of the data matrix, the rows of the
matrix are data points and columns are types of scorements.
}

\eat{
\paragraph{Outline.} We state and prove our matrix perturbation bounds in
Section \ref{sec:prelims}. We use these bounds to get point-wise
approximations for outlier scores in Section \ref{sec:point}, and our
average approximations in Section \ref{sec:avg}. We state our lower
bound in Section \ref{sec:lower_bound}. Missing proofs are deferred to
the Appendix.
}

\vspace{-6pt}

\section{Notation and Setup}\label{sec:setup}
\vspace{-4pt}
Given a matrix $\Ab \in
\R^{n \times d}$, we let $\rowa{i} \in \R^d$ denote its
$i^{th}$ row and $\cola{i} \in \R^n$ denote its $i^{th}$ column. Let $\Ub \Sigmab \Vb^T$ be the SVD of $\Ab$ where
$\Sigmab = \diag(\sigma_1, \ldots, \sigma_d)$, for $\sigma_1 \geq
\cdots \geq \sigma_d > 0$. Let $\kappa_k$ be the condition number  of the top $k$ subspace of $\Ab$, defined as $\kappa_k=\sigma_1^2/\sigma_k^2$. We
consider all vectors as column vectors (including $\rowa{i}$). We denote by $\|\Ab\|_{\text{F}}$ the Frobenius norm of $\Ab$, and by  $\|\Ab\|$ the operator norm or the largest singular value. 
Subspace based measures of anomalies have their origins in a classical metric in statistics known as Mahalanobis distance, denoted by $L(i)$ and defined as,
\vspace{-6pt}
\begin{align}
L(i) = \sum_{j=1}^d {(\rowa{i}^T\colof{v}{j})^2}/{\sigma^2_j},\label{eq:full_lev}
\end{align}
where $\rowa{i}$ and $\colof{v}{i}$ are the $i^{th}$ row of $\Ab$ and
$i^{th}$ column of $\Vb$ respectively. 
$L(i)$ is also known as the  \emph{leverage score} \citep{Drineas12,Mahoney11}. If the data is drawn from a multivariate Gaussian distribution, then $L(i)$ is proportional to the negative log likelihood of the data point, and hence is the right anomaly metric in this case. Note that the higher leverage scores correspond to outliers in the data.

However, $L(i)$  depends on the entire spectrum of singular values and is highly
 sensitive to smaller singular values, whereas real world data sets often have most of their signal in the top singular values. 
Therefore the above sum is often limited to
only the $k$ largest singular values (for some appropriately chosen
$k \ll d$) \citep{Aggarwal2013, Holgersson12}.  This measure is called
the rank $k$ leverage score $L^k(i)$, where
\vspace{-6pt}
\begin{align*}
L^k(i) = \sum_{j=1}^k {(\rowa{i}^T\colof{v}{j})^2}/{\sigma^2_j}. \end{align*}
The rank $k$ leverage score is concerned with the mass which lies
within the principal space, but to catch anomalies that are far from the principal subspace a second measure of anomaly is the
rank $k$ \emph{projection distance} $T^k(i)$, which is simply the distance of
the data point $a_{(i)}$ to the rank $k$ principal subspace---
\vspace{-6pt}
\begin{align*}
 T^k(i) = \sum_{j = k + 1}^d (\rowa{i}^T\colof{v}{j})^2.
 \end{align*}
Section~\ref{sec:ridge} in the Appendix has more discussion about a related anomaly score (the ridge leverage score~\citep{Aggarwal2013}) and how it relates to the above scores.
\paragraph{Assumptions.} We now discuss assumptions needed for our anomaly scores to be meaningful.

\emph{(1) Separation assumption.} If there is degeneracy in the spectrum
of the matrix, namely that  $\sigma_k^2 = \sigma_{k+1}^2$ then the
$k$-dimensional principal subspace is not unique, and then
the quantities $L^k$ and $T^k$ are not well defined, since their value will
depend on the choice of principal subspace. This suggests that we are
using the {\em wrong} value of $k$, since the choice of $k$ ought to be
such that the directions orthogonal to the principal subspace have
markedly less variance than those in the principal subspace.
Hence  we require that $k$ is such that there
is a gap in the spectrum at $k$.
\begin{assumption}
	We define a matrix $\Ab$ as being $(k, \Delta)$-separated if
	$
	\sigma_k^2 - \sigma^2_{k+1} \geq \Delta \sigma_1^2.
	$
	Our results assume that the data are $(k,\Delta)$-separated for
	$\Delta > 0$.
\end{assumption}
This assumptions manifests itself as an inverse polynomial dependence on
$\Delta$ in our bounds. This dependence is probably pessimistic: in
our experiments, we have found our algorithms do well on datasets which are
not degenerate, but where the separation $\Delta$ is not particularly
large.

\emph{(2) Approximate low-rank assumption.}
We assume that the top-$k$ principal subspace captures a constant
fraction (at least $0.1$) of the total variance in the data, formalized as follows.
\begin{assumption}\label{assume:low_rank}
	We assume the matrix $\Ab$ is approximately rank-$k$, i.e.,
	$
	\sum_{i=1}^{k}\sigma_i^2 \ge (1/10)\sum_{i=1}^{d}\sigma_i^2
	$.
\end{assumption}
From a technical standpoint, this assumption is not strictly needed: if Assumption \ref{assume:low_rank} is not true, our
results still hold, but in this case they depend on the stable rank
$\sr(\Ab)$ of $\Ab$, defined as
$\sr(\Ab)=\sum_{i=1}^{d}\sigma_i^2/\sigma_1^2$ (we state these
general forms of our results in Sections \ref{sec:point} and \ref{sec:avg}).

From a practical standpoint though, this assumption captures the setting
where the scores $L^k$ and $T^k$, and our guarantees are most
meaningful. Indeed, our experiments suggest that our algorithms work
best on data sets where relatively few principal
components explain most of the variance.


\paragraph{Setup.}

We work in the row-streaming model, where rows appear one after the other in time. Note that the leverage score of a row depends on the entire matrix, and hence computing the anomaly scores in the streaming model requires care, since if the rows are seen in streaming order, when row $i$ arrives we cannot compute its leverage score without seeing the rest of the input. Indeed, 1-pass algorithms are not possible
(unless they output the entire matrix of scores at the end of the pass, which clearly requires a lot of memory). Hence we will aim for 2-pass algorithms.

Note that there is a simple $2$-pass algorithm which uses $O(d^2)$ memory to compute the covariance matrix in one pass, then computes its SVD, and using this computes $L^k(i)$ and $T^k(i)$ in a second pass using memory $O(dk)$. This requires $O(d^2)$ memory and $O(nd^2)$ time, and our goal would be to reduce this to linear or sublinear in $d$.

Another reasonable way to define leverage scores and projection distances in the streaming model is to define them with respect to only the input seen so far. 
We refer to this as the online scenario, and refer to these scores as the online scores. Our result for sketches which preserve row spaces also hold in this online scenario. We defer more discussion of this online scenario to Section \ref{sec:online}, and focus here only on the scores defined with respect to the entire matrix for simplicity.


\section{Guarantees for anomaly detection via sketching}\label{sec:results}

Our main results say that given $\mu > 0$ and a $(k, \Delta)$-separated matrix  $\Ab
\in \mathbb{R}^{n \times d}$ with top singular value $\sigma_1$, any sketch $\td{\Ab}\in \mathbb{R}^{\ell \times d}$
satisfying
\begin{align}
\norm{\Ab^T\Ab - \td{\Ab}^T\td{\Ab}} &\leq \mu \sigma_1^2, \label{eq:guarantee1}
\end{align}
or a sketch $\td{\Ab}\in \mathbb{R}^{n \times \ell}$ satisfying
\begin{align}
 \norm{\Ab\Ab^T - \td{\Ab}\td{\Ab}^T} &\leq \mu \sigma_1^2, \label{eq:guarantee2}
\end{align}
can be used to approximate rank $k$ leverage scores and
the projection distance from the principal $k$-dimensional
subspace. The quality of the approximation depends on $\mu$, the separation $\Delta$, $k$ and the condition number $\kappa_k$ of the top $k$ subspace.\footnote{The dependence on $\kappa_k$ only appears for showing guarantees for rank-$k$ leverage scores $L^k$ in Theorem \ref{thm:FD}.} 
In order for the sketches to be useful, we also need them to be efficiently computable in a streaming fashion. We show how to use such sketches to design efficient
algorithms for finding anomalies in a streaming fashion using small
space and with fast running time.  The actual guarantees (and the proofs) for the two cases are
different and incomparable. This is to be expected as the
sketch guarantees are very different in the two cases:
Equation \eqref{eq:guarantee1} can be viewed as an approximation to the
covariance matrix of the row vectors, whereas Equation
\eqref{eq:guarantee2} gives an approximation for the covariance matrix
of the column vectors. Since the corresponding sketches can be viewed
as preserving the row/column space of $\Ab$ respectively, we will
refer to them as row/column space approximations.

\paragraph{Pointwise guarantees from row space approximations.}

Sketches which satisfy Equation \eqref{eq:guarantee1}
can be computed in the row streaming model using random
projections of the columns, subsampling the rows of the matrix proportional to their squared lengths \citep{drineas2006fast} or deterministically by using the
Frequent Directions algorithm \citep{Liberty16}. Our streaming algorithm is stated as Algorithm \ref{alg:FD}, and is very simple. In  Algorithm \ref{alg:FD}, any other sketch such as subsampling the rows of the matrix or using a random projection can also be used instead of Frequent Directions.

\begin{algorithm}
	\DontPrintSemicolon
	\SetAlgoLined
	\SetAlgoNoEnd
	\SetKwFunction{FDot}{Dot}
	\SetKwFunction{FUpdate}{Update}

	\newcommand\mycommfont[1]{\rmfamily{#1}}
	\SetCommentSty{mycommfont}
	\SetKwComment{Comment}{$\triangleright$ }{}

	\Input{Choice of $k$, sketch size $\ell$ for Frequent Directions \citep{Liberty16}}

	\FirstPass{}{
		Use Frequent Directions to compute a sketch $\tilde{\Ab}\in \mathbb{R}^{\ell \times d}$ \;
	}
	\SVD{}{
		Compute the top $k$ right singular vectors of $\tilde{\Ab}^T\tilde{\Ab}$
		}
	\SecondPass{As each row $a_{(i)}$ streams in,}{
		Use estimated right singular vectors to compute leverage scores and projection distances \;
	}

	\caption{Algorithm to approximate anomaly scores using Frequent Directions}
	\label{alg:FD}
\end{algorithm}

We state our results here, see
Section \ref{sec:point} for precise statements, proofs and general results for any sketches which satisfy the guarantee in Eq. \eqref{eq:guarantee1}.

\begin{theorem}\label{thm:FD}
  Assume that $\Ab$ is $(k, \Delta)$-separated. There exists
  $\ell=k^2\cdot \poly(\eps^{-1},\kappa_k, \Delta)$, such that the above algorithm  computes estimates $\td{T}^k(i)$ and $\td{L}^k(i)$ where
\begin{align*}
|T^k(i) - \td{T}^k(i)| &\leq \eps\sqnorm{\rowa{i}},\\
|L^k(i) - \td{L}^k(i)| &\leq \eps k \frac{\sqnorm{\rowa{i}}}{\|\Ab\|_F^2}.
\end{align*}
The algorithm uses memory $O(d\ell)$ and has running time $O(nd\ell)$.
\end{theorem}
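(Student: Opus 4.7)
The plan is to leverage the fact that Frequent Directions with sketch size $\ell$ gives $\norm{\Ab^T\Ab - \td{\Ab}^T\td{\Ab}} \leq \norm{\Ab}_F^2/\ell$. Under Assumption~\ref{assume:low_rank}, $\norm{\Ab}_F^2 \leq 10 k \sigma_1^2$, so choosing $\ell = 10k/\mu$ gives the row-space guarantee \eqref{eq:guarantee1} with parameter $\mu$, a parameter to be tuned at the end. The task then reduces to converting this operator-norm guarantee into an additive guarantee on the per-point anomaly scores.

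To make the reduction clean, I would express both scores as quadratic forms in $\rowa{i}$. Writing $P_k = V_k V_k^T$ for the top-$k$ projector and $M_k = V_k \Sigma_k^{-2} V_k^T$ where $\Sigma_k = \diag(\sigma_1,\ldots,\sigma_k)$, we have $T^k(i) = \rowa{i}^T(I - P_k)\rowa{i}$ and $L^k(i) = \rowa{i}^T M_k \rowa{i}$, with analogous formulas $\td{T}^k(i) = \rowa{i}^T(I-\td P_k)\rowa{i}$ and $\td{L}^k(i) = \rowa{i}^T \td M_k \rowa{i}$ using the sketch's right singular vectors and values. This yields the two clean bounds
\[
|T^k(i) - \td{T}^k(i)| \leq \norm{P_k - \td P_k}\cdot\sqnorm{\rowa{i}}, \qquad |L^k(i) - \td L^k(i)| \leq \norm{M_k - \td M_k}\cdot\sqnorm{\rowa{i}},
\]
so the whole proof reduces to perturbation bounds on the two operators $P_k$ and $M_k$.

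For $\norm{P_k - \td P_k}$, I would use Weyl's inequality to transfer the $(k,\Delta)$-separation of $\Ab$ to $\td\Ab$ (taking $\mu \ll \Delta$ to preserve an $\Omega(\Delta\sigma_1^2)$ spectral gap at $k$), then invoke Wedin's sin-theta theorem to conclude $\norm{P_k - \td P_k} \leq O(\mu/\Delta)$. Setting $\mu = \eps \Delta/C$ already yields the projection-distance bound. The main obstacle is $\norm{M_k - \td M_k}$: the $\Sigma_k^{-2}$ factor amplifies singular-value perturbations, and the subspace error interacts multiplicatively with the (potentially large) operator $\td\Sigma_k^{-2}$. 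I would split
\[
M_k - \td M_k = V_k(\Sigma_k^{-2} - \td\Sigma_k^{-2})V_k^T + (V_k - \td V_k)\td\Sigma_k^{-2}V_k^T + \td V_k \td\Sigma_k^{-2}(V_k - \td V_k)^T,
\]
bound the diagonal piece via Weyl and the identity $|\sigma_j^{-2} - \td\sigma_j^{-2}| \leq \mu\sigma_1^2/(\sigma_j^2\td\sigma_j^2) = O(\mu \kappa_k^2/\sigma_1^2)$, and bound the subspace pieces via Wedin combined with $\norm{\td\Sigma_k^{-2}} = O(\kappa_k/\sigma_1^2)$. This gives $\norm{M_k - \td M_k} \leq \poly(\kappa_k,\Delta^{-1})\cdot \mu/\sigma_1^2$.

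Finally, converting to the target form uses $1/\sigma_1^2 \leq 10k/\norm{\Ab}_F^2$ from Assumption~\ref{assume:low_rank}, so the bound becomes $O(k \cdot \poly(\kappa_k,\Delta^{-1})\cdot \mu/\norm{\Ab}_F^2)$; choosing $\mu = \eps/\poly(\kappa_k,\Delta^{-1})$ to meet both targets simultaneously, and recalling $\ell = 10k/\mu$, gives the promised $\ell = k^2\cdot \poly(\eps^{-1},\kappa_k,\Delta)$ (the extra factor of $k$ coming from the conversion $\norm{\Ab}_F^2 \leq 10k\sigma_1^2$). The running time $O(nd\ell)$ and memory $O(d\ell)$ follow immediately from the cost of each Frequent Directions update and the second pass, in which each row is read once and multiplied against the $k$ estimated singular vectors. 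The principal technical challenge throughout is the $M_k$ perturbation: the inverse-square rescaling is the reason the separation $\Delta$ and condition number $\kappa_k$ must enter the final bound, and handling this cleanly is the ``main technical contribution'' advertised in the introduction.
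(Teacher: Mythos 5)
Your reduction to the two operator-norm perturbation bounds $\norm{P_k - \td P_k}$ and $\norm{M_k - \td M_k}$ is exactly the right strategy, and your treatment of $T^k$ via Wedin is essentially the paper's argument (Lemma~\ref{lem:wed} plus Theorem~\ref{thm:t-online}). The gap is in how you propose to control $M_k - \td M_k$. Your decomposition
\[
M_k - \td M_k = V_k(\Sigma_k^{-2} - \td\Sigma_k^{-2})V_k^T + (V_k - \td V_k)\td\Sigma_k^{-2}V_k^T + \td V_k \td\Sigma_k^{-2}(V_k - \td V_k)^T
\]
requires a bound on $\norm{V_k - \td V_k}$, and you claim this follows from Wedin. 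It does not. Wedin's theorem (Lemma~\ref{thm:wedin}) controls the projector difference $\norm{V_kV_k^T - \td V_k\td V_k^T}$ --- the principal angles between the two $k$-dimensional subspaces --- but it says nothing about the individual basis columns. The hypothesis is only $(k,\Delta)$-separation, i.e.\ a gap between $\sigma_k$ and $\sigma_{k+1}$; there is no assumption of separation \emph{within} the top $k$. So if, say, $\sigma_j \approx \sigma_{j+1}$ for some $j < k$, the individual singular vectors $v^{(j)},v^{(j+1)}$ are nearly indeterminate and can rotate arbitrarily under the sketch perturbation, making $\norm{V_k - \td V_k}$ order $1$ even while the subspace projectors are close. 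Your bound on terms 2 and 3 therefore does not go through.

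This is precisely the obstacle that the paper's Section~\ref{sec:prelims} is built to circumvent. Theorem~\ref{thm:sigma2}, which bounds $\norm{V_k\Sigma_k^{-2}V_k^T - \td V_k\Sigma_k^{-2}\td V_k^T}$ with the \emph{same} $\Sigma_k^{-2}$ on both sides, never touches $\norm{V_k - \td V_k}$. Instead it groups indices into buckets $B_1,\ldots,B_m$ so that singular values within a bucket differ by at most $\delta\sigma_1^2$ and across buckets by at least $\delta\sigma_1^2$, approximates $\Sigma_k^{-2}$ by a bucket-constant matrix $\Lambdab$, and writes $\Ub_k\Lambdab\Ub_k^T$ as a telescoping sum $\sum_j(\lambda_{b_j}-\lambda_{b_{j+1}})\Ub_{b_j}\Ub_{b_j}^T$ of \emph{nested projectors}, each of which is controllable by Wedin (Lemma~\ref{lem:non_dec1}). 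The rotational freedom inside a near-degenerate bucket is harmless because $\Lambdab$ is constant on that bucket, and the residual $\Omegab = \Sigma_k^{-2} - \Lambdab$ has small norm. Optimizing $\delta$ then gives the $(\mu k\kappa_k)^{1/3}$ rate. To repair your proof you would need this bucketing idea (or an equivalent device, e.g.\ a contour-integral/resolvent representation of $f(\Sigma_k)$), not a direct application of Wedin to individual columns. The paper's actual split in Theorem~\ref{thm:l-online} is the cleaner two-term one, $\norm{V_k\Sigma_k^{-2}V_k^T - \td V_k\Sigma_k^{-2}\td V_k^T} + \norm{\td V_k(\Sigma_k^{-2}-\td\Sigma_k^{-2})\td V_k^T}$, with Theorem~\ref{thm:sigma2} handling the first and Weyl the second.
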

The key is that while $\ell$ depends on $k$ and other
parameters, it is independent of $d$. In the setting where all these
parameters are constants independent of $d$, our memory requirement is
$O(d)$, improving on the trivial $O(d^2)$ bound.

Our approximations are additive rather than multiplicative.
But for anomaly detection, the candidate anomalies are ones
where $L^k(i)$ or $T^k(i)$ is large, and in this regime, we argue
below that our additive bounds also translate to good multiplicative
approximations. The additive error in computing $L^k(i)$ is about $\eps
k/n$ when all the rows have roughly equal norm. 
Note that the average rank-$k$ leverage score of all the rows of any
matrix with $n$ rows is $k/n$, hence a reasonable threshold on
$L^k(i)$ to regard a point as an anomaly is when $L^k(i) \gg k/n$, so
the guarantee for $L^k(i)$ in Theorem~\ref{thm:FD} preserves anomaly
scores up to a small multiplicative error for candidate anomalies, and
ensures that points which were not anomalies before are not mistakenly
classified as anomalies. For $T^k(i)$, the additive error for row $\rowa{i}$ is  $\eps
\sqnorm{\rowa{i}}$. Again, for points that are anomalies,
$T^k(i)$ is a constant fraction of $\sqnorm{\rowa{i}}$, so this
guarantee is meaningful.

Next we show  that substantial
savings are unlikely for any algorithm with strong pointwise
guarantees: there is an $\Omega(d)$ lower bound for any approximation
that lets you distinguish $L^k(i) =1$ from $L^k(i) = \eps$ for any
constant $\eps$. The precise statement and result appears in Section \ref{sec:lower_bound}. 
\begin{theorem}\label{thm:lower}
	Any streaming algorithm which takes a constant number of passes over the data and can compute a $0.1$ error additive approximation to  the
	rank-$k$ leverage scores or the rank-$k$ projection distances for all the rows of a matrix {must} use $\Omega(d)$ working space.
\end{theorem}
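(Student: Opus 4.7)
The plan is to prove this lower bound by reducing from Set Disjointness in two-party randomized communication complexity. Under the standard hard promise ($|S|=|T|=d/4$ and $|S \cap T| \in \{0,1\}$), deciding whether $S \cap T = \emptyset$ requires $\Omega(d)$ bits of communication even with an unbounded number of rounds. Since a $p$-pass streaming algorithm with space $s$ induces a $p$-round protocol of total cost $O(ps)$, this will give $s = \Omega(d/p) = \Omega(d)$ for any constant $p$.

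The reduction encodes $(S,T)$ into a matrix $\Ab$ in ambient dimension $d+2$, and I would set $k=2$. The stream begins with two fixed ``anchor'' rows $\sqrt{M_1}\,e_{d+1}$ and $\sqrt{M_2}\,e_{d+2}$ with weights $M_1 = 4$ and $M_2 = 3/2$. Alice then streams the rows $e_j$ for $j \in S$, and Bob streams $e_j$ for $j \in T$. Because every row is axis-aligned, $\Ab^T\Ab$ is diagonal in the standard basis and its squared singular values are exactly the column norms squared.

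The key step is the case analysis for the top-$2$ subspace. In the disjoint case every non-anchor column has squared norm at most $1 < M_2$, so the top-$2$ subspace is $\mathrm{span}(e_{d+1}, e_{d+2})$, and the anchor row $\sqrt{M_2}\,e_{d+2}$ lies inside it, giving $T^2(\sqrt{M_2}\,e_{d+2}) = 0$ and $L^2(\sqrt{M_2}\,e_{d+2}) = 1$. In the intersecting case the column $e_{j^\ast}$ (where $\{j^\ast\} = S \cap T$) has squared norm $2 > M_2$, so the top-$2$ subspace becomes $\mathrm{span}(e_{d+1}, e_{j^\ast})$ and the same anchor row is orthogonal to it, yielding $T^2 = 3/2$ and $L^2 = 0$. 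A $0.1$-additive approximation to either anomaly score therefore recovers the DISJ answer. A direct calculation shows the matrix is $(2, 1/8)$-separated in both cases, so it satisfies the separation assumption of the theorem statement.

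The main technical obstacle is choosing $M_2$ to simultaneously (i) keep both matrices $(k,\Delta)$-separated with $\Delta$ bounded below by a constant, and (ii) cause the top-$2$ subspace to swap exactly one coordinate between the two cases. Placing $M_2$ strictly between $1$ and $2$ achieves both at once --- it dominates the baseline column norms in the disjoint case (breaking the degeneracy that would otherwise violate separation) while being overtaken by the doubled column in the intersecting case. Once this gap is in place, the usual streaming-to-communication conversion closes the argument; essentially the same construction (with INDEX in place of Set Disjointness) also suffices for a cleaner single-pass lower bound.
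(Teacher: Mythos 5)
Your reduction uses the same high-level strategy as the paper (reduce from two-party set disjointness via indicator-vector rows together with ``anchor'' rows that pin down the principal subspace, then convert the $p$-pass space bound to a communication bound), but the matrix construction is genuinely different, and in one respect cleaner. The paper's construction (Theorem~\ref{thm:lower_bound}, parts 3--4) adds $k$ anchor rows all scaled by the \emph{same} factor $1.1$ and probes the scores of the \emph{set} rows: in the disjoint case every set row has $L^k = 0$, while in the intersecting case the repeated row has $L^k = 1/2$. A side effect of using equal-weight anchors is that in the intersecting case the spectrum becomes degenerate at index $k$ (for $k\geq 3$ one gets $\sigma_k^2 = \sigma_{k+1}^2 = 1.21$), so the top-$k$ subspace is not unique and one has to observe separately that the \emph{particular} score being probed is still basis-independent. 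Your construction, with two anchor rows of distinct weights $M_1 = 4$ and $M_2 = 3/2$ straddling the critical value $2$, keeps both matrices strictly $(2,\, 1/8)$-separated, so all scores are unambiguously defined, and it probes a fixed anchor row whose score flips between the two extremes ($T^2 \in \{0,\,3/2\}$ and $L^2 \in \{1,\,0\}$) --- a cleaner invariant than ``does some row have a large score.'' The one thing your proposal gives up is generality in $k$: you prove the bound only for $k=2$, whereas the paper's statement covers every $2 \leq k \leq d/2$. This is easy to repair within your scheme (use $k-1$ anchors of weight $4$ and one anchor of weight $3/2$; the spectrum then stays separated at $k$ in both cases), but as written the argument establishes the theorem for a single value of $k$.
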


\paragraph{Average-case guarantees from columns space approximations.}

We derive smaller space algorithms, albeit with weaker guarantees
using sketches that give columns space approximations that satisfy
Equation \eqref{eq:guarantee2}. Even though the sketch gives column space
approximations our goal is still to compute the row anomaly scores, so
it not just a matter of working with the transpose. Many sketches are
known which approximate $\Ab\Ab^T$ and satisfy
Equation \eqref{eq:guarantee2}, for instance, a low-dimensional
projection by a random matrix $\Rb\in \mathbb{R}^{d   \times \ell}$
(e.g., each entry of $\Rb$ could be a scaled i.i.d. uniform $\{\pm
1\}$ random variable) satisfies Equation \eqref{eq:guarantee2} for
$\ell=O(k/\mu^2)$ \citep{koltchinskii2017}.

On first glance it is unclear how such a sketch should be useful: the matrix
$\td{\Ab}\td{\Ab^T}$ is an $n \times n$ matrix, and since $n \gg d$ this matrix is too expensive to
store. Our streaming algorithm avoids this problem by only
computing $\td{\Ab}^T\td{\Ab}$, which is an $\ell \times \ell$  matrix, and the larger matrix
$\td{\Ab}\td{\Ab}^T$ is only used for the analysis. Instantiated with
the sketch above, the resulting algorithm is simple to describe
(although the analysis is subtle): we pick a random matrix in $\Rb\in
\mathbb{R}^{d \times \ell}$ as above and return the anomaly scores for
the sketch $\td{\Ab} = \Ab\Rb$ instead. Doing this in a streaming
fashion using even the naive algorithm requires computing the small
covariance matrix $\td{\Ab}^T\td{\Ab}$, which is only $O(\ell^2)$ space.

But notice that we have not accounted for the space needed to store the $(d\times \ell)$
matrix $\Rb$. This is a subtle (but mainly theoretical) concern, which
can be addressed by using powerful results from the theory of
pseudorandomness \cite{vadhan2012pseudorandomness}. Constructions of
pseudorandom Johnson-Lindenstrauss matrices
\citep{cohen2015optimal, cohen2015dimensionality} imply that the
matrix $\Rb$ can be pseudorandom, meaning that it has a succinct
description using only $O(\log(d))$ bits, from which each entry
can be efficiently computed on the fly.

\begin{algorithm}
	\DontPrintSemicolon
	\SetAlgoLined
	\SetAlgoNoEnd
	\SetKwFunction{FDot}{Dot}
	\SetKwFunction{FUpdate}{Update}

	\newcommand\mycommfont[1]{\rmfamily{#1}}
	\SetCommentSty{mycommfont}
	\SetKwComment{Comment}{$\triangleright$ }{}

	\Input{Choice of $k$, random projection matrix $\Rb \in \mathbb{R}^{d\times \ell}$}
	\Init{}{Set covariance $\tilde{\Ab}^T\tilde{\Ab}\leftarrow 0$}
	\FirstPass{As each row $a_{(i)}$ streams in,}{
		Project by $\Rb$ to get $\Rb^T a_{(i)}$  \;
		Update covariance $\tilde{\Ab}^T\tilde{\Ab}\leftarrow \tilde{\Ab}^T\tilde{\Ab} + (\Rb^T a_{(i)})(\Rb^T a_{(i)})^T $
	}
	\SVD{}{
		Compute the top $k$ right singular vectors of $\tilde{\Ab}^T\tilde{\Ab}$
	}
	\SecondPass{As each row $a_{(i)}$ streams in,}{
		Project by $\Rb$ to get $\Rb^T a_{(i)}$  \;
		For each projected row, use the estimated right singular vectors to compute the leverage scores and projection distances \;
	}

	\caption{Algorithm to approximate anomaly scores using random projection}
	\label{alg:random}
\end{algorithm}


\begin{theorem}\label{thm:random_proj}
	For $\eps$ sufficiently small, there exists $\ell =
        k^3 \cdot \poly(\eps^{-1}, \Delta)$ such that
        the algorithm above produces estimates $\td{L}^k(i)$ and
        $\td{T}^k(i)$ in the second pass, such that with high probabilty,
	\begin{align*}
	\sum_{i=1}^{n}|T^k(i) - \td{T}^k(i)| &\leq \eps
        {\normfsq{\Ab}},\\
	\sum_{i=1}^{n}|L^k(i) - \td{L}^k(i)| &\leq \eps \sum_{i=1}^{n}L^k(i).
	\end{align*}
	The algorithm uses space $O(\ell^2+\log(d)\log(k))$ and has running time $O(nd\ell)$.
\end{theorem}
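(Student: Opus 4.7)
The plan is to express the true and estimated scores as diagonal entries of structured symmetric operators, bound the resulting sums of absolute diagonal errors by trace (nuclear) norms, then invoke matrix-perturbation bounds that follow from the column-space guarantee $\|\Ab\Ab^T - \td{\Ab}\td{\Ab}^T\| \leq \mu\sigma_1^2$ (which a random projection with $\ell = O(k/\mu^2)$ columns satisfies by \citep{koltchinskii2017}). Writing $\Ab = \Ub\Sigmab\Vb^T$, the identity $a_{(i)}^T v_j = \sigma_j u_{ij}$ yields
\[
L^k(i) = (\Ub_k\Ub_k^T)_{ii}, \qquad T^k(i) = (\Ab\Ab^T - \Ab_k\Ab_k^T)_{ii},
\]
and analogously $\td{L}^k(i) = (\td{\Ub}_k\td{\Ub}_k^T)_{ii}$ and $\td{T}^k(i) = (\td{\Ab}\td{\Ab}^T - \td{\Ab}_k\td{\Ab}_k^T)_{ii}$ applied to the sketch $\td{\Ab} = \Ab\Rb$. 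Since for any symmetric $\Mb$ one has $\sum_i |\Mb_{ii}| \leq \|\Mb\|_* \leq \rk(\Mb)\cdot \|\Mb\|$, the sums reduce, at least on the low-rank projection pieces, to operator-norm bounds on rank-$O(k)$ matrices.

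For the $L^k$ bound, Wedin's theorem applied with spectral gap $\Delta\sigma_1^2$ gives $\|\Ub_k\Ub_k^T - \td{\Ub}_k\td{\Ub}_k^T\| = O(\mu/\Delta)$, so the rank-$2k$ reduction yields $\sum_i |L^k(i) - \td{L}^k(i)| \leq O(k\mu/\Delta)$; this matches the target $\eps\sum_i L^k(i) = \eps k$ whenever $\mu = O(\eps\Delta)$. For $T^k$, I split
\[
\sum_i |T^k(i) - \td{T}^k(i)| \leq \sum_i \bigl|\|a_{(i)}\|_2^2 - \|\Rb^T a_{(i)}\|_2^2\bigr| + \sum_i \bigl|(\Ab_k\Ab_k^T - \td{\Ab}_k\td{\Ab}_k^T)_{ii}\bigr|.
\]
The first piece is a row-wise Johnson--Lindenstrauss preservation bound: in expectation it is $\lesssim \|\Ab\|_F^2/\sqrt{\ell}$, promoted to a high-probability statement via Hanson--Wright once $\ell = \Omega(\poly(\eps^{-1}))$. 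For the second piece I use $\Ab_k\Ab_k^T = P_k(\Ab\Ab^T)P_k$ with $P_k = \Ub_k\Ub_k^T$, and similarly $\td{\Ab}_k\td{\Ab}_k^T = \td{P}_k(\td{\Ab}\td{\Ab}^T)\td{P}_k$; a three-term triangle expansion combining $\|P_k - \td{P}_k\| = O(\mu/\Delta)$, $\|\Ab\Ab^T\| = \sigma_1^2$, and $\|\Ab\Ab^T - \td{\Ab}\td{\Ab}^T\| \leq \mu\sigma_1^2$ produces $\|\Ab_k\Ab_k^T - \td{\Ab}_k\td{\Ab}_k^T\| = O(\mu\sigma_1^2/\Delta)$. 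Multiplying by the rank bound $2k$ and using $\sigma_1^2 \leq \|\Ab\|_F^2$ bounds this contribution by $O(k\mu/\Delta)\|\Ab\|_F^2$. Choosing $\mu = O(\eps\Delta/k)$ balances both targets and yields $\ell = O(k/\mu^2) = k^3 \cdot \poly(\eps^{-1},\Delta^{-1})$.

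The main obstacle is the $T^k$ analysis: unlike $L^k$, whose error is captured by a pure subspace perturbation, $\Ab\Ab^T - \td{\Ab}\td{\Ab}^T$ is generally high-rank, so the trace-norm-via-rank argument fails on the tail and I must separately argue that $\Rb$ preserves each row's squared length in aggregate. The bound on $\|\Ab_k\Ab_k^T - \td{\Ab}_k\td{\Ab}_k^T\|$ is precisely the ``rescaled projection'' perturbation inequality that is the technical heart of Section~\ref{sec:prelims}; it requires both Weyl (for eigenvalue closeness) and Wedin (for eigenspace closeness, hence the separation assumption), and the ``$\eps$ sufficiently small'' hypothesis just ensures $\mu \ll \Delta$ so Wedin operates in the well-posed regime. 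For the space bound, the algorithm materializes only the $\ell\times\ell$ covariance $\td{\Ab}^T\td{\Ab}$, costing $O(\ell^2)$ storage; the $d\times\ell$ matrix $\Rb$ is never written down but drawn pseudorandomly from an $O(\log(d)\log(k))$-bit seed via the Johnson--Lindenstrauss generators of \citep{cohen2015optimal, cohen2015dimensionality}, so every entry of $\Rb^T a_{(i)}$ is produced on the fly during each pass, giving total runtime $O(nd\ell)$.
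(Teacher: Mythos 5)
Your approach is correct in outline and arrives at the right result, but it departs from the paper's proof in two substantive ways that are worth recording, plus one small slip.

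First, you use a different estimator for $\td{T}^k$: you take $\td{T}^k(i) = (\td{\Ab}\td{\Ab}^T - \td{\Ab}_k\td{\Ab}_k^T)_{ii} = \|\Rb^T a_{(i)}\|_2^2 - \|\td{\Vb}_k^T \Rb^T a_{(i)}\|_2^2$, which forces the extra Johnson--Lindenstrauss/Hanson--Wright argument to control $\sum_i|\|a_{(i)}\|_2^2 - \|\Rb^T a_{(i)}\|_2^2|$. The paper's estimator (Eq.~\eqref{eq:T-batch2}) is $\|a_{(i)}\|_2^2 - \|\td{\Vb}_k^T \Rb^T a_{(i)}\|_2^2$, using the \emph{unprojected} row norm, which the second pass has access to anyway; this makes the first term cancel identically in $T^k - \td{T}^k$ and removes the concentration argument entirely. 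So your extra step is not wrong, but it is avoidable.

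Second, for the remaining rank-$2k$ piece you bound $\|\Ab_k\Ab_k^T - \td{\Ab}_k\td{\Ab}_k^T\|$ by the three-term expansion through $P_k(\Ab\Ab^T)P_k$, which gives $O(\sigma_1^2\sqrt{\mu/\Delta} + \mu\sigma_1^2)$. This is a genuinely different and, in this parameter regime, \emph{tighter} route than the paper's: Lemma~\ref{lem:batch-proj} instead invokes Theorem~\ref{thm:sigma1} (bound $\|\Ub_k\Sigmab_k^2\Ub_k^T - \td{\Ub}_k\Sigmab_k^2\td{\Ub}_k^T\| \lesssim \sigma_1^2(\mu k)^{1/3}$, proved via the bucket-decomposition of $\Sigmab_k$) plus a separate Weyl step for $\|\td{\Ub}_k(\Sigmab_k^2-\td{\Sigmab}_k^2)\td{\Ub}_k^T\|$. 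Your decomposition works here precisely because $\Ab_k\Ab_k^T = P_k(\Ab\Ab^T)P_k$ and $\|\Ab\Ab^T\| = \sigma_1^2$ is controlled; the analogous shortcut fails for the inverse-scaled operator $\Vb_k\Sigmab_k^{-2}\Vb_k^T$ needed in Theorem~\ref{thm:l-online}, which is presumably why the paper develops the more general machinery. You should note this when presenting the argument, since otherwise a reader may wonder why the heavier perturbation theorems exist at all.

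One numerical slip: Wedin's theorem (Lemma~\ref{lem:wed}) gives $\|\Ub_k\Ub_k^T - \td{\Ub}_k\td{\Ub}_k^T\| \le 2\sqrt{\mu/\Delta}$, not $O(\mu/\Delta)$. The square root propagates: the $L^k$ bound becomes $O(k\sqrt{\mu/\Delta})$ (matching the paper's Lemma~\ref{lem:batch-mhnbs} with $\mu = \eps^2\Delta/16$), and your chosen $\mu$ must scale as $\eps^2\Delta$, not $\eps\Delta$. Carrying this through your $T^k$ bound changes the dependence of $\ell$ on $\eps$ and $\Delta$ but not the $k$-power, so the $k^3\cdot\poly(\eps^{-1},\Delta)$ claim survives; still, the exponents in the final $\poly$ as you wrote them are off by a factor of two.

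Everything else — the use of Lemma~\ref{lem:tr_abs} to convert operator-norm bounds on rank-$2k$ symmetric matrices to $\ell_1$-bounds on their diagonals, the $\ell = O(\sr(\Ab)/\mu^2) = O(k/\mu^2)$ choice from Theorem~\ref{thm:rp}, and the pseudorandom-seed space accounting — matches the paper's argument.
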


This gives an average case guarantee. 
We note that Theorem \ref{thm:random_proj} shows a new property of random
projections---that on average they can preserve leverage scores and
distances from the principal subspace, with the projection dimension
$\ell$ being only $\poly(k,\eps^{-1},\Delta)$, independent of both $n$
and $d$.


We can obtain similar guarantees as in Theorem \ref{thm:random_proj}
for other sketches which preserve the column space, such as sampling
the columns proportional to their squared lengths
\cite{drineas2006fast, magen2011low}, at the price of one extra pass.
Again the resulting algorithm is very simple: it  maintains a
carefully chosen $\ell \times \ell$ submatrix of the full $d \times d$
covariance matrix $\Ab^T\Ab$ where $\ell = O(k^3)$. 
We state the full algorithm in Section \ref{sec:subsample}.

\eat{

\subsection{Technical Contributions}

Our main technical contribution is to prove perturbation bounds for
various matrix operators that arise in the computation of various
anomaly scores. Perturbation bounds describe the effect of the addition
of a {\em small} noise matrix $\Nb$ to a base matrix $\Ab$ on various quantities and operators
associated with the matrix. The basic results here include
Weyl's inequality (c.f. \citet{horn1994} Theorem 3.3.16) and Wedin's
theorem~\citep{Wedin1972}, which respectively give such bounds for eigenvalues and
eigenvectors. We use these results to derive perturbation bounds on
projection operators that are used to compute anomaly
scores, these typically involve projecting onto the
top-$k$ principal subspace, and rescaling each co-ordinate by some
function of the corresponding singular values.

We derive our results by combining these perturbation bounds with
powerful results on matrix sketching, such as the Covariance
Estimation results of \citep{koltchinskii2017} and the Frequent
Directions sketch \citep{liberty2013simple,Liberty16}.  The sketches
can be viewed as computing a noisy version of the true covariance
matrices, hence our perturbation results can be applied.

With regards to our guarantees for random projections, a conceptual contribution of our work is to show that powerful results on covariance estimation from the statistics community
\citep{koltchinskii2017} can be leveraged to show improved guarantees for sketching matrices via random projections. Usual bounds for random projections rely on
properties such as the Johnson-Lindenstrauss property and subspace
embeddings, which require the projection dimension to depend on
the dimensions $d$ and $n$ of the input matrix or on its rank, whereas we show that using results on covariance estimation
\citep{koltchinskii2017} these guarantees can be satisfied with the projection dimension only depending on the rank $k$ of the  principal subspace where most of the variance of the data lies---which is significantly better as $k$ is often
small for real data. We believe this connection could have applications outside anomaly detection as well.

}


\section{Experimental Evaluation}\label{sec:experiments}
\vspace{-4pt}
The aim of our experiments is to test whether our algorithms give
comparable results to exact anomaly score computation based on full
SVD. So in our experiments, we take the results of SVD as the ground
truth and see how close our algorithms get to it. In particular, the
goal is to determine how large the parameter $\ell$ that determines
the size of the sketch needs to be to get close to the exact
scores. Our results suggest that for high dimensional
data sets, it is possible to get good approximations to the exact
anomaly scores even for fairly small values of $\ell$ (a small
multiple of $k$), hence our worst-case theoretical bounds (which
involve polynomials in $k$ and other parameters) are on the
pessimistic side.
%
\vspace{-18pt}\paragraph{Datasets: }
We ran experiments on three publicly available
datasets: p53 mutants \cite{p53}, Dorothea \cite{dorothea} and RCV1 \cite{rcv1}, all of which are available
from the UCI Machine Learning Repository, and are high dimensional ($d
> 5000$). The original RCV1 dataset contains 804414 rows, we took
every tenth element from it. The sizes of the datasets are listed in Table \ref{table:timing}.

\eat{

We ran experiments on the publicly available
datasets listed in Table \ref{tabl:data}, all of which are available
from the UCI Machine Learning Repository, and are high dimensional ($d
> 5000$). The original RCV1 dataset contains 804414 rows, we took
every tenth element from it.
	}
\eat{
\begin{table*}
	\caption{Datasets considered for evaluating our algorithms. }
	\label{tabl:data}
	\centering
  \begin{tabular}{ c  c  c  c }
    \toprule
    Dataset & Number of data points ($n$) & Dimension ($d$) & Results\\
    \midrule
    p53 Mutants \cite{p53}  & 16772 & 5409  &  Figure \ref{fig:p53}\\
    Dorothea \cite{dorothea}  & 1950 & 100000 & Figure \ref{fig:dorothea} \\
    RCV1 \cite{rcv1}    & 80442 & 47236   & Figure \ref{fig:rcv1}\\
    \bottomrule
  \end{tabular}
\end{table*}
}

\vspace{-8pt}
\paragraph{Ground Truth: }To establish the ground truth, there are two
parameters:  the dimension $k$ (typically between $10$ and $125$) and
a threshold $\eta$ (typically between $0.01$ and $0.1$). We compute
the anomaly scores for this $k$ using a full SVD, and then label the
$\eta$ fraction of points with the highest anomaly scores to be
outliers. $k$ is chosen by examining the explained variance of the
datatset as a function of $k$, and $\eta$ by examining the histogram
of the anomaly score.
\vspace{-8pt}
\paragraph{Our Algorithms: }We run Algorithm \ref{alg:FD} using random
column projections in place of Frequent Directions.\footnote{Since the
    existing implementation of Frequent Directions \cite{liberty-gh} does not seem to handle sparse matrices.}
The relevant parameter here is the projection dimension
$\ell$, which results in a sketch matrix of size $d \times \ell$.
We run Algorithm \ref{alg:random} with random row projections. If the
projection dimension is $\ell$, the resulting sketch size is
$O(\ell^2)$ for the covariance matrix. For a given $\ell$, the
time complexity of both algorithms is similar, however the size of the sketches
are very different: $O(d\ell)$ versus $O(\ell^2)$.
\vspace{-10pt}
\paragraph{Measuring accuracy: }
We ran experiments with a range of $\ell$s in the interval $(2k, 20k)$
for each dataset (hence the curves have different start/end
points). The algorithm is given just the
points (without labels or $\eta$) and computes anomaly scores for them. 
We then declare the points with the top $\eta'$ fraction of scores to be anomalies, and then compute the $F_1$
score (defined as the harmonic mean of the precision and the
recall). We choose $\eta'$ to maximize the $F_1$
score. This measures how well the proposed algorithms can approximate the exact anomaly scores. Note that in order to get both good precision and recall, $\eta'$ cannot be too far from $\eta$. We report the average $F_1$ score over $5$ runs.

For each dataset, we run both algorithms, approximate both the leverage and projection
scores, and try three different values of $k$. For each of these
settings, we run over roughly $10$ values for $\ell$.
The results are plotted in Figs. \ref{fig:p53}, \ref{fig:dorothea} and \ref{fig:rcv1}. Here are some takeaways from our experiments:
\vspace{-6pt}
\begin{itemize}
  \item Taking $\ell = C k$ with a fairly small $C \approx 10$ suffices to get F1 scores $>0.75$ in most settings.
  \item Algorithm \ref{alg:FD} generally outperforms Algorithm
    \ref{alg:random} for a given value of $\ell$. This should not be
    too surprising given that it uses much more memory, and is known
    to give pointwise rather than average case guarantees. However,
    Algorithm \ref{alg:random} does surprisingly well for an algorithm
    whose memory footprint is essentially independent of the input
    dimension $d$.
  \item The separation assumption (Assumption (1)) does hold to the
    extent that the spectrum is not degenerate, but not with a large
    gap. The algorithms seem fairly robust to this.
   \item The approximate low-rank assumption (Assumption (2)) seems to be
     important in practice. Our best results are for the p53 data set,
     where the top 10 components explain $87\%$ of the total
     variance. The worst results are for the RCV1 data
     set, where the top 100 and 200 components explain only
     $15\%$ and $25\%$ of the total variance respectively.
\end{itemize}
\vspace{-12pt}
\paragraph{Performance. }
While the main focus of this work is on the streaming model and memory
consumption, our algorithms offer considerable speedups even in the
offline/batch setting. Our timing experiments  were run using
Python/Jupyter notebook on a linux VM with 8 cores and 32 Gb of RAM,
the times reported are total CPU times  in seconds 
and are reported in
Table \ref{table:timing}. We focus on computing projection distances
using SVD (the baseline), Random Column Projection
(Algorithm \ref{alg:FD}) and Random Row Projection
(Algorithm \ref{alg:random}). All SVD computations use the
{\fontfamily{cmtt}\selectfont randomized$\_$svd} function from
{\fontfamily{cmtt}\selectfont scikit.learn}. The baseline computes
only the top $k$ singular values and vectors (not the entire SVD).
The results show consistent speedups between $2\times$ and $6\times$.  Which
algorithm is faster depends on which dimension of the input matrix is
larger.

\begin{table*}
	\centering
	\caption{Running times for computing rank-$k$ projection distance. Speedups between $2\times$ and $6\times$.}
	\label{table:timing}
	\begin{tabular}{ ccccccc }
		\toprule
		\multirow{2}{*}{Dataset}&
		\multirow{2}{*}{Size ($n \times d$)}&
		\multirow{2}{*}{$k$} &
		\multirow{2}{*}{$\ell$} &
		\multirow{2}{*}{SVD} & { Column } & { Row }\\
		& & & & & { Projection } & { Projection }\\
		\midrule
		p53 mutants & $16772 \times 5409$ & 20 & 200 & 29.2s & 6.88s & 7.5s \\
		Dorothea & $1950 \times  100000$ & 20 & 200 & 17.7s & 9.91s & 2.58s\\
		RCV1  & $80442 \times 47236$ & 50 & 500  & 39.6s & 17.5s & 20.8s\\
		\bottomrule
	\end{tabular}
\end{table*}

\begin{figure}
	\centering
	\includegraphics[width=0.45\textwidth]{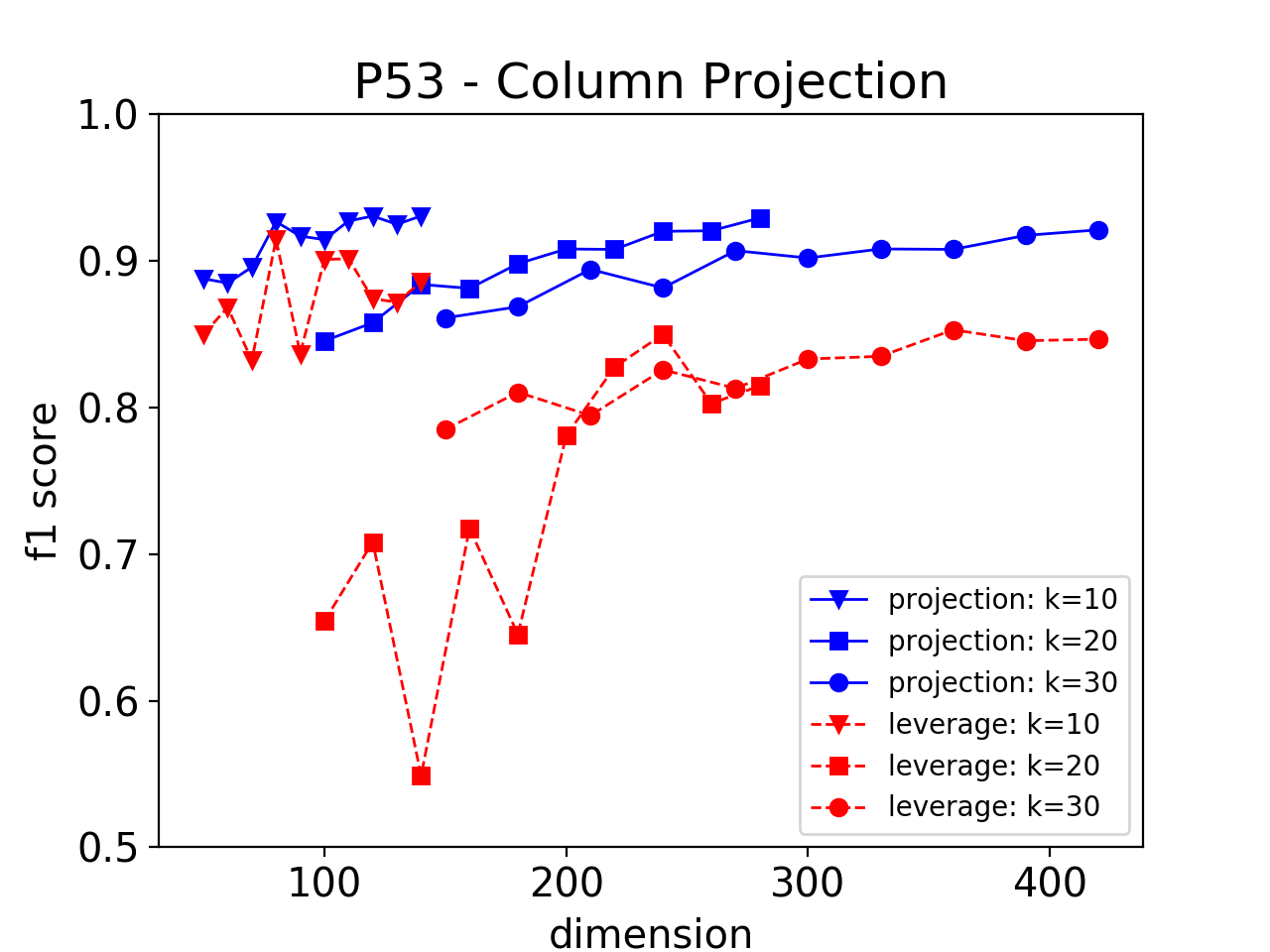}
	\includegraphics[width=0.45\textwidth]{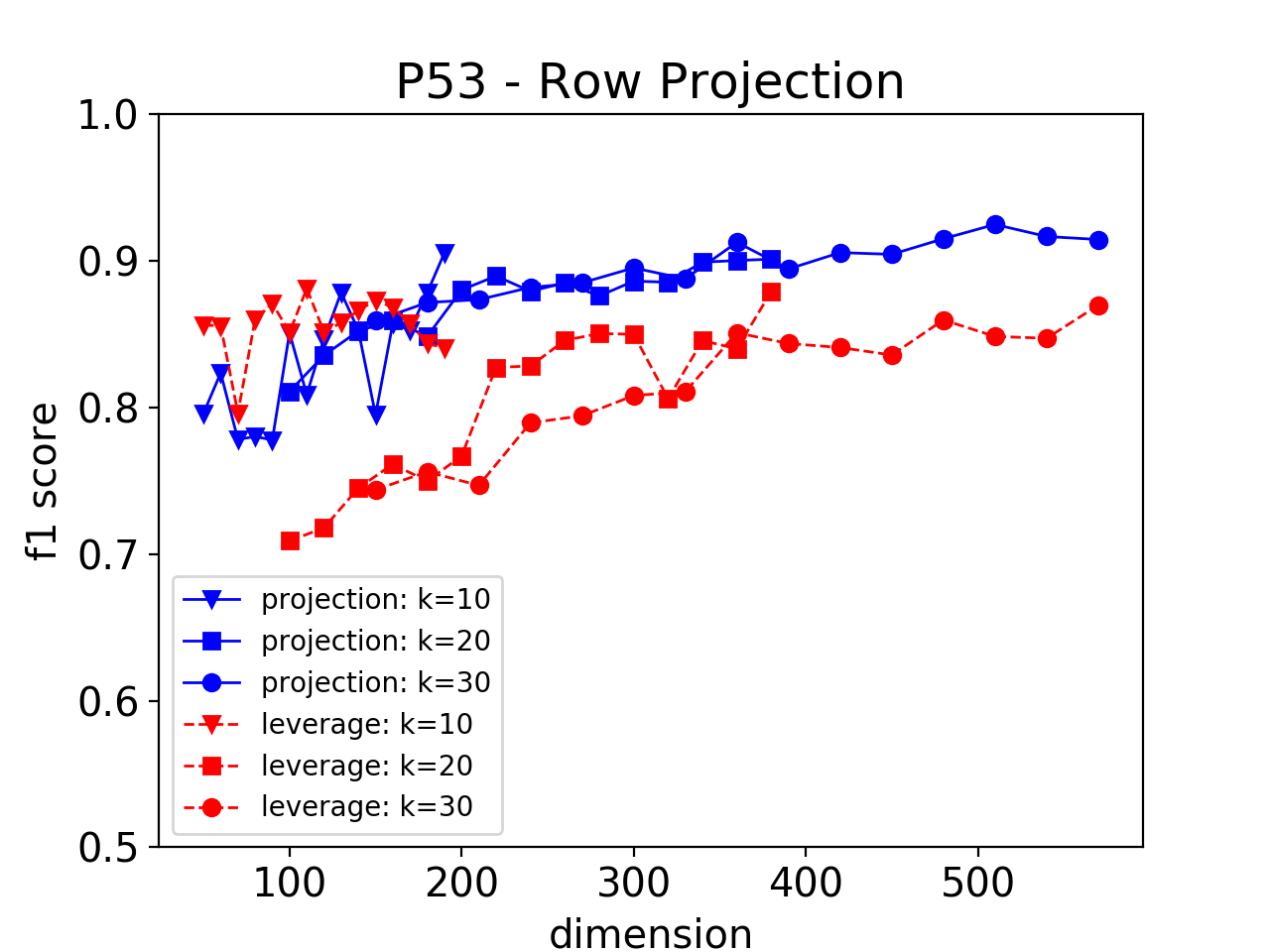}
	\caption{Results for P53 Mutants.
		We get $F_1$ score $>0.8$ with $>10\times$ space savings.}
	\label{fig:p53}
\end{figure}

\begin{figure}
	\centering
	\includegraphics[width=0.45\textwidth]{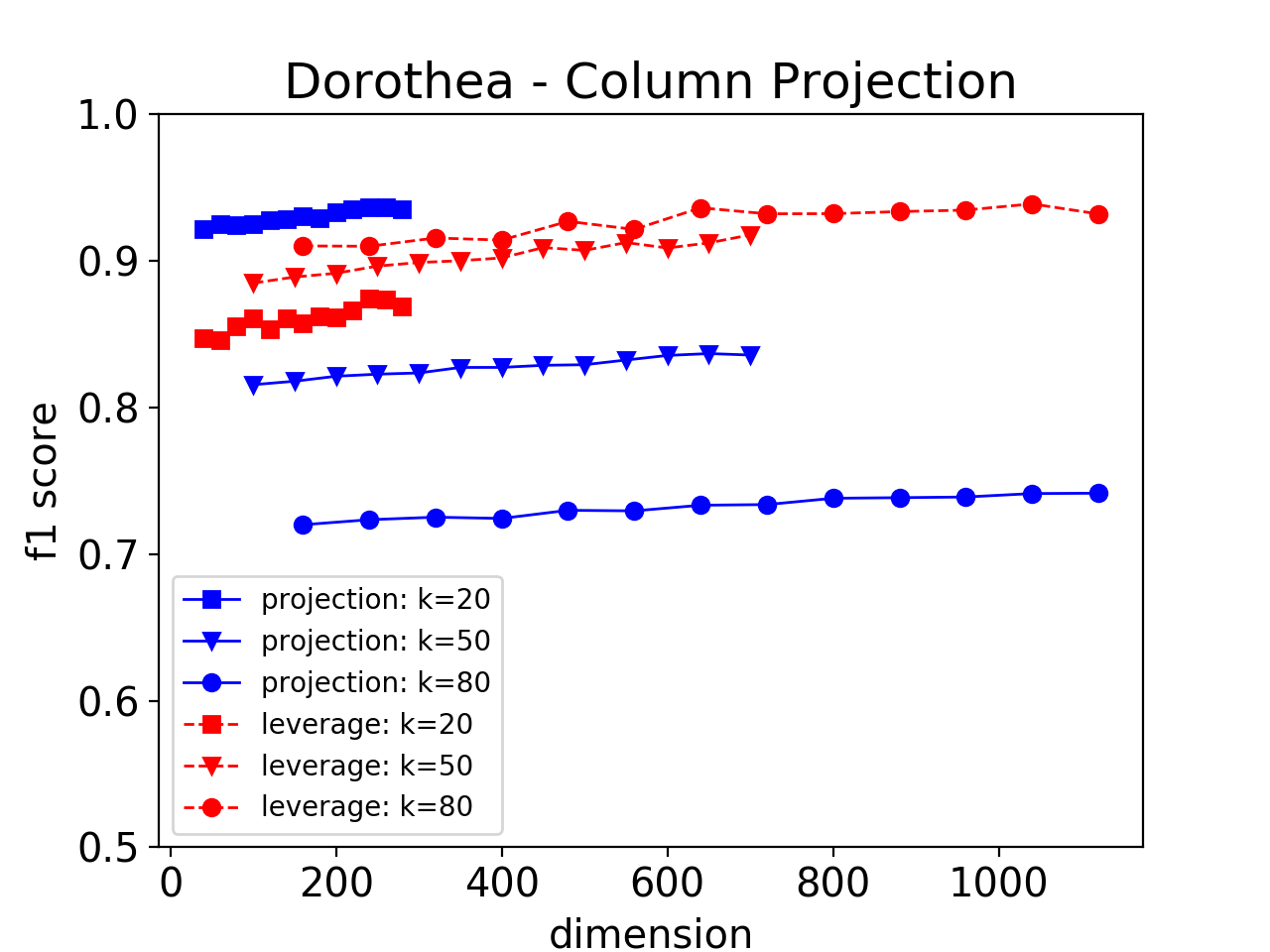}
	\includegraphics[width=0.45\textwidth]{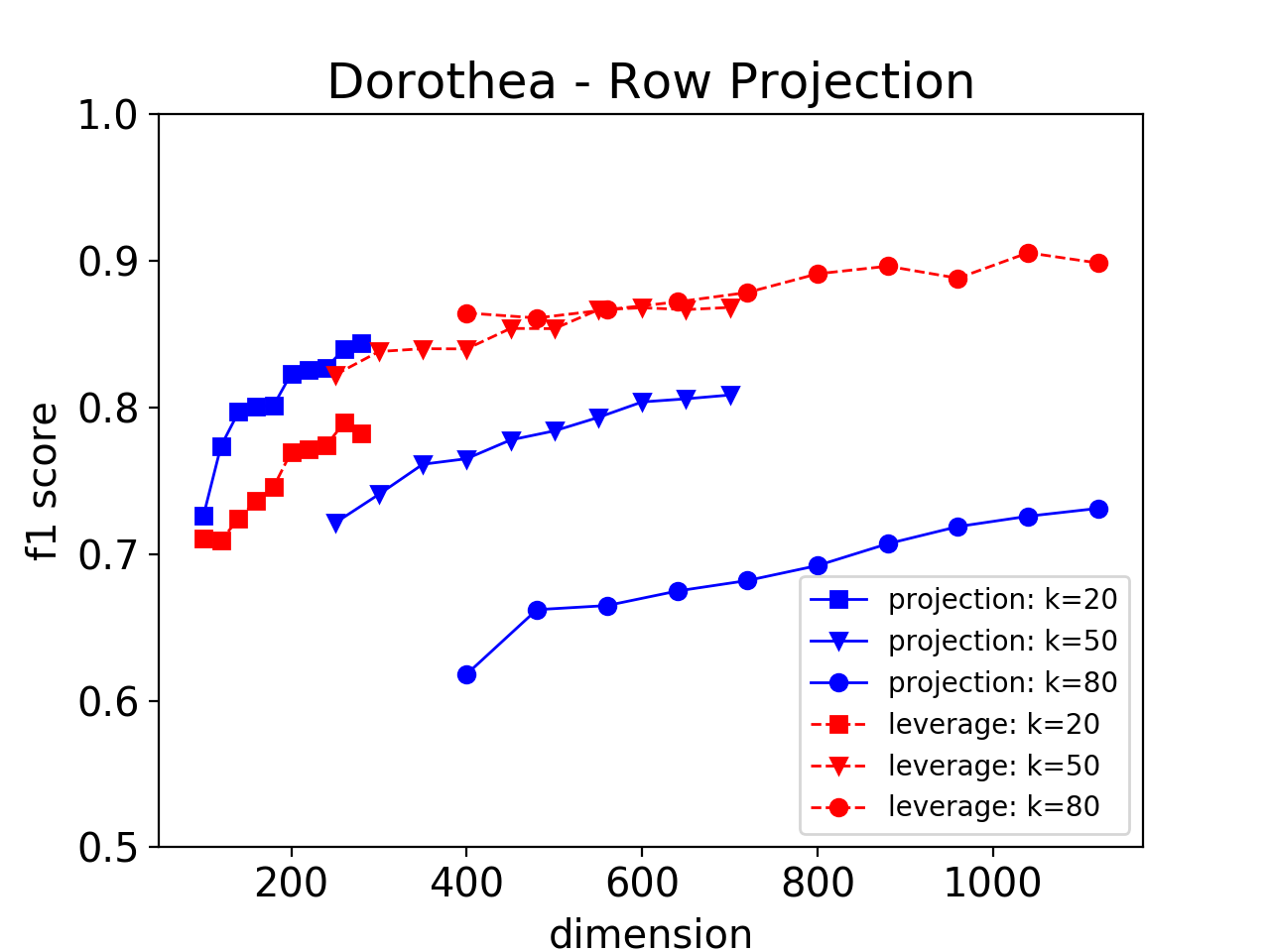}
	\caption{Results for the Dorothea dataset. Column projections give more accurate approximations, but they use more space. }
	\label{fig:dorothea}
\end{figure}

\begin{figure}
	\centering
	\includegraphics[width=0.45\textwidth]{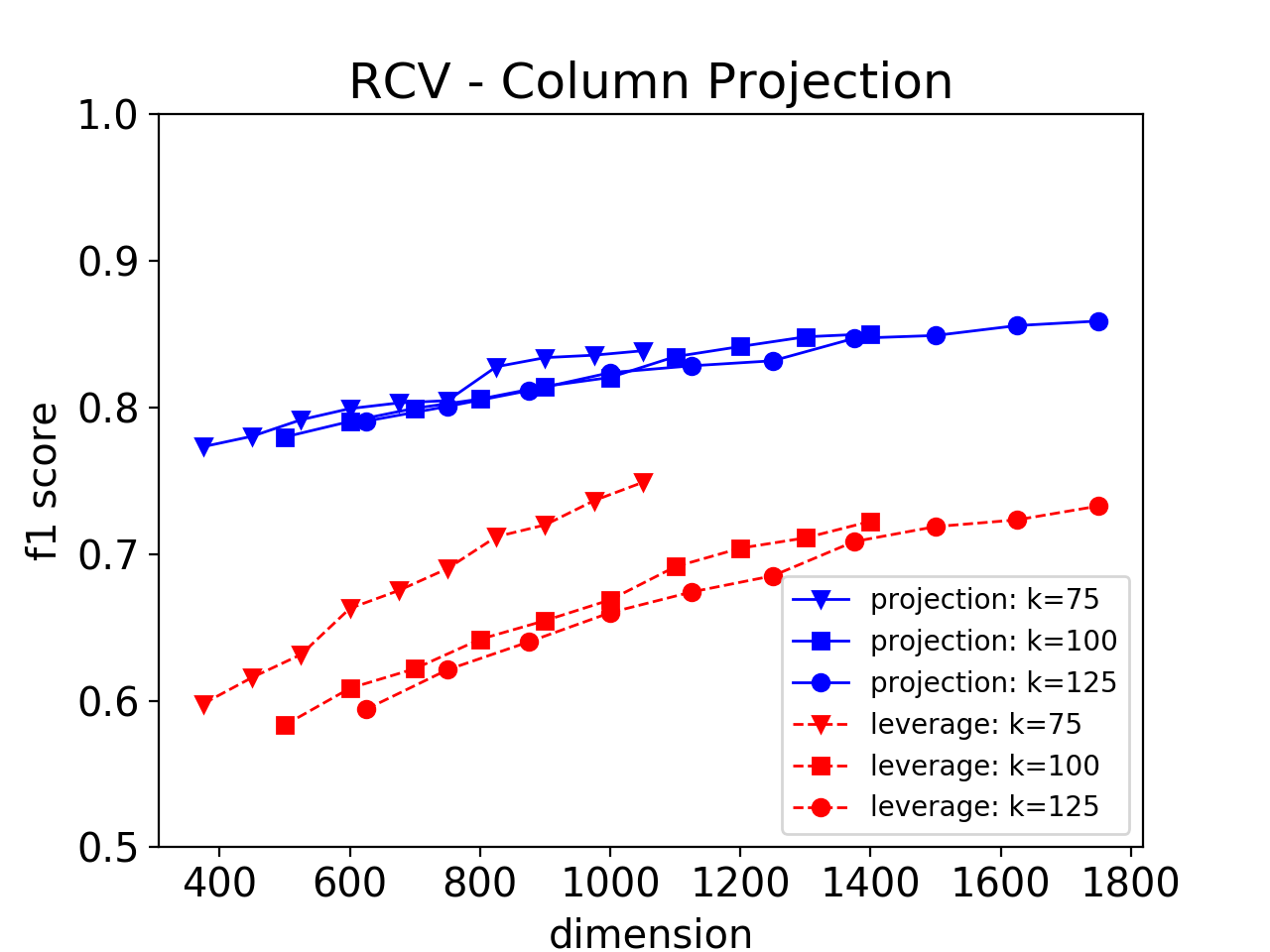}
	\includegraphics[width=0.45\textwidth]{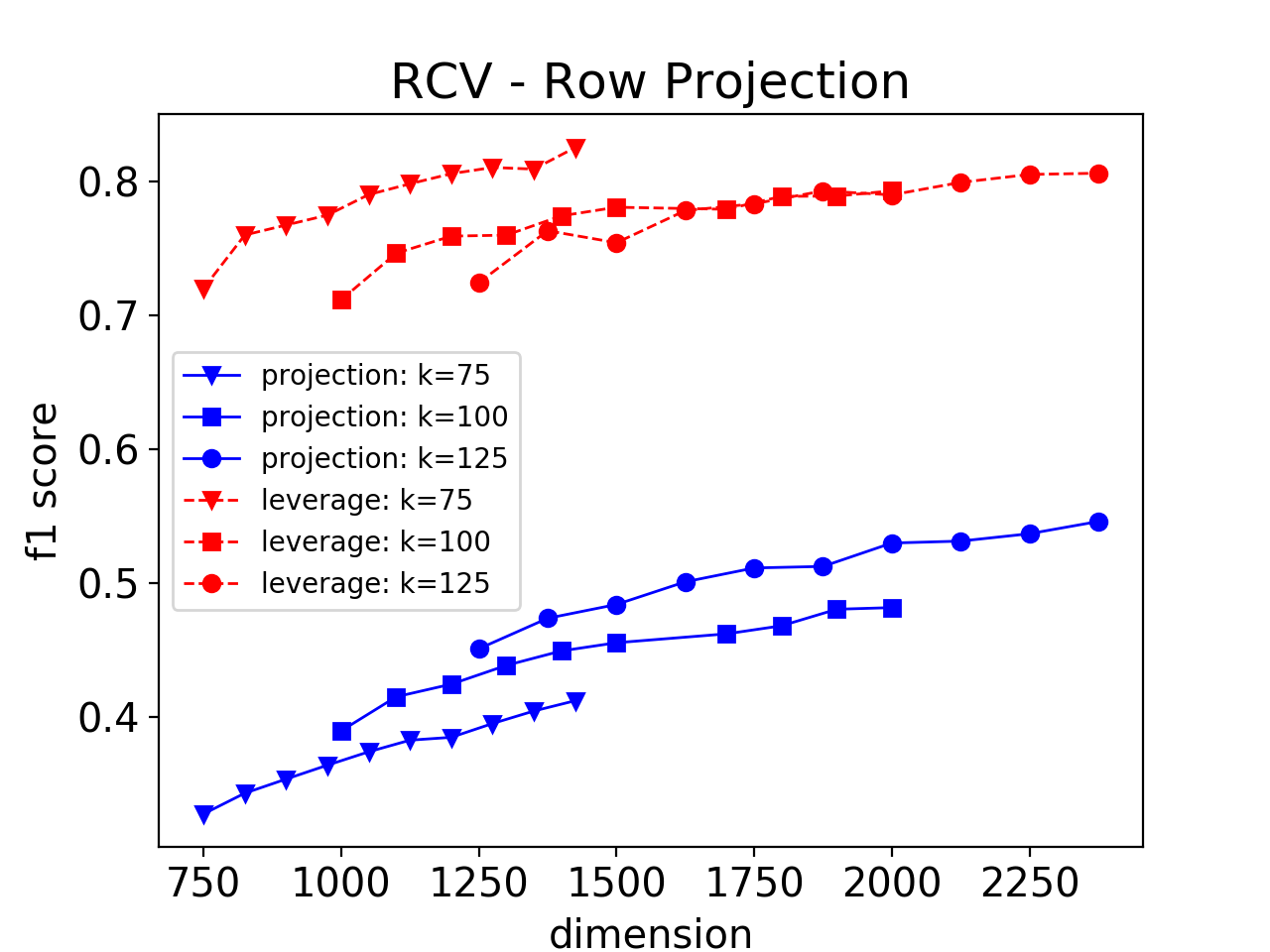}
	\caption{Results for the RCV1 dataset. Our results here are worse than for the other datasets, we hypothesize this is due to this data having less pronounced low-rank structure.}
	\label{fig:rcv1}
\end{figure}

\eat{
\paragraph{Conclusions. }We have shown that techniques from sketching can be used to derive
algorithms for computing subspace-based anomaly scores which
provably approximate the true SVD-based scores but at a lower cost in terms of
time and memory. Another resource that this optimizes is communication,
which is important in the IoT regime where the rows
corresponds to data collected by different sensors, and the
communication cost of transmitting the data from the sensors to a
central server is a bottleneck, and where anomaly detection is an important problem.
\eat{In such a scenario, the sensors can
compress the  data by subsampling or random projection before sending
them to the server. The server would reply with (the SVD of) the
aggregate covariance, which the sensors could then use to compute the
anomaly scores locally.}
}

\vspace{-8pt}

\section{Related Work}\label{sec:related}

\vspace{-6pt}
In most anomaly detection settings, labels are  hard to come by and
unsupervised learning methods are preferred: the algorithm needs to learn what
the bulk of the data looks like and then detect any deviations from this. Subspace based scores are well-suited to this, but various other anomaly scores have also been proposed such as those based on approximating the density of the data \citep{breunig2000lof,schneider2016expected} and attribute-wise analysis \citep{liu2008isolation}, we refer to surveys on anomaly detection for an overview \citep{Aggarwal2013,ChandolaBK09}. 

Leverage scores have found numerous
applications in numerical linear algebra, and hence there has been
significant interest in improving the time complexity of computing
them. For the problem of approximating the (full) leverage scores ($L(i)$ in Eq. \eqref{eq:full_lev}, note that we are concerned with the rank-$k$ leverage scores $L^k(i)$), \citet{clarkson2013low} and \citet{drineas2012fast} use sparse subspace embeddings and Fast Johnson Lindenstrauss Transforms (FJLT \citep{ailon2009fast}) to compute
the leverage scores using $O(nd)$  time instead of the $O(nd^2)$ time
required by the baseline---but these still need $O(d^2)$
memory. With respect to projection distance, the closest work to ours
is \citet{Huang15} which uses Frequent Directions to approximate
projection distances in $O(kd)$ space. In contrast to these approaches, our results hold both for rank-$k$ leverage scores and projection distances, for any matrix sketching algorithm---not just FJLT or Frequent Directions---and our space requirement can be as small as $\log(d)$ for average case guarantees. However, \citet{clarkson2013low} and \citet{drineas2012fast} give multiplicative guarantees for approximating leverage scores while our guarantees for rank-$k$ leverage scores are additive, but are nevertheless sufficient for the task of detecting anomalies.


\section{Matrix Perturbation Bounds}
\label{sec:prelims}

In this section, we will establish projection bounds for various
operators needed for computing outlier scores. We first set up some
notation and state some results we need.

\subsection{Preliminaries}

We work with the following setup throughout this section. Let $\Ab \in
\R^{n \times d} = \Ub\Sigmab\Vb^T$ where $\Sigmab = (\sigma_1, \ldots,
\sigma_d)$. Assume that $\Ab$ is $(k, \Delta)$-separated as in Assumption
1 from \ref{sec:setup}. We use $\sr(\Ab) = \normfsq{\Ab}/\sigma_1^2$ to denote
the stable rank of $\Ab$, and $\kappa_k = \sigma_1^2/\sigma_k^2$ for the
condition number of $\Ab_k$.

Let $\td{\Ab} \in \R^{n \times   \ell}$ be a  sketch/noisy version of $\Ab$ satisfying
\begin{align}
\label{eq:setup-0}
\norm{\Ab\Ab^T - \td{\Ab}\td{\Ab}^T} \leq \mu\sigma_1^2.
\end{align}
and let $\td{\Ab}= \td{\Ub}\td{\Sigmab}\td{\Vb}^T$ denote its SVD.
While we did not assume $\td{\Ab}$ is $(k, O(\Delta))$-separated, it
will follow from Weyl's inequality for $\mu$ sufficiently small
compared to $\Delta$. It helps to think of $\Delta$ as property of the
input $\Ab$, and $\mu$ as an accuracy parameter that we control.

In this section we prove perturbation  bounds for the following three operators derived from $\td \Ab$, showing their closeness to those derived from $\Ab$:
\begin{enumerate}
  \item \label{item:1}$\Ub_k\Ub_k^T$: projects onto the principal
    $k$-dimensional column subspace (Lemma \ref{lem:wed}).
  \item \label{item:2}$\Ub_k\Sigmab_k^2\Ub_k^T$: projects onto the principal
    $k$-dimensional column subspace, and  scale co-ordinate $i$ by
    $\sigma_i$ (Theorem \ref{thm:sigma1}).
  \item \label{item:3}$\Ub_k\Sigmab^{-2}_k\Ub_k^T$: projects onto the principal
    $k$-dimensional column subspace, and  scale co-ordinate $i$ by
    $1/\sigma_i$ (Theorem \ref{thm:sigma2}).
\end{enumerate}
To do so, we will extensively use two classical results about matrix
perturbation: Weyl's inequality (c.f. \citet{horn1994} Theorem 3.3.16)
and Wedin's theorem~\citep{Wedin1972}, which respectively quantify how
the eigenvalues and eigenvectors of a matrix change under
perturbations.

\begin{lemma}\label{lem:weyl}
	{\bf(Weyl's inequality)} Let $\Cb,\Db = \Cb + \Nb \in\mathbb{R}^{n
    \times d}$. Then for all $i\le \min(n,d)$,
	\[
	|\sigma_i(\Cb)-\sigma_i(\Db)|\le \norm{\Nb}.
	\]
\end{lemma}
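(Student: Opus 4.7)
The plan is to prove Weyl's inequality via the Courant-Fischer min-max characterization of singular values. Specifically, I would use the identity
\[
\sigma_i(\Mb) = \max_{\substack{S \subseteq \R^d \\ \dim(S) = i}} \min_{\substack{x \in S \\ \|x\|=1}} \|\Mb x\|,
\]
which holds for any $\Mb \in \R^{n \times d}$ and any $i \le \min(n,d)$. With this in hand, the proof reduces to a short argument combining the triangle inequality with the definition of the operator norm.

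First, I would establish a pointwise comparison: for any unit vector $x \in \R^d$,
\[
\bigl| \|\Db x\| - \|\Cb x\| \bigr| \le \|(\Db-\Cb)x\| = \|\Nb x\| \le \|\Nb\|,
\]
which follows from the reverse triangle inequality applied to $\Db x = \Cb x + \Nb x$ together with the definition of the operator norm $\|\Nb\| = \sup_{\|y\|=1}\|\Nb y\|$. Next, I would let $S^* \subseteq \R^d$ be an $i$-dimensional subspace that attains the maximum in the min-max formula for $\sigma_i(\Cb)$. For every unit $x \in S^*$ the pointwise bound gives $\|\Db x\| \ge \|\Cb x\| - \|\Nb\|$, so taking the minimum over unit $x \in S^*$ and using that $\sigma_i(\Db)$ is the maximum of such min-values over all $i$-dimensional subspaces,
\[
\sigma_i(\Db) \ge \min_{x \in S^*,\ \|x\|=1} \|\Db x\| \ge \min_{x \in S^*,\ \|x\|=1} \|\Cb x\| - \|\Nb\| = \sigma_i(\Cb) - \|\Nb\|.
\]
The reverse inequality $\sigma_i(\Cb) \ge \sigma_i(\Db) - \|\Nb\|$ follows by symmetry, since $\Cb = \Db + (-\Nb)$ and $\|-\Nb\| = \|\Nb\|$. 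Combining the two gives the claimed bound.

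There is no substantive obstacle here: the entire content of the lemma is packaged into the min-max characterization of singular values, and everything else is a one-line triangle inequality. The only mildly delicate point is that, for a non-square $\Mb$, one must phrase the min-max formula in terms of $\|\Mb x\|$ rather than a Rayleigh quotient $x^T \Mb x$, and restrict to $i \le \min(n,d)$ so that $i$-dimensional subspaces of $\R^d$ interact with $\Mb$ in a well-defined way; both of these are automatic under the hypotheses of the lemma.
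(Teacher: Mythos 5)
Your proof is correct. The paper itself does not prove Weyl's inequality---it is stated as a classical fact with a citation to Horn and Johnson (Theorem 3.3.16), so there is no internal proof to compare against. Your argument via the Courant--Fischer min-max characterization $\sigma_i(\Mb) = \max_{\dim(S)=i}\min_{x\in S,\,\|x\|=1}\|\Mb x\|$ together with the reverse triangle inequality is the standard textbook derivation and is complete: the maximizing subspace $S^*$ exists because the Grassmannian is compact and the objective is continuous, the pointwise bound $\bigl|\|\Db x\|-\|\Cb x\|\bigr|\le\|\Nb\|$ is immediate, and the symmetry step handles the other direction. No gaps.
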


Wedin's theorem requires a sufficiently large separation in the
eigenvalue spectrum for the bound to be meaningful.

\begin{lemma}\label{thm:wedin}
  {\bf(Wedin's theorem)} Let $\Cb,\Db = \Cb + \Nb \in \mathbb{R}^{n
    \times d}$. Let $\Pb_{C}$ and
  $\Pb_{D}$ respectively denote the projection matrix onto the space spanned by the top $k$ singular
  vectors of $\Cb$ and $\Db$. Then,
	\[
	\norm{\Pb_D-\Pb_C\Pb_D}\le \frac{\norm{\Nb}}{\sigma_k(\Cb) - \sigma_{k+1}(\Cb) -\norm{\Nb}}.
	\]
\end{lemma}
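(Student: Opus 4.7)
The plan is a three-step reduction: unpack the projector norm to the operator norm of a singular-vector cross-product, derive a Sylvester-type identity linking this cross-product to $\Cb$, $\Db$, and $\Nb$, and then invert the Sylvester operator using the spectral gap. First, writing $\Pb_C = \Ub_C\Ub_C^T$ and $\Pb_D = \Ub_D\Ub_D^T$ for the top-$k$ left-singular-vector matrices of $\Cb$ and $\Db$, one immediately has $\|\Pb_D - \Pb_C\Pb_D\| = \|(I-\Pb_C)\Pb_D\| = \|\Ub_{C\perp}\Ub_{C\perp}^T\Ub_D\Ub_D^T\| = \|\Ub_{C\perp}^T\Ub_D\|$, where $\Ub_{C\perp}$ consists of the remaining left singular vectors of $\Cb$. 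Call this quantity $\|X\|$ and set $Y = \Vb_{C\perp}^T\Vb_D$ analogously on the right-singular-vector side.

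Second, the key matrix identity comes from evaluating $\Ub_{C\perp}^T \Db \Vb_D$ in two different ways. The top-$k$ SVD relation $\Db\Vb_D = \Ub_D\Sigmab_{D,k}$ gives $\Ub_{C\perp}^T\Db\Vb_D = X\Sigmab_{D,k}$; on the other hand, the residual-block SVD relation $\Ub_{C\perp}^T\Cb = \Sigmab_{C\perp}\Vb_{C\perp}^T$ combined with $\Db = \Cb + \Nb$ gives $\Ub_{C\perp}^T\Db\Vb_D = \Sigmab_{C\perp}Y + \Ub_{C\perp}^T\Nb\Vb_D$. Equating:
\[
X\Sigmab_{D,k} - \Sigmab_{C\perp}Y = \Ub_{C\perp}^T\Nb\Vb_D,
\]
whose right-hand side has operator norm at most $\|\Nb\|$. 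An analogous computation starting from $\Db^T\Ub_D = \Vb_D\Sigmab_{D,k}$ yields the companion identity $\|Y\Sigmab_{D,k} - \Sigmab_{C\perp}^T X\| \leq \|\Nb\|$.

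Third, since $\Sigmab_{D,k}$ has all singular values $\geq \sigma_k(\Db)$ while $\Sigmab_{C\perp}$ has all singular values $\leq \sigma_{k+1}(\Cb)$, the two coupled identities can be stacked into a single Sylvester equation on the block matrix $(X^T, Y^T)^T$ whose Sylvester operator has spectral gap $\sigma_k(\Db) - \sigma_{k+1}(\Cb)$. The classical bound for Sylvester equations with disjoint Hermitian spectra then yields $\|X\| \leq \|\Nb\|/(\sigma_k(\Db) - \sigma_{k+1}(\Cb))$, and an application of Weyl's inequality (Lemma~\ref{lem:weyl}) to substitute $\sigma_k(\Db) \geq \sigma_k(\Cb) - \|\Nb\|$ produces the stated bound. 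The main obstacle is precisely this final step: working from the first identity alone only gives the weaker bound $(\sigma_{k+1}(\Cb) + \|\Nb\|)/\sigma_k(\Db)$, because $\|X\|$ cannot be disentangled from $\|Y\|$ via a single equation. The cleanest resolution is the Sylvester stacking described above, or equivalently applying the Davis-Kahan sin-$\Theta$ theorem to the Jordan-Wielandt dilations $\begin{pmatrix} 0 & \Cb \\ \Cb^T & 0\end{pmatrix}$ and $\begin{pmatrix} 0 & \Db \\ \Db^T & 0\end{pmatrix}$, which symmetrize the problem so that the singular-vector information of $\Cb$ and $\Db$ is packaged into the top-$k$ positive eigenspaces, with the perturbation having operator norm exactly $\|\Nb\|$.
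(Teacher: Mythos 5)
The paper does not actually prove this lemma --- Wedin's theorem is stated as a classical result with a citation to \citet{Wedin1972}, so there is no internal proof to compare against. Judged on its own, your proposal correctly identifies the architecture of the classical argument: the reduction $\norm{\Pb_D - \Pb_C\Pb_D} = \norm{\Ub_{C\perp}^T\Ub_D}$ is right, the two coupled identities $X\Sigmab_{D,k} - \Sigmab_{C\perp}Y = \Ub_{C\perp}^T\Nb\Vb_D$ and $Y\Sigmab_{D,k} - \Sigmab_{C\perp}^TX = \Vb_{C\perp}^T\Nb^T\Ub_D$ are derived correctly, each right-hand side has operator norm at most $\norm{\Nb}$, and the final Weyl substitution $\sigma_k(\Db) \geq \sigma_k(\Cb) - \norm{\Nb}$ is exactly what produces the stated denominator.

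The gap is in the closing step. Naively stacking $Z = \binom{X}{Y}$ into a single Sylvester equation $Z\Sigmab_{D,k} - \Mb Z = W$ with $\Mb = \left(\begin{smallmatrix} 0 & \Sigmab_{C\perp}\\ \Sigmab_{C\perp}^T & 0\end{smallmatrix}\right)$ and $W = \binom{\Ub_{C\perp}^T\Nb\Vb_D}{\Vb_{C\perp}^T\Nb^T\Ub_D}$ does not give the claimed constant: one only has $\norm{W} \le \sqrt{2}\,\norm{\Nb}$ (the block vector $\binom{\Ub_D}{\Vb_D}$ has operator norm $\sqrt{2}$, not $1$), so the Hermitian Sylvester bound produces $\norm{X} \le \sqrt{2}\norm{\Nb}/(\sigma_k(\Db) - \sigma_{k+1}(\Cb))$, off by $\sqrt{2}$. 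The Jordan--Wielandt route has a cognate issue: the dilated $\sin\Theta$ quantity measures the angle between $\frac{1}{\sqrt 2}\binom{\Ub_C}{\Vb_C}$ and $\frac{1}{\sqrt 2}\binom{\Ub_D}{\Vb_D}$, which is not equal to $\norm{\Ub_{C\perp}^T\Ub_D}$, and translating back costs a constant. In fact the sharp bound follows from a \emph{simpler} argument than stacking: from the two identities, $\sigma_k(\Db)\norm{X} \le \norm{\Nb} + \sigma_{k+1}(\Cb)\norm{Y}$ and symmetrically $\sigma_k(\Db)\norm{Y} \le \norm{\Nb} + \sigma_{k+1}(\Cb)\norm{X}$. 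Let $m = \max(\norm{X},\norm{Y})$ and apply whichever inequality has the maximizer on the left; its right side is bounded by $\norm{\Nb} + \sigma_{k+1}(\Cb)\,m$, giving $(\sigma_k(\Db) - \sigma_{k+1}(\Cb))\,m \le \norm{\Nb}$ directly, with no $\sqrt 2$. This is essentially Wedin's original device, and it is worth knowing because it shows the coupled system need not be treated as a single block Sylvester equation at all.
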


\subsection{Matrix Perturbation Bounds}

We now use these results to derive the perturbation
bounds enumerated above. The first bound is a direct consequence of Wedin's theorem.

\begin{lemma}\label{lem:wed}
  If $\Ab = \Ub\Sigma\Vb^T$ is $(k, \Delta)$-separated, $\td{\Ab}= \td{\Ub}\td{\Sigmab}\td{\Vb}^T$ satisfies \eqref{eq:setup-0} with $\mu \leq \Delta/6$, then
  \[ \norm{\Ub_k\Ub_k^T - \tilde{\Ub}_k\tilde{\Ub}_k^T} \le  2\sqrt{\frac{\mu}{\Delta}}. \]
\end{lemma}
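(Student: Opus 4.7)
The plan is to apply Wedin's theorem (Lemma~\ref{thm:wedin}) directly to the symmetric matrices $\Cb=\Ab\Ab^T$ and $\Db=\td{\Ab}\td{\Ab}^T$, so that $\Nb=\Db-\Cb$ has operator norm bounded by $\mu\sigma_1^2$ via the sketch guarantee \eqref{eq:setup-0}. Since $\Cb$ is symmetric PSD, its top-$k$ left singular subspace coincides with its top-$k$ eigenspace, which is exactly $\spn(\Ub_k)$; likewise the top-$k$ singular subspace of $\Db$ is $\spn(\td{\Ub}_k)$. The singular values of $\Cb$ are $\sigma_i(\Cb)=\sigma_i(\Ab)^2=\sigma_i^2$, so the $(k,\Delta)$-separation of $\Ab$ gives $\sigma_k(\Cb)-\sigma_{k+1}(\Cb)=\sigma_k^2-\sigma_{k+1}^2\ge\Delta\sigma_1^2$.

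With $\Pb_C=\Ub_k\Ub_k^T$ and $\Pb_D=\td{\Ub}_k\td{\Ub}_k^T$, Wedin's theorem then yields
\[
\norm{\td{\Ub}_k\td{\Ub}_k^T - \Ub_k\Ub_k^T\td{\Ub}_k\td{\Ub}_k^T}
\;\le\;\frac{\mu\sigma_1^2}{\Delta\sigma_1^2-\mu\sigma_1^2}
\;=\;\frac{\mu}{\Delta-\mu}.
\]
Using the hypothesis $\mu\le\Delta/6$, this is at most $\frac{6\mu}{5\Delta}$.

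Next I would convert this one-sided bound into a bound on $\norm{\Pb_C-\Pb_D}$. The step I would use is the classical identity that for two orthogonal projections of the same rank, $\norm{\Pb_C-\Pb_D}=\norm{(\Ib-\Pb_C)\Pb_D}=\norm{\Pb_D-\Pb_C\Pb_D}$ (both sides equal the sine of the largest principal angle between the two subspaces; this can be verified by a direct CS-decomposition argument or by noting that $\Pb_C-\Pb_D$ has eigenvalues $\pm\sin\theta_i$). Even without invoking this identity, the weaker decomposition $\Pb_C-\Pb_D=\Pb_C(\Ib-\Pb_D)-(\Ib-\Pb_C)\Pb_D$ together with $\norm{\Pb_C(\Ib-\Pb_D)}=\norm{(\Ib-\Pb_C)\Pb_D}$ gives the same bound up to a factor of $2$, which is enough.

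Finally, since $\mu/\Delta\le 1$, we have $\mu/\Delta\le\sqrt{\mu/\Delta}$, so $\frac{6\mu}{5\Delta}\le\frac{6}{5}\sqrt{\mu/\Delta}\le 2\sqrt{\mu/\Delta}$, which is the claimed bound. The only nontrivial step is the projection identity mentioned above; everything else is a direct substitution into Wedin combined with the observation that passing to $\Ab\Ab^T$ squares the singular values of $\Ab$, so that the separation appears as $\Delta\sigma_1^2$ on the denominator while the numerator remains $\mu\sigma_1^2$.
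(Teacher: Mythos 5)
Your proof is correct, but it takes a genuinely different route from the paper's, and in fact yields a quantitatively stronger bound. The paper avoids the CS-decomposition identity you invoke: it instead uses the algebraic identity $\norm{\Pb-\td{\Pb}}^2=\norm{(\Pb-\td{\Pb})^2}\le\norm{\td{\Pb}-\Pb\td{\Pb}}+\norm{\Pb-\td{\Pb}\Pb}$ (valid since $\Pb-\td{\Pb}$ is symmetric and $\Pb,\td{\Pb}$ idempotent), then applies Wedin twice --- once to $\Ab\Ab^T$ for the first term, and once to $\td{\Ab}\td{\Ab}^T$ for the second, after first invoking Weyl's inequality to establish that $\td{\Ab}$ inherits a spectral gap at $k$. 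That double application plus the squaring costs a square root, giving $O(\sqrt{\mu/\Delta})$, whereas your single Wedin application plus the equal-rank identity $\norm{\Pb_C-\Pb_D}=\norm{(\Ib-\Pb_C)\Pb_D}$ gives the tighter $O(\mu/\Delta)$; both certainly satisfy the stated $2\sqrt{\mu/\Delta}$. The one thing you should make explicit is the equal-rank hypothesis itself: you need $\rank(\td{\Ub}_k\td{\Ub}_k^T)=k$, i.e.\ $\td{\sigma}_k>0$, which does follow from Weyl's inequality since $\td{\sigma}_k^2\ge\sigma_k^2-\mu\sigma_1^2\ge(\Delta-\mu)\sigma_1^2>0$. (Your ``factor of 2'' fallback still relies on $\norm{\Pb_C(\Ib-\Pb_D)}=\norm{(\Ib-\Pb_C)\Pb_D}$, which is equally a consequence of rank equality, so it is not actually a weaker assumption --- but it is a fine alternative phrasing.) In short: correct, tighter, different mechanism; the paper's route is more elementary in that it sidesteps principal-angle theory at the price of a square root.
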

\begin{proof}
	Let $\Pb = \Ub_k\Ub_k^T$ and $\td{\Pb} =
        \td{\Ub}_k\td{\Ub}_k^T$. Since we have $\Ab\Ab^T =
        \Ub\Sigma^2\Ub^T$, $\Pb$ ($\td{\Pb}$) is the projection
        operator onto the column space of $\Ab_k$ (and similarly for $\td{\Pb}$ and $\td{\Ab}_k$).
	Since $\Pb^T = \Pb$ and $\Pb\Pb = \Pb$ for any orthogonal projection matrix, we can write,
	\begin{align}
	\norm{\Pb -\td{\Pb}}^2 & = \norm{(\Pb -\td{\Pb})^2} =
	\norm{\Pb\Pb -\Pb\td{\Pb} -\td{\Pb}\Pb +\td{\Pb}\td{\Pb}} =
	\norm{\Pb-\Pb\td{\Pb}+\td{\Pb}-\td{\Pb}\Pb}\notag \\
	& \le \norm{\td{\Pb} -\Pb\td{\Pb}}+\norm{\Pb
		-\td{\Pb}\Pb}\label{eq:proj}.
	\end{align}

        Since $\Ab$ is $(k, \Delta)$-separated,
        \begin{align}
          \label{eq:setup-1}
          \sigma_k(\Ab\Ab^t) - \sigma_{k+1}(\Ab\Ab^t) = \sigma_k^2 -
          \sigma_{k+1}(\Ab)^2 = \Delta.
        \end{align}
        So applying Wedin's theorem to $\Ab\Ab^T$,
	\begin{align}
	\label{eq:wedin-1}
	\norm{\td{\Pb} -\Pb\td{\Pb}} \leq \frac{\mu}{\Delta - \mu}.
	\end{align}

	We next show that the spectrum of $\td{\Ab}$ also has a gap at
	$k$. Using Weyl's inequality,
	\begin{align*}
	\td{\sigma}_k^2 & \ge \sigma^2_k-\mu \sigma_1^2\text{ and } \
	\td{\sigma}_{k+1}^2 \le \sigma^2_{k+1} +\mu \sigma_1^2
        \end{align*}
        Hence using Equation \eqref{eq:setup-1}
        \begin{align*}
        \td{\sigma}_k^2 - \td{\sigma}_{k+1}^2 & \geq  \sigma^2_{k+1} -
	\sigma^2_k - 2\mu \sigma_1^2 \geq (\Delta - 2\mu)\sigma_1^2.
	\end{align*}
	So we apply Wedin's theorem to $\td{\Ab}$ to get
	\begin{align}
	\label{eq:wedin-2}
	\norm{\Pb - \td{\Pb}\Pb} \leq \frac{\mu}{\Delta - 3 \mu}.
	\end{align}

	Plugging \eqref{eq:wedin-1} and \eqref{eq:wedin-2}
	into \eqref{eq:proj},
	\begin{align*}
	\norm{\Pb -\td{\Pb}}^2 \le \frac{\mu}{\Delta - \mu} +
	\frac{\mu}{\Delta - 3\mu} \leq \frac{2\mu}{\Delta - 3\mu} \leq \frac{4\mu}{\Delta}
	\end{align*}
	where the last inequality is becuase $\Delta - 3\mu \geq
	\Delta/2$ since we assumed $\Delta \geq 6\mu$. The claim follows
	by taking square roots on both sides.
\end{proof}

We now move to proving bounds for items \eqref{item:2} and \eqref{item:3}, which
are given by Theorems \ref{thm:sigma1} and \ref{thm:sigma2}
respectively.

\begin{restatable}{theorem}{sigmaone}
	\label{thm:sigma1}
	Let $\mu \leq \min(\Delta^3k^2, 1/(20 k))$.
	\[ \norm{\Ub_{k}\Sigmab_k^2\Ub^T_{k} - \td{\Ub}_{
			k}\Sigmab_k^2\td{\Ub}^T_{k}} \leq 8\sigma_1^2(\mu k)^{1/3}.\]
\end{restatable}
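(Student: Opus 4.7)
The plan is to compare the two operators via the intermediate matrix $\td\Pb_k\Ab\Ab^T\td\Pb_k$, where $\Pb_k := \Ub_k\Ub_k^T$ and $\td\Pb_k := \td\Ub_k\td\Ub_k^T$ are the top-$k$ column-space projectors of $\Ab$ and $\td\Ab$. Since $\Ab\Ab^T = \Ub\Sigmab^2\Ub^T$, a direct computation gives the identity $\Pb_k\Ab\Ab^T\Pb_k = \Ub_k\Sigmab_k^2\Ub_k^T$, so the triangle inequality reduces the target norm to the sum of
\[
\norm{\Pb_k\Ab\Ab^T\Pb_k - \td\Pb_k\Ab\Ab^T\td\Pb_k} \quad\text{and}\quad \norm{\td\Pb_k\Ab\Ab^T\td\Pb_k - \td\Ub_k\Sigmab_k^2\td\Ub_k^T}.
\]

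For the first piece, I would use the algebraic identity $\Pb\Mb\Pb - \td\Pb\Mb\td\Pb = (\Pb - \td\Pb)\Mb\Pb + \td\Pb\Mb(\Pb - \td\Pb)$ with $\Mb = \Ab\Ab^T$, which gives an operator-norm bound of $2\norm{\Ab\Ab^T}\cdot\norm{\Pb_k - \td\Pb_k}$. Plugging in $\norm{\Ab\Ab^T} = \sigma_1^2$ and Lemma~\ref{lem:wed}'s bound $\norm{\Pb_k - \td\Pb_k} \le 2\sqrt{\mu/\Delta}$ bounds this piece by $4\sigma_1^2\sqrt{\mu/\Delta}$. The hypothesis $\mu \le \Delta^3k^2$ is exactly what lets this be rewritten as $O(\sigma_1^2(\mu k)^{1/3})$.

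For the second piece, expand $\td\Pb_k\Ab\Ab^T\td\Pb_k = \td\Ub_k \Mb_k \td\Ub_k^T$ with $\Mb_k := \td\Ub_k^T\Ab\Ab^T\td\Ub_k \in \R^{k\times k}$; the second piece equals $\td\Ub_k(\Mb_k - \Sigmab_k^2)\td\Ub_k^T$ and its operator norm is at most $\norm{\Mb_k - \Sigmab_k^2}$ since $\td\Ub_k$ has orthonormal columns. The key observation is that $\td\Ub_k$ diagonalizes $\td\Ab\td\Ab^T$ exactly, giving $\td\Ub_k^T\td\Ab\td\Ab^T\td\Ub_k = \td\Sigmab_k^2$. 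So the triangle inequality together with the sketch guarantee \eqref{eq:setup-0} and Weyl's inequality (Lemma~\ref{lem:weyl}) yields
\[
\norm{\Mb_k - \Sigmab_k^2} \le \norm{\td\Ub_k^T(\Ab\Ab^T - \td\Ab\td\Ab^T)\td\Ub_k} + \norm{\td\Sigmab_k^2 - \Sigmab_k^2} \le 2\mu\sigma_1^2,
\]
which is absorbed into the target bound using $\mu \le 1/(20k)$.

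Summing the two contributions gives $4\sigma_1^2\sqrt{\mu/\Delta} + 2\mu\sigma_1^2$, and under $\mu \le \min(\Delta^3k^2,1/(20k))$ each term is at most $4\sigma_1^2(\mu k)^{1/3}$, yielding the claimed bound. The subtlest step, and the conceptual crux, is the second piece: we cannot hope to control individual eigenvectors within the top-$k$ subspace since there need not be any internal spectral gap, but we bypass this by transferring $\td\Ub_k$'s action on $\Ab\Ab^T$ to its (exact) diagonalization of the nearby sketch $\td\Ab\td\Ab^T$, so that all the error is absorbed by the operator-norm closeness of $\Ab\Ab^T$ and $\td\Ab\td\Ab^T$ and the entrywise closeness of the spectra given by Weyl.
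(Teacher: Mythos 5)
Your proof is correct, and it takes a genuinely different and arguably cleaner route than the paper's. The paper handles the scaling by $\Sigmab_k^2$ by decomposing $\Sigmab_k^2 = \Lambdab + \Omegab$ into a ``bucketed'' piecewise-constant part $\Lambdab$ (with at least $\delta\sigma_1^2$ separation between buckets) and a small residual $\Omegab$. The bucketed part is bounded by writing $\Ub_k\Lambdab\Ub_k^T$ as a telescoping sum of partial projectors $\Ub_{b_j}\Ub_{b_j}^T$ and invoking Wedin's theorem once per bucket boundary (Lemma~\ref{lem:diag2}); the residual is bounded crudely by Lemma~\ref{lem:diag1}; finally $\delta$ is optimized. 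Your argument bypasses the bucketing entirely: the first piece $\norm{\Pb_k\Ab\Ab^T\Pb_k - \td\Pb_k\Ab\Ab^T\td\Pb_k}$ uses only the top-$k$ spectral gap (one application of Wedin via Lemma~\ref{lem:wed}), and the second piece is handled purely algebraically by the identity $\td\Ub_k^T\td\Ab\td\Ab^T\td\Ub_k = \td\Sigmab_k^2$, which transfers the error to the sketch guarantee and Weyl's inequality. The constants work out: $\mu \le \Delta^3 k^2$ gives $\sqrt{\mu/\Delta} \le (\mu k)^{1/3}$ for the first piece, $\mu k \le 1/20$ ensures $\mu \le \Delta/6$ so Lemma~\ref{lem:wed} applies, and $2\mu\sigma_1^2 \le 4\sigma_1^2(\mu k)^{1/3}$ is trivially satisfied since $\mu < 1$. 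The paper's bucketing machinery is arguably reusable for Theorem~\ref{thm:sigma2} (the $\Sigmab_k^{-2}$ case) where your intermediate-matrix trick does not transfer as directly, since $\Ub_k\Sigmab_k^{-2}\Ub_k^T$ would require comparing (restricted) pseudoinverses rather than restrictions of $\Ab\Ab^T$ itself; but for the present theorem your route is simpler and more transparent about where each assumption is used.
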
 
The full proof of the theorem is quite technical and is stated in Section \ref{sec:prelim_full}. Here we prove a special case that captures the main idea. The simplifying assumption we make is that the values of the diagonal matrix in the operator are distinct and well separated. In the full proof we decompose $\Sigmab_k$ to a well separated component and a small residual component. 

\begin{definition}
$\Lambdab = \diag(\lambda_1, \ldots \lambda_k)$ is \emph {monotone and $\delta$-well separated} if \begin{itemize}
	\item $\lambda_{i+1} \geq \lambda_i$ for $1 \leq i < k$.
	\item The $\lambda$s could be partitioned to buckets so that all values in the same buckets are equal, and values across buckets are well separated. Formally, $1\ldots k$ are partitioned to $m$ buckets $B_1,\ldots B_m$ so that if  $i,j \in B_\ell$ then $\lambda_i = \lambda_j$. Yet, if $i \in B_\ell$ and $j \in B_{\ell + 1}$ then $\lambda_i - \lambda_j > \delta \lambda_1$.
\end{itemize}  
\end{definition}
The idea is to show $\Sigmab = \Lambdab + \Omegab$ where $\Lambdab$ is monotone and well separated and $\Omegab$ has small norm.  The next two lemmas handle these two components. We first state a simple lemma which handles the case where $\Omegab$ has small norm.
\begin{lemma}\label{lem:diag1}
	For any diagonal matrix $\Omegab$,
	\[ \norm{\Ub_{k}\Omegab\Ub^T_{k} - \td{\Ub}_{k}\Omegab\td{\Ub}^T_{ k}} \leq 2\norm{\Omegab}.\]
\end{lemma}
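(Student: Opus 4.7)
The plan is to observe that this lemma is essentially a triangle inequality fact, since it makes no use of the relationship between $\Ub_k$ and $\td{\Ub}_k$ beyond the fact that both have orthonormal columns. So no perturbation theory is needed here, which is consistent with the lemma's use as a ``residual'' bound once the well-separated part of $\Sigmab_k$ has been handled by the harder argument (Theorem \ref{thm:sigma1}).

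Concretely, I would first apply the triangle inequality to split
\[
\norm{\Ub_k \Omegab \Ub_k^T - \td{\Ub}_k \Omegab \td{\Ub}_k^T} \le \norm{\Ub_k \Omegab \Ub_k^T} + \norm{\td{\Ub}_k \Omegab \td{\Ub}_k^T}.
\]
Then I would bound each of the two terms using submultiplicativity of the operator norm, giving $\norm{\Ub_k \Omegab \Ub_k^T} \le \norm{\Ub_k}\,\norm{\Omegab}\,\norm{\Ub_k^T}$, and similarly for the tilde version.

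The final ingredient is that $\Ub_k$ and $\td{\Ub}_k$ are submatrices of orthogonal matrices (the top-$k$ left singular vectors of $\Ab$ and $\td{\Ab}$ respectively), so they have orthonormal columns. Therefore $\norm{\Ub_k} = \norm{\Ub_k^T} = 1$ and likewise $\norm{\td{\Ub}_k} = \norm{\td{\Ub}_k^T} = 1$. Plugging these in yields each term at most $\norm{\Omegab}$, and summing gives the desired bound $2\norm{\Omegab}$.

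There is no real obstacle here; the lemma is a one-line triangle inequality calculation. The only subtle point worth flagging in the written proof is that the bound is intentionally crude (it does not shrink as $\td{\Ub}_k \to \Ub_k$), which is exactly what is needed when this lemma is combined with a separate argument for a well-separated component whose perturbation \emph{does} get small, so that the overall bound in Theorem \ref{thm:sigma1} can be obtained by choosing the split between $\Lambdab$ and $\Omegab$ to balance the two contributions.
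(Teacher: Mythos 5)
Your proof is correct and matches the paper's argument exactly: triangle inequality to split the difference into two terms, then submultiplicativity together with the fact that $\Ub_k$ and $\td{\Ub}_k$ have orthonormal columns (hence unit operator norm) bounds each term by $\norm{\Omegab}$. Your closing remark about why this crude bound suffices in context is accurate but not part of the proof itself.
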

\begin{proof}
	By the triangle inequality, $\norm{\Ub_{k}\Omegab\Ub^T_{k} -
		\td{\Ub}_{k}\Omegab\td{\Ub}^T_{ k}} \leq
	\norm{\Ub_{k}\Omegab\Ub^T_{k}} +
	\norm{\td{\Ub}_{k}\Omegab\td{\Ub}^T_{ k}}$. The bound follows
	as $\Ub_{k}$ and $\td{\Ub}$ are orthonormal matrices.
\end{proof}
We next state the result for the case where $\Lambdab$ is monotone and well separated. In order to prove this result, we need the following direct corollary of Lemma \ref{lem:wed}:

\begin{lemma}
	\label{lem:non_dec1}
	For all $j \le m$,
	\begin{align}
	\norm{\Ub_{b_j}\Ub_{b_j}^T-\td{\Ub}_{b_j}\td{\Ub}_{b_j}^T}\le
	\sqrt{\frac{2\mu\sigma_1^2}{\delta\sigma_1^2-3\mu\sigma_1^2}}\le
	2\sqrt{\frac{\mu}{\delta}}.\label{eq:non_dec1}
	\end{align}
\end{lemma}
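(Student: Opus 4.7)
The plan is to observe that this lemma is essentially Lemma \ref{lem:wed} applied at each bucket boundary $b_j = |B_1| + \cdots + |B_j|$ in place of the index $k$, and with the separation parameter $\delta$ in place of $\Delta$. The well-separated property of $\Lambdab$ guarantees that the spectrum of $\Ab\Ab^T$ has a gap of size at least $\delta\sigma_1^2$ at each such $b_j$, namely $\sigma_{b_j}^2 - \sigma_{b_j+1}^2 \geq \delta\sigma_1^2$, since within each bucket the relevant eigenvalues are equal and across consecutive buckets they differ by at least $\delta\sigma_1^2$.

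First, I would let $\Pb_j = \Ub_{b_j}\Ub_{b_j}^T$ and $\td\Pb_j = \td\Ub_{b_j}\td\Ub_{b_j}^T$ denote the orthogonal projectors onto the top-$b_j$ left singular subspaces of $\Ab$ and $\td\Ab$ respectively. Exactly as in the derivation of \eqref{eq:proj}, using $\Pb_j^T = \Pb_j$ and $\Pb_j\Pb_j = \Pb_j$ together with the triangle inequality, I would obtain
\[
\norm{\Pb_j - \td\Pb_j}^2 \leq \norm{\td\Pb_j - \Pb_j\td\Pb_j} + \norm{\Pb_j - \td\Pb_j\Pb_j}.
\]

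Next, I would bound each of the two summands via Wedin's theorem (Lemma \ref{thm:wedin}) applied at index $b_j$. Applying Wedin to $\Ab\Ab^T$ and using the gap $\delta\sigma_1^2$ at index $b_j$ together with the perturbation bound $\norm{\Ab\Ab^T - \td\Ab\td\Ab^T} \leq \mu\sigma_1^2$ from \eqref{eq:setup-0} yields
\[
\norm{\td\Pb_j - \Pb_j\td\Pb_j} \leq \frac{\mu\sigma_1^2}{\delta\sigma_1^2 - \mu\sigma_1^2}.
\]
For the symmetric term, I would first invoke Weyl's inequality (Lemma \ref{lem:weyl}) to transfer the gap to $\td\Ab$, obtaining $\td\sigma_{b_j}^2 - \td\sigma_{b_j+1}^2 \geq (\delta - 2\mu)\sigma_1^2$, and then apply Wedin to $\td\Ab\td\Ab^T$ to get
\[
\norm{\Pb_j - \td\Pb_j\Pb_j} \leq \frac{\mu\sigma_1^2}{\delta\sigma_1^2 - 3\mu\sigma_1^2}.
\]

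Summing these two bounds (the second dominates the first) gives $\norm{\Pb_j - \td\Pb_j}^2 \leq 2\mu\sigma_1^2/(\delta\sigma_1^2 - 3\mu\sigma_1^2)$, and taking square roots yields the first claimed inequality. The second inequality $2\sqrt{\mu/\delta}$ follows by noting $\delta - 3\mu \geq \delta/2$ under the standing assumption $\mu \leq \delta/6$ available from the context invoking this lemma. There is no real technical obstacle here, as the proof is a verbatim repetition of Lemma \ref{lem:wed} at a different index; the only conceptual point to verify is that the well-separated structure of $\Lambdab$ translates to the required spectral gap in $\Ab\Ab^T$ at $b_j$, which is immediate from the bucket partition.
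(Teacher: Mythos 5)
Your proof is correct and is exactly the argument the paper has in mind---it labels the lemma a ``direct corollary of Lemma~\ref{lem:wed}'' and your write-out is precisely what that corollary looks like when made explicit, repeating the Wedin-plus-Weyl computation from the proof of Lemma~\ref{lem:wed} at index $b_j$ with gap parameter $\delta$ in place of $\Delta$. One small wording caution: the spectral gap $\sigma_{b_j}^2 - \sigma_{b_j+1}^2 > \delta\sigma_1^2$ at each bucket boundary is built into the definition of the partition $B(\Ab,\delta)$ (see Section~\ref{sec:prelim_full}), rather than being a logical consequence of an abstract diagonal matrix $\Lambdab$ being well-separated; in every application $\Lambdab$ is constructed to be constant on exactly those buckets so the two coincide, but the causality runs from $\Ab$'s spectrum to the bucketing, not the other way around.
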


Using this, we now proceed as follows.

\begin{lemma}
	\label{lem:diag2_simple}
	Let $6\mu \leq \delta \leq \Delta$. Let $\Lambdab =
	\diag(\lambda_1,\ldots\lambda_k)$ be a monotone and $\delta \sigma_1^2$ well separated diagonal  matrix.  Then
	\[ \norm{\Ub_k\Lambdab\Ub^T_{k} - \td{\Ub}_k\Lambdab\td{\Ub}^T_{k}}
	\leq 2\norm{\Lambdab}\sqrt{\frac{\mu}{\delta}}.\]
\end{lemma}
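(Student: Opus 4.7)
The strategy is to decompose $\Ub_k\Lambdab\Ub_k^T$ as a telescoping sum of projections onto nested subspaces via the bucket structure, and then apply Lemma~\ref{lem:non_dec1} to each projection separately.

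Concretely, let $B_1,\ldots,B_m$ be the buckets guaranteed by the well-separation, with common value $\lambda^{(j)}$ on bucket $B_j$, and let $b_j = |B_1|+\cdots+|B_j|$. Since indices within a bucket share a $\lambda$-value, and by convention putting $\lambda^{(m+1)}=0$, Abel summation (summation by parts) gives the identity
\begin{align*}
\Ub_k\Lambdab\Ub_k^T \;=\; \sum_{j=1}^m \lambda^{(j)}\, \Ub_{B_j}\Ub_{B_j}^T \;=\; \sum_{j=1}^m \bigl(\lambda^{(j)}-\lambda^{(j+1)}\bigr)\,\Ub_{b_j}\Ub_{b_j}^T,
\end{align*}
where $\Ub_{b_j}$ denotes the first $b_j$ columns of $\Ub$ (and similarly for $\tilde{\Ub}_{b_j}$). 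The same identity holds for the sketch with $\tilde{\Ub}$ in place of $\Ub$. Subtracting and applying the triangle inequality yields
\begin{align*}
\norm{\Ub_k\Lambdab\Ub_k^T-\tilde{\Ub}_k\Lambdab\tilde{\Ub}_k^T} \;\leq\; \sum_{j=1}^m \bigl|\lambda^{(j)}-\lambda^{(j+1)}\bigr|\;\norm{\Ub_{b_j}\Ub_{b_j}^T-\tilde{\Ub}_{b_j}\tilde{\Ub}_{b_j}^T}.
\end{align*}

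Next I would invoke Lemma~\ref{lem:non_dec1} at each bucket boundary $b_j$: because $\Lambdab$ is $\delta\sigma_1^2$-well-separated at every bucket boundary, the hypothesis of that lemma is satisfied, yielding $\norm{\Ub_{b_j}\Ub_{b_j}^T-\tilde{\Ub}_{b_j}\tilde{\Ub}_{b_j}^T}\leq 2\sqrt{\mu/\delta}$ for each $j\leq m$ (which is where the assumption $6\mu\leq\delta$ is used). Substituting and using monotonicity of the $\lambda^{(j)}$'s makes the weight sum telescope:
\begin{align*}
\sum_{j=1}^m \bigl|\lambda^{(j)}-\lambda^{(j+1)}\bigr| \;=\; \lambda^{(1)} \;=\; \norm{\Lambdab},
\end{align*}
(or $\lambda^{(m)}=\norm{\Lambdab}$, depending on the ordering convention). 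Combining gives the stated bound $2\norm{\Lambdab}\sqrt{\mu/\delta}$.

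\textbf{Main obstacle.} The proof is almost mechanical once the Abel summation identity is written down; there is no real technical difficulty. The one point to be careful about is verifying that the hypotheses of Lemma~\ref{lem:non_dec1} hold at \emph{every} bucket boundary $b_j$ simultaneously, i.e.\ that the $\delta\sigma_1^2$-separation of $\Lambdab$-values at bucket boundaries is the relevant gap for applying Wedin to $\Ab\Ab^T$ at index $b_j$. In the ambient context (the proof of Theorem~\ref{thm:sigma1}), $\Lambdab$ will be constructed from $\Sigmab_k^2$ precisely so that bucket boundaries coincide with spectral gaps of $\Ab$ of size $\geq\delta\sigma_1^2$, which is what makes Lemma~\ref{lem:non_dec1} applicable here. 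The rest is triangle inequality and telescoping.
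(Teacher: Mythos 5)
Your proof is correct and follows the paper's argument essentially verbatim: the same Abel-summation/telescoping decomposition of $\Ub_k\Lambdab\Ub_k^T$ into a non-negative combination of the projectors $\Ub_{b_j}\Ub_{b_j}^T$, the same application of Lemma~\ref{lem:non_dec1} term-by-term, and the same telescoping of $\sum_j|\lambda_{b_j}-\lambda_{b_{j+1}}|$ to $\norm{\Lambdab}$. You also correctly flag the one subtlety the paper leaves implicit, namely that Lemma~\ref{lem:non_dec1} needs a spectral gap of $\Ab$ at each boundary index $b_j$, which in the intended use (Theorem~\ref{thm:sigma1}) is supplied because $\Lambdab$ is built directly from $\Sigmab_k^2$ and the buckets are chosen along spectral gaps of $\Ab$.
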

\begin{proof}
	We denote by ${b_j}$ the largest index that falls in bucket $B_j$.  
	Let us set $\lambda_{B_{m+1}} =  0$ for convenience. Since
	\begin{align*}
	\Ub_{b_j}\Ub_{b_j}^T  = \sum_{i =
		1}^{b_j}\colof{u}{i}{\colof{u}{i}}^T,
	\end{align*}
	we can write,
	\begin{align*}
	\Ub_{k} \Lambdab \Ub_{k}^T  =  \sum_{j=1}^m(\lambda_{b_j}
	- \lambda_{b_{j+1}})\Ub_{b_j}\Ub_{b_j}^T
	\end{align*}
	and similarly for $\td{\Ub}_{k} \Lambdab\td{\Ub}_{k}^T$. So we can
	write
	\begin{align*}
	\Ub_{k} \Lambdab \Ub_{k}^T -  \td{\Ub}_{k} \Lambdab\td{\Ub}_{k}^T
	= \sum_{j=1}^{m}(\lambda_{b_j}-\lambda_{b_{j+1}})(\Ub_{b_j}\Ub_{b_j}^T-\td{\Ub}_{b_j}\td{\Ub}_{b_j}^T).
	\end{align*}
	Therefore by using Lemma \ref{lem:non_dec1}, 
	\begin{align*}
	\norm{ \Ub_{k} \Lambdab \Ub_{k}^T -  \td{\Ub}_{k} \Lambdab\td{\Ub}_{k}^T}
	=
	\sum_{j=1}^{m}|\lambda_{b_j}-\lambda_{b_{j+1}}|\norm{(\Ub_{b_j}\Ub_{b_j}^T-\td{\Ub}_{b_j}\td{\Ub}_{b_j}^T)}
	\leq 2\sqrt\frac{\mu}{\delta} \sum_{j=1}^m|\lambda_{b_j}-\lambda_{b_{j+1}}|,
	\end{align*}
	where the second inequality is by the triangle inequality and by applying Lemma~\ref{thm:sigma1}.
	Thus proving that $\sum_{j=1}^m|\lambda_{b_j}-\lambda_{b_{j+1}}|
	\leq  \norm{\Lambdab}$ would imply the claim. Indeed
	
	\[ \sum_{j=1}^k|\lambda_{b_j}-\lambda_{b_{j+1}}| =
	\sum_{j=1}^k(\lambda_{b_j}-\lambda_{b_{j+1}}) =  \lambda_{b_1} -
	\lambda_{b_{k+1}} \le \norm{\Lambdab}.\]
\end{proof}

Note that though Lemma \ref{lem:diag2_simple} assumes that the eigenvalues in each bucket are equal, the key step where we apply Wedin's theorem (Lemma~\ref{lem:wed}) only uses the fact that there is some separation between the eigenvalues in different buckets. Lemma \ref{lem:diag2} in the appendix does this generalization by relaxing the assumption that the eigenvalues in each bucket are equal. The final proof of Theorem~\ref{thm:sigma1} works by splitting the eigenvalues into different buckets or intervals such that all the eigenvalues in the same interval have small separation, and the eigenvalues in different intervals have large separation. We then use Lemma \ref{lem:diag2} and Lemma \ref{lem:diag1} along with the triangle inequality to bound the perturbation due to the well-separated part and the residual part respectively.

The bound corresponding to \eqref{item:3} is given in following theorem:
\begin{restatable}{theorem}{sigmatwo}
	\label{thm:sigma2}
	Let $\kappa_k$ denote the condition
	number $\kappa_k =\sigma_1^2/\sigma_k^2$.
	Let $\mu \leq \min(\Delta^3(k\kappa_k)^2, 1/(20 k \kappa_k))$. Then,
	\[ \norm{\Ub_{k}\Sigmab_k^{-2}\Ub^T_{k} - \td{\Ub}_{
			k}\Sigmab_k^{-2}\td{\Ub}^T_{k}} \leq
	\frac{8}{\sigma_k^2}(\mu k \kappa_k)^{1/3}. \]
\end{restatable}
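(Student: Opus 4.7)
\textbf{Proof plan for Theorem \ref{thm:sigma2}.} My plan is to mirror the bucketing strategy used for Theorem \ref{thm:sigma1}, applied to the inverse eigenvalues $\{1/\sigma_i^2\}_{i=1}^k$ that form the diagonal of $\Sigmab_k^{-2}$, while carefully tracking the fact that Wedin's theorem (Lemma \ref{lem:wed}, Lemma \ref{lem:non_dec1}) is driven by gaps in the eigenvalues of $\Ab\Ab^T$ (namely $\sigma_i^2$), not by gaps in the inverse eigenvalues. The relevant operator norm is now $\norm{\Sigmab_k^{-2}}=1/\sigma_k^2$, so every bound will carry this prefactor instead of $\sigma_1^2$, and the $\kappa_k$ in the final rate will arise purely from converting between the two scales.

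First I would decompose $\Sigmab_k^{-2}=\Lambdab+\Omegab$ where $\Lambdab$ is monotone and $(\delta/\sigma_k^2)$-well-separated and $\Omegab$ is a diagonal residual with $\norm{\Omegab}\leq k\delta/\sigma_k^2$. Concretely, sweeping through the increasing sequence $1/\sigma_1^2\leq\cdots\leq 1/\sigma_k^2$ and opening a new bucket whenever the next value jumps by at least $\delta/\sigma_k^2$ yields at most $k$ buckets, each of within-bucket spread at most $k\delta/\sigma_k^2$; setting $\lambda_i$ to the bucket representative and $\Omegab=\Sigmab_k^{-2}-\Lambdab$ achieves the decomposition, exactly as in the full proof of Theorem \ref{thm:sigma1}. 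Lemma \ref{lem:diag1} then handles $\Omegab$ directly: $\norm{\Ub_k\Omegab\Ub_k^T-\td{\Ub}_k\Omegab\td{\Ub}_k^T}\leq 2\norm{\Omegab}\leq 2k\delta/\sigma_k^2$.

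For the well-separated part $\Lambdab$, I would apply the generalization of Lemma \ref{lem:diag2_simple} (i.e.\ Lemma \ref{lem:diag2}). The key step is invoking Lemma \ref{lem:non_dec1} on the top-$b_j$ subspace, and here the scale conversion is the main obstacle. A bucket gap $1/\sigma_{b_j+1}^2-1/\sigma_{b_j}^2\geq\delta/\sigma_k^2$ translates into an eigenvalue gap for $\Ab\Ab^T$ of
\[
\sigma_{b_j}^2-\sigma_{b_j+1}^2=\sigma_{b_j}^2\sigma_{b_j+1}^2\bigl(1/\sigma_{b_j+1}^2-1/\sigma_{b_j}^2\bigr)\geq \sigma_k^4\cdot \delta/\sigma_k^2=(\delta/\kappa_k)\sigma_1^2,
\]
i.e.\ the effective normalized gap is $\delta/\kappa_k$, not $\delta$. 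Consequently Lemma \ref{lem:non_dec1} gives $\norm{\Ub_{b_j}\Ub_{b_j}^T-\td{\Ub}_{b_j}\td{\Ub}_{b_j}^T}\leq 2\sqrt{\mu\kappa_k/\delta}$, and the telescoping estimate of Lemma \ref{lem:diag2_simple} yields
\[
\bigl\|\Ub_k\Lambdab\Ub_k^T-\td{\Ub}_k\Lambdab\td{\Ub}_k^T\bigr\|\leq 2\norm{\Lambdab}\sqrt{\mu\kappa_k/\delta}\leq (4/\sigma_k^2)\sqrt{\mu\kappa_k/\delta},
\]
where the final inequality uses that the total variation of the monotone sequence $\lambda_i=1/\sigma_i^2$ is at most $2/\sigma_k^2$.

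Finally I would combine the two estimates via the triangle inequality, obtaining
\[
\bigl\|\Ub_k\Sigmab_k^{-2}\Ub_k^T-\td{\Ub}_k\Sigmab_k^{-2}\td{\Ub}_k^T\bigr\|\leq \frac{1}{\sigma_k^2}\Bigl(2k\delta+4\sqrt{\mu\kappa_k/\delta}\Bigr),
\]
and optimize over $\delta$. Setting $\delta\asymp (\mu\kappa_k/k^2)^{1/3}$ balances the two terms and produces the advertised $O\bigl((1/\sigma_k^2)(\mu k\kappa_k)^{1/3}\bigr)$ bound. The hypotheses $\mu\leq 1/(20k\kappa_k)$ and $\mu\leq\Delta^3(k\kappa_k)^2$ are exactly what is needed to ensure this optimal $\delta$ satisfies both $\mu\kappa_k<\delta$ (so Wedin applies at every bucket boundary) and $\delta<\Delta$ (so the bucketing does not cross the principal-subspace gap and the Weyl-based preservation of the top-$k$ subspace from Lemma \ref{lem:wed} remains in force). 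The $\kappa_k$-fold loss in converting $1/\sigma^2$-gaps into $\sigma^2$-gaps is precisely the source of the extra $\kappa_k^{1/3}$ factor relative to Theorem \ref{thm:sigma1} and of the tightening of the admissibility condition on $\mu$ by a factor of $\kappa_k$.
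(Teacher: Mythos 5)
Your proof is correct and arrives at the same rate, but it takes a genuinely different decomposition from the paper's. The paper reuses the $\sigma^2$-based bucketing $B(\Ab,\delta)$ built for Theorem~\ref{thm:sigma1}: it partitions $[k]$ by gaps in $\sigma_i^2$ with threshold $\delta\sigma_1^2$, then defines $\Lambdab$ to be constant on each bucket at the value $1/\sigma_{b_j}^2$. With this choice Lemma~\ref{lem:diag2} applies verbatim (giving $\frac{4}{\sigma_k^2}\sqrt{\mu/\delta}$ for the well-separated part), and the entire $\kappa_k$ penalty surfaces in the residual, since $\norm{\Omegab}=\max_{i\in B_j}\frac{\sigma_i^2-\sigma_{b_j}^2}{\sigma_i^2\sigma_{b_j}^2}\leq k\kappa_k\delta/\sigma_k^2$. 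You instead re-bucket directly on the inverse eigenvalues $1/\sigma_i^2$ with threshold $\delta/\sigma_k^2$, which makes the residual clean ($\norm{\Omegab}\leq k\delta/\sigma_k^2$, no $\kappa_k$) but pushes the $\kappa_k$ penalty into the Wedin step through your gap conversion $\sigma_{b_j}^2-\sigma_{b_j+1}^2\geq(\delta/\kappa_k)\sigma_1^2$. The optimizing $\delta$'s in the two arguments therefore differ by a factor of $\kappa_k$, yet both produce the advertised $(\mu k\kappa_k)^{1/3}/\sigma_k^2$ rate (yours with a marginally sharper constant). A conceptual advantage of your version is that the conversion makes the origin of the extra $\kappa_k^{1/3}$ relative to Theorem~\ref{thm:sigma1} explicit; a small cost is that Lemma~\ref{lem:diag2} is stated relative to the $\sigma^2$-bucketing, so you would need to restate its telescoping argument for your bucketing, though the proof is word-for-word identical.

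One imprecision in your closing paragraph: the optimal $\delta\asymp(\mu\kappa_k/k^2)^{1/3}$ need not satisfy $\delta<\Delta$ under $\mu\leq\Delta^3(k\kappa_k)^2$; that hypothesis only yields $\delta\lesssim\Delta\kappa_k$, which can exceed $\Delta$. Fortunately $\delta\leq\Delta\kappa_k$ is exactly what your framework requires: the Wedin application at the final boundary $b_m=k$ uses $\sigma_k^2-\sigma_{k+1}^2\geq\Delta\sigma_1^2$, and you only need this to dominate the converted inter-bucket gap $(\delta/\kappa_k)\sigma_1^2$, i.e.\ $\delta/\kappa_k\leq\Delta$. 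So the proof goes through; the sentence should read $\delta\leq\Delta\kappa_k$ rather than $\delta<\Delta$.
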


The proof uses similar ideas to those of Theorem~\ref{thm:sigma1} and is deferred to Section \ref{sec:prelim_full}.

\section{Pointwise guarantees for Anomaly Detection}\label{sec:point}

Let the input $\Ab \in \R^{n \times d}$ have SVD $\Ub \Sigmab \Vb^T$ be its SVD.
Write $\rowa{i}$ in the basis of right singular vectors as $\rowa{i} =
\sum_{j = 1}^d\alpha_jv^{(j)}$. Recall that we defined its rank-$k$
leverage score and projection distance respectively as
\begin{align}
L^k(i) & = \sum_{j = 1}^k \frac{\alpha_j^2}{\sigma_j^2} = \sqnorm{\Sigmab_k^{-1}\Vb^T_k\rowa{i}},\label{eq:l-onl}\\
T^k(i) & = \sum_{j = k+1}^d \alpha_j^2 = \rowa{i}^T(\Ib -\Vb_k\Vb^T_k)\rowa{i}.\label{eq:t-onl}
\end{align}

To approximate these scores, it is natural to use a
row-space approximation, or rather a
sketch $\td{\Ab} \in R^{\ell \times d}$ that approximates the
covariance matrix $\Ab^T\Ab$ as below:
\begin{align}
  \label{eq:cov}
  \norm{\Ab^T\Ab - \td{\Ab}^T\td{\Ab}} \leq \mu \sigma_1^2.
\end{align}
Given such a sketch, our approximation is the following: compute $\td{\Ab} =
\td{\Ub}\td{\Sigmab}\td{\Vb}^t$. The estimates for $L^k(i)$ and $T^k(i)$ respectively are
\begin{align}
  \td{L}^k(i) &  = \sqnorm{\td{\Sigmab}_k^{-1}\td{\Vb}^T_k\rowa{i}},\label{eq:lt-onl}\\
  \td{T}^k(i) &  = \rowa{i}^T(\Ib -\tilde\Vb_k\tilde\Vb^T_k)\rowa{i}.\label{eq:tt-onl}
\end{align}

Given the sketch $\td{\Ab}$, we expect that $\td{\Vb}_k$ is a
good approximation to the row space spanned by $\Vb_k$, since the
covariance matrices of the rows are close. In contrast the
columns spaces of $\Ab$ and $\td{\Ab}$ are hard to compare since they
lie in $\R^n$ and $\R^\ell$ respectively. The closeness of the row
spaces follows from the results from Section \ref{sec:prelims} but
applied to $\Ab^T$ rather than $\Ab$ itself. The results there require that  $\norm{\Ab\Ab^T -
  \td{\Ab}\td{\Ab}^T}$ is small, and Equation \eqref{eq:cov} implies that
this assumption holds for $\Ab^T$.

We first state our approximation guarantees for $T^k$.

\begin{theorem}
  \label{thm:t-online}
  Assume that $\Ab$ is $(k, \Delta)$-separated. Let $\eps <1/3$ and let $\td{\Ab}$ satisfy
  Equation \eqref{eq:cov} for $\mu = \eps^2\Delta$. Then for every $i$,
  \begin{align*}
    |T^k(i) - \td{T}^k(i)| \leq \eps\sqnorm{\rowa{i}}.
  \end{align*}
\end{theorem}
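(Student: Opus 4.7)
The plan is straightforward: reduce the pointwise error to the operator-norm distance between the projection onto the top-$k$ right singular subspace of $\Ab$ and the corresponding projection for $\td{\Ab}$, then control that distance via Lemma~\ref{lem:wed} applied to $\Ab^T$.

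First I would expand the difference using the formulas \eqref{eq:t-onl} and \eqref{eq:tt-onl}:
\[
T^k(i) - \td{T}^k(i) = \rowa{i}^T\bigl(\td{\Vb}_k\td{\Vb}_k^T - \Vb_k\Vb_k^T\bigr)\rowa{i},
\]
so by Cauchy--Schwarz and the definition of the operator norm,
\[
|T^k(i) - \td{T}^k(i)| \leq \bigl\|\Vb_k\Vb_k^T - \td{\Vb}_k\td{\Vb}_k^T\bigr\| \cdot \sqnorm{\rowa{i}}.
\]
It thus suffices to show $\|\Vb_k\Vb_k^T - \td{\Vb}_k\td{\Vb}_k^T\| \leq \eps$.

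Next I would obtain this subspace bound by invoking Lemma~\ref{lem:wed} on $\Ab^T$ instead of $\Ab$. The right singular vectors of $\Ab$ are the left singular vectors of $\Ab^T$, so $\Vb_k$ (resp.\ $\td{\Vb}_k$) plays the role of the ``$\Ub_k$'' in Lemma~\ref{lem:wed} for $\Ab^T$ (resp.\ $\td{\Ab}^T$). The singular values of $\Ab^T$ are the same as those of $\Ab$, so $\Ab^T$ is also $(k,\Delta)$-separated with the same $\sigma_1$. Moreover, the hypothesis \eqref{eq:cov} of the theorem is exactly the hypothesis \eqref{eq:setup-0} required by Lemma~\ref{lem:wed} when applied to $\Ab^T$, since $(\Ab^T)(\Ab^T)^T = \Ab^T\Ab$ and likewise for $\td{\Ab}$. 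For $\eps < 1/3$ and $\mu = \eps^2 \Delta$ we have $\mu \leq \Delta/6$, so the lemma applies and yields
\[
\bigl\|\Vb_k\Vb_k^T - \td{\Vb}_k\td{\Vb}_k^T\bigr\| \leq 2\sqrt{\mu/\Delta} = 2\eps.
\]
Combining with the previous display gives $|T^k(i) - \td{T}^k(i)| \leq 2\eps \sqnorm{\rowa{i}}$; absorbing the constant factor into $\eps$ (e.g.\ by invoking the lemma with $\mu = \eps^2\Delta/4$, which still satisfies $\mu \leq \Delta/6$ for $\eps < 1/3$) recovers the stated bound.

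There is no real obstacle here: the only subtle point is the change of perspective from rows to columns, i.e.\ recognizing that a row-space (covariance) approximation of $\Ab$ is precisely a column-space approximation of $\Ab^T$, which lets us reuse Lemma~\ref{lem:wed} verbatim rather than reproving Wedin's theorem for the right singular subspace. Everything else reduces to the standard fact that an orthogonal projection $\Ib - \Vb_k\Vb_k^T$ is a quadratic form whose perturbation is controlled by the operator-norm perturbation of the projector.
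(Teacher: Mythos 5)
Your proof follows exactly the same route as the paper's: expand $T^k(i)-\td{T}^k(i)$ as a quadratic form in $\rowa{i}$, bound it by $\|\Vb_k\Vb_k^T-\td{\Vb}_k\td{\Vb}_k^T\|\cdot\sqnorm{\rowa{i}}$, and then apply Lemma~\ref{lem:wed} to $\Ab^T$, noting that \eqref{eq:cov} is precisely \eqref{eq:setup-0} for $\Ab^T$. You are also right that with $\mu=\eps^2\Delta$ Lemma~\ref{lem:wed} literally yields $2\sqrt{\mu/\Delta}=2\eps$, so the paper's stated bound of $\eps\sqnorm{\rowa{i}}$ is off by a factor of $2$ as written; your suggested fix (take $\mu=\eps^2\Delta/4$, which still satisfies $\mu\le\Delta/6$ when $\eps<1/3$) is the cleanest way to recover the stated constant.
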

\begin{proof}
  We have
  \begin{align*}
    |T^k(i) - \td{T}^k(i)| & = |\rowa{i}^T(\Ib -\Vb_k\Vb^T_k)\rowa{i} -
    \rowa{i}^T(\Ib - \tilde{\Vb}_k\tilde{\Vb}^T_k)\rowa{i}| \\
    & = |\rowa{i}(\tilde{\Vb}_k\tilde{\Vb}^T_k -
    \Vb_k\Vb_k^T)\rowa{i}| \\
    & \leq \norm{\tilde{\Vb}_k\tilde{\Vb}^T_k - \Vb_k\Vb_k^T}  \sqnorm{\rowa{i}}\\
    & \leq \eps \sqnorm{\rowa{i}}
  \end{align*}
  where in the last line we use Lemma \ref{lem:wed}, applied to the
  projection onto columns of $\Ab^T$, which are the rows of $\Ab$. The condition
  $\mu < \Delta/6$ holds since $\mu = \eps^2\Delta$ for $\eps < 1/3$.
\end{proof}

How meaningful is the above additive approximation guarantee? For each
row, the additive error is $\eps\sqnorm{\rowa{i}}$. It might be
that $T^k(i) \ll \eps\sqnorm{\rowa{i}}$ which happens when the row is
almost entirely within the principal subspace. But in this case,
the points are not anomalies, and we have $\td{T}^k(i) \leq 2\eps
\sqnorm{\rowa{i}}$, so these points will not seem anomalous from the approximation either.
The interesting case is when $T^k(i) \geq \beta \sqnorm{\rowa{i}}$ for
some constant $\beta$ (say $1/2$). For such points, we have
$\td{T}^k(i) \in (\beta \pm \epsilon)\sqnorm{\rowa{i}}$, so we
indeed get multiplicative approximations.

Next we give our approximation guarantee for $L^k$, which relies on the perturbation bound in Theorem \ref{thm:sigma2}.

\begin{theorem}
  \label{thm:l-online}
  Assume that $\Ab$ is $(k, \Delta)$-separated. Let $\td{\Ab}$ be as in Equation \eqref{eq:cov}. Let
  \begin{align*}
    \eps \leq \min\Big(\kappa_k\Delta, \frac{1}{k}\Big)\sr(\Ab)\kappa_k,\quad \mu = \frac{\eps^3k^2}{10^3 \sr(A)^3\kappa_k^4}.
  \end{align*}
  Then for every $i$,
  \begin{align}
    |L^k(i) - \td{L}^k(i)| \leq \eps k
    \frac{\sqnorm{\rowa{i}}}{\|\Ab\|_F^2}. \label{eq:l-guarantee}
  \end{align}
\end{theorem}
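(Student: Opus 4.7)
The plan is to reduce the pointwise bound on $|L^k(i) - \td{L}^k(i)|$ to a single operator-norm perturbation bound, namely item \eqref{item:3} from Section \ref{sec:prelims}, applied to $\Ab^T$ rather than $\Ab$. Rewriting the definitions in \eqref{eq:l-onl} and \eqref{eq:lt-onl} as quadratic forms,
\begin{align*}
L^k(i) = \rowa{i}^T \Vb_k \Sigmab_k^{-2} \Vb_k^T \rowa{i},\qquad
\td{L}^k(i) = \rowa{i}^T \td{\Vb}_k \td{\Sigmab}_k^{-2} \td{\Vb}_k^T \rowa{i},
\end{align*}
so by the variational characterization of operator norm,
\begin{align*}
|L^k(i) - \td{L}^k(i)| \leq \norm{\Vb_k \Sigmab_k^{-2} \Vb_k^T - \td{\Vb}_k \td{\Sigmab}_k^{-2} \td{\Vb}_k^T}\cdot \sqnorm{\rowa{i}}.
\end{align*}

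Next I would observe that the left singular vectors of $\Ab^T$ are precisely the columns of $\Vb$, and that the hypothesis \eqref{eq:cov} is exactly the version of \eqref{eq:setup-0} for $\Ab^T$. Moreover, the spectrum of $\Ab^T$ coincides with that of $\Ab$, so $\Ab^T$ is $(k,\Delta)$-separated with the same condition number $\kappa_k$. Hence Theorem \ref{thm:sigma2}, applied to $\Ab^T$, yields
\begin{align*}
\norm{\Vb_k \Sigmab_k^{-2} \Vb_k^T - \td{\Vb}_k \td{\Sigmab}_k^{-2} \td{\Vb}_k^T} \leq \frac{8}{\sigma_k^2}(\mu k \kappa_k)^{1/3},
\end{align*}
provided the hypothesis $\mu \leq \min(\Delta^3(k\kappa_k)^2,\, 1/(20 k \kappa_k))$ holds.

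The remaining step is bookkeeping: substitute $\sigma_k^2 = \sigma_1^2/\kappa_k$ and $\|\Ab\|_F^2 = \sigma_1^2 \sr(\Ab)$ to convert the bound into the normalized form appearing in \eqref{eq:l-guarantee}. After simplification, requiring
\begin{align*}
\frac{8 \kappa_k}{\sigma_1^2}(\mu k \kappa_k)^{1/3} \leq \frac{\eps k}{\sigma_1^2 \sr(\Ab)}
\end{align*}
leads precisely to the choice $\mu \leq \eps^3 k^2/(10^3 \sr(\Ab)^3 \kappa_k^4)$ stated in the theorem (up to the constant). Finally, I would verify that this value of $\mu$ automatically satisfies the two conditions needed for Theorem \ref{thm:sigma2}; this is where the upper bound on $\eps$ in the theorem statement (i.e., $\eps \leq \min(\kappa_k\Delta, 1/k) \sr(\Ab)\kappa_k$) comes into play, ensuring both $\mu \leq \Delta^3(k\kappa_k)^2$ and $\mu \leq 1/(20 k\kappa_k)$.

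The main obstacle is this last bookkeeping: making sure that the regime of $\eps$ in the hypothesis of Theorem \ref{thm:l-online} is exactly the regime in which the polynomial in $\eps^3$ we need for $\mu$ also fits within the separation/conditioning constraints required by Theorem \ref{thm:sigma2}. This is algebraically fiddly because of the interplay of $\Delta$, $\kappa_k$, $\sr(\Ab)$ and $k$, but no new ideas are needed beyond the perturbation bound already established.
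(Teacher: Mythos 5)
Your reduction to an operator-norm bound is correct, and the bookkeeping plan at the end is sound, but there is a real gap in the middle step. You claim that Theorem~\ref{thm:sigma2} applied to $\Ab^T$ directly yields
\[
\norm{\Vb_k \Sigmab_k^{-2} \Vb_k^T - \td{\Vb}_k \td{\Sigmab}_k^{-2} \td{\Vb}_k^T} \leq \frac{8}{\sigma_k^2}(\mu k \kappa_k)^{1/3}.
\]
It does not: look again at the statement of Theorem~\ref{thm:sigma2} --- the \emph{same} diagonal matrix $\Sigmab_k^{-2}$ appears in both terms, so (transposed) it controls $\norm{\Vb_k \Sigmab_k^{-2} \Vb_k^T - \td{\Vb}_k \Sigmab_k^{-2} \td{\Vb}_k^T}$ and says nothing directly about the operator you need, which has $\td{\Sigmab}_k^{-2}$ sandwiched between the $\td{\Vb}_k$'s. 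The paper handles this by a triangle-inequality split,
\[
\norm{\Vb_k\Sigmab_k^{-2}\Vb_k^T - \td{\Vb}_k\td{\Sigmab}_k^{-2}\td{\Vb}_k^T}
\leq \norm{\Vb_k\Sigmab_k^{-2}\Vb_k^T - \td{\Vb}_k\Sigmab_k^{-2}\td{\Vb}_k^T}
+ \norm{\td{\Vb}_k(\Sigmab_k^{-2} - \td{\Sigmab}_k^{-2})\td{\Vb}_k^T},
\]
applying Theorem~\ref{thm:sigma2} to the first term only; the second term is then bounded by $\max_{i\le k}|\sigma_i^{-2}-\td{\sigma}_i^{-2}|\le 2\mu\kappa_k/\sigma_k^2$ via Weyl's inequality (which also requires verifying $\td{\sigma}_k^2\ge \sigma_k^2/2$). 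That missing piece is not merely cosmetic --- without it the displayed inequality in your proof is simply false as stated, even though the extra term is lower order and the final bound and constants still work out once it is added. Everything after that point in your plan (converting $\sigma_k^2 = \sigma_1^2/\kappa_k$, $\normfsq{\Ab}=\sigma_1^2\sr(\Ab)$, and checking the hypotheses of Theorem~\ref{thm:sigma2} from the stated range of $\eps$) matches the paper.
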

\begin{proof}
  Since  $L^k(i)  = \sqnorm{\Sigmab_k^{-1}\Vb^T_k\rowa{i}}$,
  $\td{L}^k(i)  = \sqnorm{\td{\Sigmab}_k^{-1}\td{\Vb}^T_k\rowa{i}}$,
  it will suffice to show that
      \begin{align}
      \norm{\Vb_k\Sigmab_k^{-2}\Vb_k^T - \tilde{\Vb_k}\tilde{\Sigmab}^{-2}\tilde{\Vb}_k^T}
      \leq \frac{\eps k}{\|\Ab\|_F^2}\label{eq:mhnbs}.
      \end{align}
  To prove inequality \eqref{eq:mhnbs}, we will bound the LHS as
  \begin{align}
    \norm{\Vb_k\Sigmab^{-2}\Vb_k^T -
      \tilde{\Vb}_k\tilde{\Sigmab}^{-2}\tilde{\Vb}_k^T} \leq
    \norm{\Vb_k\Sigmab_k^{-2}\Vb_k^T - \tilde{\Vb}_k^T\Sigmab_k^{-2}\tilde{\Vb}_k^T} +
    \norm{\tilde{\Vb}_k(\Sigmab^{-2} - \tilde{\Sigmab}^{-2})\tilde{\Vb}_k^T}\label{eq:mhnbs-0}
  \end{align}
  For the first term, we apply Theorem \ref{thm:sigma2} to $\Ab^T$ to get
  \begin{align}
    \norm{\Vb_k\Sigmab^{-2}\Vb_k^T -
      \tilde{\Vb}_k^T\Sigmab_k^{-2}\tilde{\Vb}_k^T} \leq
    \frac{8}{\sigma_k^2}(\mu k \kappa_k)^{1/3}. \label{eq:mhnbs-1}
  \end{align}
  We bound the second term as
  \begin{align}
        \norm{\tilde{\Vb}_k(\Sigmab^{-2} -
          \tilde{\Sigmab}^{-2})\tilde{\Vb}_k^T} & =
        \max_{i \in [k]}|\sigma_i^{-2} - \tilde{\sigma}_i^{-2}| = \max_{i \in [k]}\frac{|\sigma_i^2 -
          \tilde{\sigma}_i^2|}{\sigma_i^2\tilde{\sigma}_i^2} \leq
        \frac{\mu\sigma_1^2}{\sigma_k^2\tilde{\sigma}_k^2}
  \end{align}
  where we use Weyl's inequality to bound $\sigma_i^2 -  \tilde{\sigma}_i^2$.
  Using Weyl's inequality and the fact that $\mu \geq 1/(20k\kappa_k)$,
  \begin{align}
    \tilde{\sigma}_k^2 & \geq \sigma_k^2 - \mu\sigma_1^2 \geq
  \sigma_k^2 - \sigma_k^2/10 k \geq \sigma_k^2/2,\notag\\
    \frac{\mu\sigma_1^2}{\sigma_k^2\tilde{\sigma}_k^2} & \leq \frac{2\mu\sigma_1^2}{\sigma_k^4}  =  \frac{2\mu \kappa_k}{\sigma_k^2}\label{eq:mhnbs-2}
  \end{align}

  Plugging Equations \eqref{eq:mhnbs-1} and \eqref{eq:mhnbs-2} into
  Equation \eqref{eq:mhnbs-0} gives
    \begin{align}
    \norm{\Vb_k\Sigmab^{-2}\Vb_k^T -
      \tilde{\Vb}_k\tilde{\Sigmab}^{-2}\tilde{\Vb}_k^T} & \leq
    \frac{1}{\sigma_k^2}(8(\mu k \kappa_k)^{1/3} + 2\mu \kappa_k) \leq
    \frac{10}{\sigma_k^2}(\mu k \kappa_k)^{1/3}\label{eq:use-lemma}\\
    & \leq \frac{10}{\sigma_k^2}
    \Big(\frac{\eps^3k^3}{10^3\sr(\Ab)^3\kappa_k^3}\Big)^{1/3} \leq \frac{k\eps}{\sigma_k^2\kappa_k \sr(\Ab)} = \frac{k \eps}{\|\Ab\|_F^2}.\nonumber
    \end{align}
    Equation \eqref{eq:use-lemma} follows by Theorem  \ref{thm:sigma2}.
    The conditions on $\mu$ needed to apply it are
    guaranteed by our choice of $\mu$ and $\eps$.
\end{proof}

To interpret this guarantee, consider the setting when all the points
have roughly the same $2$-norm. More precisely, if for some constant $C$
\[ \frac{\sqnorm{\rowa{i}}}{\|\Ab\|_F^2} \leq \frac{C}{i} \]
then Equation \eqref{eq:l-guarantee} gives
  \begin{align*}
    |L^k(i) - \td{L}^k(i)| \leq C\eps {k}/{n}.
  \end{align*}
  Note that $k$ is a constant whereas $n$ grows as more points come
  in. As mentioned in the discussion following Theorem~\ref{thm:FD}, the points which are considered
  outliers are those where $L^k(i) \gg \frac{k}{n}$. 
  For the parameters setting, if we let $\kappa_k =O(1)$ and $\sr(A) =
\Theta(k)$, then our bound on $\epsilon$ reduces to $\eps \leq
\min(\Delta, 1/k)$, and our choice of $\mu$ reduces to $\mu \approx \eps^3/k$.

To efficiently compute a sketch that satisfies \eqref{eq:cov}, we can use Frequent
Directions \citep{liberty2013simple}. We use the improved analysis of Frequent Directions in \citet{Liberty16}:
\begin{theorem}\label{thm:fd}
	\citep{Liberty16}
	There is an algorithm that takes the rows of $\Ab$ in
	streaming fashion and computes a sketch $\td{\Ab} \in \R^{\ell
		\times d}$ satisfying Equation \eqref{eq:cov} where $\ell =
	\sum_{i = k+1}^n\sigma_i^2/(\sigma_1^2 \mu)$.
\end{theorem}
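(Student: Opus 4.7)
The plan is to not re-prove this theorem from scratch but rather to appeal to the existing analysis of the Frequent Directions sketch in \citet{Liberty16}. So the ``proof'' would consist of (a) stating the algorithm in enough detail that the guarantee is meaningful in our setting, and (b) showing how the operator-norm guarantee in \citet{Liberty16} specializes to the form of Equation \eqref{eq:cov} with the claimed value of $\ell$.

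First I would describe Frequent Directions at a high level: maintain a sketch matrix $\td{\Ab}\in\R^{\ell\times d}$ initially zero; when a new row $\rowa{i}$ arrives, insert it into an empty row of $\td{\Ab}$, and whenever $\td{\Ab}$ becomes full, compute its SVD $\td{\Ab} = \td{\Ub}\td{\Sigmab}\td{\Vb}^T$, subtract $\td{\sigma}_\ell^2$ from each squared singular value, and reset $\td{\Ab} \leftarrow \sqrt{\td{\Sigmab}^2 - \td{\sigma}_\ell^2 \Ib}\,\td{\Vb}^T$. This keeps the last row zero, ready for the next input, and only requires $O(d\ell)$ space and amortized $O(d\ell)$ time per row.

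Next I would invoke the core analysis of \citet{Liberty16}, which shows that for every $k<\ell$ the sketch satisfies
\[
0 \preceq \Ab^T\Ab - \td{\Ab}^T\td{\Ab} \preceq \frac{\|\Ab - \Ab_k\|_F^2}{\ell - k}\,\Ib,
\]
where $\Ab_k$ is the best rank-$k$ approximation to $\Ab$. In particular, the operator norm $\|\Ab^T\Ab - \td{\Ab}^T\td{\Ab}\|$ is at most $\sum_{i>k}\sigma_i^2/(\ell - k)$. The proof of this bound, which is the main obstacle if one were to re-derive it, uses two potential-function arguments: an amortized charging of each shrinkage step's contribution to the operator norm against the Frobenius-norm budget lost, combined with the observation that the total Frobenius mass discarded is bounded by the tail mass of $\Ab$ itself.

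Finally, I would solve the inequality
\[
\frac{\sum_{i=k+1}^n \sigma_i^2}{\ell - k} \leq \mu \sigma_1^2
\]
for $\ell$, yielding $\ell \geq k + \sum_{i=k+1}^n \sigma_i^2/(\mu\sigma_1^2)$; taking $\ell$ as stated in the theorem then suffices (the additive $k$ can be absorbed since when $\ell<k$ the theorem statement is vacuous, and the slightly sharper analysis of \citet{Liberty16} removes the additive $k$). This produces a sketch of the required size that satisfies Equation \eqref{eq:cov}, completing the argument.
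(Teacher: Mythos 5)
Your proposal is correct and, since the paper offers no proof of Theorem~\ref{thm:fd} beyond the citation to \citet{Liberty16}, it does exactly what is intended: recall the Frequent Directions algorithm and specialize its operator-norm guarantee. Your observation that the guarantee $\normalfont\norm{\Ab^T\Ab - \td{\Ab}^T\td{\Ab}} \leq \normfsq{\Ab - \Ab_k}/(\ell - k)$ actually yields $\ell \geq k + \sum_{i>k}\sigma_i^2/(\mu\sigma_1^2)$, with an additive $k$ that the paper's statement omits, is a fair reading; the paper's value of $\ell$ is slightly imprecise, and your handling of it (absorb the additive $k$, or note $\ell$ must exceed $k$ anyway for the bound to be meaningful) is the right way to reconcile the two.
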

Let $\td{\Ab} = \td{\Ub}\td{\Sigmab}\td{\Vb}^T$.
The algorithm  maintains $\td{\Sigmab}, \td{\Vb}$. It uses
$O(d\ell)$ memory and requires time at most $O(d\ell^2)$ per row. The
total time for processing $n$ rows is $O(nd\ell)$. If  $\ell \ll d$,
this is a significant improvement over the naive algorithm in
both memory and time. If we use Frequent directions, we set
\begin{align}
  \label{eq:fd}
  \ell = \frac{\sum_{i= k+1}^d\sigma_i^2}{\sigma_1^2\mu}
  \end{align}
where $\mu$ is set according to Theorem \ref{thm:t-online} and \ref{thm:l-online}. This leads to $\ell=\poly(k,\sr(\Ab),\kappa_k,\Delta,\epsilon^{-1})$. Note that this does not depend on $d$, and hence is considerably smaller for our parameter settings of interest.

\subsection{The Online Setting}\label{sec:online}

We now consider the online scenario where the leverage scores and projection distances are defined only  with respect to the input seen so far. Consider again the motivating example where each machine in a data center produces streams of measurements. Here, it is desirable to determine the anomaly score of a data point online as it arrives, with respect to the data produced so far, and by a streaming algorithm. We first define the online anomaly measures. Let $\Ab \in \R^{(i-1) \times d}$ denote the matrix of
points that have arrived so far (excluding $\rowa{i}$) and let $\Ub
\Sigmab \Vb^T$ be its SVD. Write $\rowa{i}$ in the basis of right
singular vectors as $\rowa{i} = \sum_{j = 1}^d\alpha_j\colof{v}{j}$. We
define its rank-$k$ leverage score and projection distance
respectively as
\begin{align}
l^k(i) & = \sum_{j = 1}^k \frac{\alpha_j^2}{\sigma_j^2} = \sqnorm{\Sigmab_k^{-1}\Vb^T_k\rowa{i}},\label{eq:l-online}\\
t^k(i) & = \sum_{j = k+1}^d \alpha_j^2 = \rowa{i}^T(\Ib -\Vb_k\Vb^T_k)\rowa{i}.\label{eq:t-online}
\end{align}

Note that in the online setting there is a one-pass streaming algorithm that can
compute both these scores, using time $O(d^3)$ per
row and $O(d^2)$ memory. This algorithm maintains the $d \times d$ covariance matrix $\Ab^T\Ab$ and
computes its SVD to get $\Vb_k, \Sigmab_k$. From these, it is easy to
compute both $l^k$ and $t^k$.

All our guarantees from the previous subsection directly carry over to this online scenario, allowing us to significantly improve over this baseline. This is because the guarantees are pointwise, hence they also hold for every data point if the scores are only defined for the input seen so far. This implies a one-pass algorithm which can approximately compute the anomaly scores (i.e., satisfies the guarantees in Theorem \ref{thm:t-online} and \ref{thm:l-online}) and uses space $O(d\ell)$ and requires time $O(nd\ell)$ for $\ell=\poly(k,\sr(\Ab),\kappa_k,\Delta,\epsilon^{-1})$ (independent of $d$).

The $\Omega(d)$ lower bounds in Section \ref{sec:lower_bound} show that
one cannot hope for sublinear dependence on $d$ for pointwise
estimates. In the next section, we show how to eliminate the
dependence on $d$ in the space requirement of the algorithm in exchange
for weaker guarantees.

\section{Average-case Guarantees for Anomaly Detection}
\label{sec:avg}

In this section, we present an approach which
circumvents the $\Omega(d)$ lower bounds by relaxing the pointwise
approximation guarantee. 

Let $\Ab = \Ub\Sigmab\Vb^T$ be the SVD of $\Ab$. The outlier scores we wish to compute are
\begin{align}
L^k(i) & = \sqnorm{e_i^T\Ub_k} = \sqnorm{\Sigmab_k^{-1}\Vb_k^T\rowa{i}}\label{eq:L-batch0},\\
T^k(i) & = \sqnorm{\rowa{i}} - \sqnorm{e_i^T\Ub_k\Sigmab_k} = \sqnorm{\rowa{i}} -  \sqnorm{\Vb_k^T\rowa{i}}\label{eq:T-batch0}.
\end{align}
Note that these scores are defined with respect to the principal space of the entire matrix.
We present a guarantee for any sketch
$\td{\Ab} \in R^{n \times \ell}$ that approximates the
column space of $\Ab$, or equivalently the covariance matrix
$\Ab\Ab^T$ of the row vectors. We can work with any sketch $\Ab$ where
\begin{align}
  \label{eq:cov2}
  \norm{\Ab\Ab^T - \td{\Ab}\td{\Ab}^T} \leq \mu \sigma_1^2.
\end{align}
Theorem \ref{thm:rp} stated in Section \ref{sec:rp} shows that such a sketch can be obtained for
instance by a random projection $\Rb$ onto $\R^\ell$ for $\ell =
\sr(\Ab)/\mu^2$: let $\td{\Ab} = \Ab\Rb$ for $\Rb \in \R^{d \times
  \ell}$ chosen from an appropriately family of random matrices. However, we need to be careful in our choice of the family of random matrices, as naively storing a $(d \times \ell)$ matrix requires $O(d\ell)$ space, which would increase the space requirement of our streaming algorithm. For example, if we were to choose each entry of $\Rb$ to be i.i.d. be $\pm \frac{1}{\sqrt{\ell}}$ , then we would need to store $O(d\ell)$ random bits corresponding to each entry of $\Rb$.

But Theorem \ref{thm:rp} also shows that this is unnecessary and we do not  need to explicitly store  a $(d \times \ell)$ random matrix. The guarantees of Theorem \ref{thm:rp} also hold when $\Rb$ is a pseudorandom matrix with the entries being $\pm \frac{1}{\sqrt{\ell}}$ with $\log(\sr(\Ab)/\delta)$-wise independence instead of full independence . Therefore, by using a simple polynomial based hashing scheme \citep{vadhan2012pseudorandomness} we can get the same sketching guarantees using only $O(\log(d)\log(\sr(\Ab)/\delta))$ random bits and hence only $O(\log(d)\log(\sr(\Ab)/\delta))$ space. Note that each entry of $\Rb$ can be computed from this random seed in time $O(\log(d)\log(\sr(\Ab)/\delta))$.

Theorem \ref{thm:sample} stated in Section \ref{sec:subsample} shows that such a sketch can also be obtained for a length-squared sub-sampling of the columns of the matrix, for $\ell=\tilde{O}(\sr(\Ab)/\mu^2)$ (where the $\tilde{O}$ hides logarithmic factors).

Given such a sketch, we expect $\td{\Ub}_k$ to be a good approximation to
$\Ub_k$. So we define our approximations in the natural way:
\begin{align}
  \td{L}^k(i) &  = \sqnorm{e_i^T\td{\Ub}_k^i}\label{eq:L-batch1},\\
  \td{T}^k(i) &  = \sqnorm{\rowa{i}} - \sqnorm{e_i^T\td{\Ub}_k\td{\Sigmab}_k}.\label{eq:T-batch1}
\end{align}

The analysis then relies on the machinery from Section \ref{sec:prelims}.
However, $\td{\Ub}_k$ lies in $\R^{n \times k}$ which is too costly to compute
and store, whereas $\td{\Vb}_k$ in contrast lies in $\R^{\ell \times
  k}$, for $\ell = 1/\mu^2\max(\sr(\Ab), \log(1/\delta))$. In
particular, both $\ell$ and $k$ are independent of $n, d$ and could be
significantly smaller. In many settings of practical interest, we have
$\sr(\Ab) \approx k$ and both are constants independent of $n, d$.
So in our algorithm, we use the following equivalent definition in
terms of $\td{\Vb}_k, \td{\Sigmab}_k$.
\begin{align}
  \td{L}^k(i) & = \sqnorm{\td{\Sigmab}_k^{-1}\td{\Vb}^T_k(\Rb^T\rowa{i})},\label{eq:L-batch2}\\
  \td{T}^k(i) &  = \sqnorm{\rowa{i}} - \sqnorm{\td{\Vb}^T_k(\Rb^T\rowa{i})}.\label{eq:T-batch2}
\end{align}

For the random projection algorithm, we compute $\td{\Ab}^T\td{\Ab}$ in $\R^{\ell \times \ell}$ in the
first pass, and then run SVD on it to compute $\td{\Sigmab}_k \in
\R^{k \times k}$ and $\td{\Vb}_k \in \R^{\ell \times k}$. Then in the
second pass, we use these to we compute $\td{L}^k$ and $\td{T}^k$. The total memory needed in the first pass is $O(\ell^2)$ for the
covariance matrix. In the second pass, we
need $O(k\ell)$ memory for storing $\Vb_k$. We also need $O(\log(d)\log(\sr(\Ab)/\delta))$ additional memory for storing the random seed from which the entries of $\Rb$ can be computed efficiently.

\subsection{Our Approximation Guarantees}

We now turn to the guarantees. Given the $\Omega(d)$ lower bound from Section
\ref{sec:lower_bound}, we cannot hope for a strong pointwise
guarantee, rather we will show a guarantee that hold on average, or
for most points.

The following simple Lemma bounds the sum of absolute
values of diagonal entries in symmetric matrices.
\begin{lemma}\label{lem:tr_abs}
	Let $\Ab \in \R^{n \times n}$ be symmetric. Then
	\[ \sum_{i=1}^{n}\Big| {\eb_i^T\Ab \eb_i}\Big| \leq \rk(\Ab)\|\Ab\|. \]
\end{lemma}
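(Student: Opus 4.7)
The plan is to use the spectral theorem for symmetric matrices. Let $r = \rk(\Ab)$ and write the eigendecomposition $\Ab = \sum_{j=1}^{r} \lambda_j v_j v_j^T$ where $\{v_j\}$ are orthonormal and $\lambda_j \in \R$ are the nonzero eigenvalues of $\Ab$. Note $\|\Ab\| = \max_j |\lambda_j|$.

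First I would expand $\eb_i^T \Ab \eb_i = \sum_{j=1}^{r} \lambda_j (\eb_i^T v_j)^2 = \sum_{j=1}^{r} \lambda_j v_j(i)^2$, and apply the triangle inequality to get $|\eb_i^T \Ab \eb_i| \leq \sum_{j=1}^{r} |\lambda_j|\, v_j(i)^2$. Then sum over $i$ and swap the order of summation:
\begin{align*}
\sum_{i=1}^{n} |\eb_i^T \Ab \eb_i| \;\leq\; \sum_{j=1}^{r} |\lambda_j| \sum_{i=1}^{n} v_j(i)^2 \;=\; \sum_{j=1}^{r} |\lambda_j|,
\end{align*}
where the last equality uses $\|v_j\|_2 = 1$. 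Finally, bounding each $|\lambda_j| \leq \|\Ab\|$ yields $\sum_{j=1}^{r} |\lambda_j| \leq r\|\Ab\| = \rk(\Ab)\|\Ab\|$, which is the desired inequality.

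There is no real obstacle here; the only subtlety is recognizing that symmetry is used precisely to ensure an orthonormal eigenbasis (so that $\sum_i v_j(i)^2 = 1$) and that the diagonal entry $\eb_i^T \Ab \eb_i$ decomposes as a real-linear combination of nonnegative quantities $v_j(i)^2$, which is what makes the triangle inequality tight enough to lose only a factor of the rank.
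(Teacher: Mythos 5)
Your proof is correct and is essentially the same argument as the paper's: both expand the diagonal entries via the spectral decomposition restricted to the nonzero eigenvalues, apply the triangle inequality to bound each $|\eb_i^T\Ab\eb_i|$ by $\sum_j |\lambda_j| v_j(i)^2$, swap sums, and use orthonormality of the eigenvectors together with $|\lambda_j|\le\|\Ab\|$ to obtain the $\rk(\Ab)\|\Ab\|$ bound. The only cosmetic difference is that the paper writes the decomposition in matrix form $\Qb\Lambdab\Qb^T$ and identifies $\sum_{i,j}\Qb_{i,j}^2 = \normfsq{\Qb} = \rk(\Ab)$, whereas you sum the eigenvector norms directly.
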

\begin{proof}
	Consider the eigenvalue decomposition of $\Ab=\Qb \Lambdab
	\Qb^T$ where $\Lambdab$ is the diagonal matrix of eigenvalues
	of $\Ab$, and $\Qb$ has orthonormal columns, so $\normfsq{\Qb} =
	\rk(\Ab)$. We can write,
	\begin{align*}
	\sum_{i=1}^{n}\Big| {\eb_i^T\Ab \eb_i}\Big|
	& = \sum_{i=1}^{n}\Big| {\eb_i^T\Qb \Lambdab \Qb^T
		\eb_i}\Big| = \sum_{i=1}^{n}\sum_{j=1}^{n}\Big| \Lambdab_{i,i}\Qb_{i,j}^2\Big| \\
	& \le \norm{\Ab} \sum_{i,j=1}^{n}\Big| \Qb_{i,j}^2\Big|  =
	\norm{\Ab} \normfsq{\Qb} = \rk(\Ab)\norm{\Ab}.
	\end{align*}
\end{proof}

We first state and prove Lemma \ref{lem:batch-mhnbs}
which bounds the average error in estimating $L^k$.

\begin{lemma}
	\label{lem:batch-mhnbs}
	Assume that $\Ab$ is $(k, \Delta)$-separated. Let $\td{\Ab}$ satisfy
	Equation \eqref{eq:cov2} for $\mu = \eps^2\Delta/16$ where $\eps <1$. Then
	\begin{align*}
	\sum_{i=1}^n |L^k(i) - \td{L}^k(i)| \leq  \eps \sum_{i=1}^n L^k(i).
	\end{align*}
\end{lemma}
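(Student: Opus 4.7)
My plan is to reduce the sum to a quantity about the symmetric matrix $\Ub_k\Ub_k^T - \td{\Ub}_k\td{\Ub}_k^T$, and then use Lemma \ref{lem:tr_abs} together with the projection perturbation bound of Lemma \ref{lem:wed}.

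First, rewrite $L^k(i)$ and $\td{L}^k(i)$ in the form $e_i^T \Ub_k\Ub_k^T e_i$ and $e_i^T \td{\Ub}_k\td{\Ub}_k^T e_i$ using the definitions in \eqref{eq:L-batch0} and \eqref{eq:L-batch1}. Then
\begin{align*}
|L^k(i) - \td{L}^k(i)| = \bigl| e_i^T(\Ub_k\Ub_k^T - \td{\Ub}_k\td{\Ub}_k^T) e_i \bigr|,
\end{align*}
so the left-hand side of the claim becomes $\sum_{i=1}^n |e_i^T \Mb \, e_i|$ where $\Mb := \Ub_k\Ub_k^T - \td{\Ub}_k\td{\Ub}_k^T$ is symmetric and has rank at most $2k$.

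Second, apply Lemma \ref{lem:tr_abs} to $\Mb$ to obtain
\begin{align*}
\sum_{i=1}^n |L^k(i) - \td{L}^k(i)| \leq \rk(\Mb)\,\|\Mb\| \leq 2k\,\|\Ub_k\Ub_k^T - \td{\Ub}_k\td{\Ub}_k^T\|.
\end{align*}
Note that Lemma \ref{lem:wed} in Section \ref{sec:prelims} is stated for column-space projections under the assumption $\|\Ab\Ab^T - \td{\Ab}\td{\Ab}^T\| \leq \mu\sigma_1^2$, which is exactly the hypothesis \eqref{eq:cov2} we are given here. Plugging in, the lemma gives $\|\Ub_k\Ub_k^T - \td{\Ub}_k\td{\Ub}_k^T\| \leq 2\sqrt{\mu/\Delta}$, and with our choice $\mu = \eps^2\Delta/16$ this is at most $\eps/2$. (The hypothesis $\mu \leq \Delta/6$ of Lemma \ref{lem:wed} is satisfied since $\eps < 1$.)

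Third, combine: $\sum_{i=1}^n |L^k(i) - \td{L}^k(i)| \leq 2k \cdot \eps/2 = \eps k$. Finally observe $\sum_{i=1}^n L^k(i) = \sum_{i=1}^n \|e_i^T \Ub_k\|_2^2 = \|\Ub_k\|_F^2 = k$ since $\Ub_k$ has $k$ orthonormal columns, so the right-hand side equals $\eps \sum_{i=1}^n L^k(i)$, as required. There is no real obstacle here; the main conceptual point is that although the individual perturbations $|e_i^T \Mb e_i|$ could each be of order $\|\Mb\|$ (giving a trivial $n\|\Mb\|$ bound), the low rank of $\Mb$ lets Lemma \ref{lem:tr_abs} replace $n$ by $2k$, which is precisely what turns a pointwise-weak bound into an on-average guarantee that is independent of $n$.
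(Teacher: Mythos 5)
Your proof is correct and follows essentially the same route as the paper's: rewrite the scores as diagonal entries of $\Ub_k\Ub_k^T$ and $\td{\Ub}_k\td{\Ub}_k^T$, bound the operator norm of their rank-$\leq 2k$ difference via Lemma \ref{lem:wed}, apply Lemma \ref{lem:tr_abs}, and use $\sum_i L^k(i) = \normfsq{\Ub_k} = k$. Nothing to add.
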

\begin{proof}
	By Equations \eqref{eq:L-batch0} and \eqref{eq:L-batch1}
	\begin{align*}
	\sum_{i=1}^n |L^k(i) - \td{L}^k(i)| =
	\sum_{i=1}^n|e_i^t\Ub_k\Ub_k^Te_i - e_i^t\tilde{\Ub}_k\tilde{\Ub}_k^Te_i|.
	\end{align*}
	Let $\Cb = \Ub_k\Ub^T_k -  \tilde{\Ub}_k\tilde{\Ub}_k^T$, so that $\rank(\Cb) \leq 2k$. By
	Lemma \ref{lem:wed} (which applies since $\mu \leq \Delta/16$), we have
	\[ \norm{\Cb} \leq 2\sqrt{\frac{\mu}{\Delta}} \leq \frac{\eps}{2}.\]
	So applying Lemma \ref{lem:tr_abs}, we get
	\[
	\sum_{i=1}^n |e_i^t\Ub_k\Ub_k^Te_i -
	e_i^t\tilde{\Ub}_k\tilde{\Ub}_k^Te_i| \leq \frac{\eps}{2} 2k = \eps
	k.
	\]
	The claim follows by noting that the columns of $\Ub_k$ are
        orthonormal, so $\sum_{i=1}^n L^k(i) = k$.
\end{proof}

The guarantee above shows that the average additive error in
estimating $L^k(i)$ is $\frac{\eps}{n} \sum_{i=1}^n L^k(i)$ for a
suitable $\eps$. Note that the average value of $L^k(i)$ is
$\frac{1}{n} \sum_{i=1}^n L^k(i)$, hence we obtain small additive
errors on average. Additive error guarantees for $L^k(i)$ translate to
multiplicative guarantees as long as $L^k(i)$ is not too small, but
for outlier detection the candidate outliers are those points for
which $L^k(i)$ is large, hence additive error guarantees are
meaningful for preserving outlier scores for points which could be
outliers.

Similarly, Lemma \ref{lem:batch-proj} bounds the average error in estimating $T^k$.

\begin{lemma}
	\label{lem:batch-proj}
	Assume that $\Ab$ is $(k, \Delta)$-separated. Let
	\begin{align}
	\label{eq:ugly-4}
	\eps \leq \frac{\min(\Delta k^2, k)}{\sr(A)}.
	\end{align}
	Let $\td{\Ab}$ satisfy
	Equation \eqref{eq:cov2} for
	\begin{align}
	\label{eq:ugly-3}
	\mu = \frac{\eps^3\sr(\Ab)^3}{125 k^4}.
	\end{align}
	Then
	\begin{align*}
	\sum_{i=1}^n |T^k(i) - \td{T}^k(i)| \leq \frac{\eps \normfsq{\Ab}}{\normfsq{\Ab-\Ab_k}} \sum_{i=1}^nT^k(i).
	\end{align*}
\end{lemma}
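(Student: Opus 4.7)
The plan is to mimic the structure of Lemma~\ref{lem:batch-mhnbs}: rewrite the sum of errors as a sum of absolute diagonal entries of a single low rank symmetric matrix, apply Lemma~\ref{lem:tr_abs}, and bound the operator norm of that matrix via the perturbation bound in Theorem~\ref{thm:sigma1}.

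First I would unfold the definitions \eqref{eq:T-batch0}--\eqref{eq:T-batch1}. The $\sqnorm{\rowa{i}}$ terms cancel, giving
\[ |T^k(i) - \td{T}^k(i)| = \bigl|\eb_i^T\bigl(\Ub_k\Sigmab_k^2\Ub_k^T - \td{\Ub}_k\td{\Sigmab}_k^2\td{\Ub}_k^T\bigr)\eb_i\bigr|. \]
Set $\Cb = \Ub_k\Sigmab_k^2\Ub_k^T - \td{\Ub}_k\td{\Sigmab}_k^2\td{\Ub}_k^T$. Then $\Cb$ is symmetric and $\rk(\Cb)\le 2k$, so Lemma~\ref{lem:tr_abs} yields $\sum_i |T^k(i)-\td{T}^k(i)| \le 2k\norm{\Cb}$.

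Next I would bound $\norm{\Cb}$ by splitting off the discrepancy between $\Sigmab_k^2$ and $\td{\Sigmab}_k^2$ with the triangle inequality:
\[ \norm{\Cb} \le \norm{\Ub_k\Sigmab_k^2\Ub_k^T - \td{\Ub}_k\Sigmab_k^2\td{\Ub}_k^T} + \norm{\td{\Ub}_k(\Sigmab_k^2 - \td{\Sigmab}_k^2)\td{\Ub}_k^T}. \]
The first term is $\le 8\sigma_1^2(\mu k)^{1/3}$ by applying Theorem~\ref{thm:sigma1} to $\Ab^T$ (whose column covariance is precisely the $\Ab\Ab^T$ of Equation~\eqref{eq:cov2}); I would first need to check the hypotheses $\mu\le\min(\Delta^3k^2,1/(20k))$, which follow from the choices \eqref{eq:ugly-3} and \eqref{eq:ugly-4} (the $\eps\le\Delta k^2/\sr(\Ab)$ case gives the first, the $\eps\le k/\sr(\Ab)$ case gives the second). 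The second term is bounded by $\norm{\Sigmab_k^2-\td{\Sigmab}_k^2}\le \mu\sigma_1^2$ by Weyl's inequality. Absorbing the lower-order $\mu\sigma_1^2$ term into $\sigma_1^2(\mu k)^{1/3}$, I obtain $\norm{\Cb} \le c\,\sigma_1^2(\mu k)^{1/3}$ for a small absolute constant $c$.

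Finally I would chain these bounds and plug in \eqref{eq:ugly-3}:
\[ \sum_{i=1}^n |T^k(i)-\td{T}^k(i)| \le 2k\cdot c\sigma_1^2(\mu k)^{1/3} = c'\sigma_1^2 k^{4/3}\mu^{1/3} \le \eps\sigma_1^2\sr(\Ab) = \eps\normfsq{\Ab}, \]
where the choice $\mu = \eps^3\sr(\Ab)^3/(125 k^4)$ is designed precisely so that $k^{4/3}\mu^{1/3} \lesssim \eps\sr(\Ab)$. Since $\sum_i T^k(i) = \normfsq{\Ab-\Ab_k}$, the right-hand side equals $\eps\frac{\normfsq{\Ab}}{\normfsq{\Ab-\Ab_k}}\sum_i T^k(i)$, which is the claimed bound.

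The main obstacle I anticipate is purely bookkeeping: verifying that the conditions on $\mu$ required by Theorem~\ref{thm:sigma1} hold under both branches of the hypothesis~\eqref{eq:ugly-4} on $\eps$, and tracking the absolute constants carefully so that the factor $2k$ from Lemma~\ref{lem:tr_abs} combines cleanly with the $(\mu k)^{1/3}$ from Theorem~\ref{thm:sigma1} to yield the stated constant $125$. There is no conceptual subtlety beyond Lemma~\ref{lem:batch-mhnbs}; the asymmetry in the statement (the extra factor $\normfsq{\Ab}/\normfsq{\Ab-\Ab_k}$) is simply a rewriting so that the RHS involves $\sum_i T^k(i)$ rather than $\normfsq{\Ab}$.
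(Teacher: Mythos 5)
Your proposal is correct and follows the same structure as the paper's proof: rewrite the error as a diagonal entry of the rank-$\le 2k$ symmetric matrix $\Cb = \Ub_k\Sigmab_k^2\Ub_k^T - \td{\Ub}_k\td{\Sigmab}_k^2\td{\Ub}_k^T$, apply Lemma~\ref{lem:tr_abs}, and bound $\norm{\Cb}$ via the triangle inequality combined with Theorem~\ref{thm:sigma1} (for the piece with matched $\Sigmab_k^2$) and Weyl's inequality (for the $\Sigmab_k^2-\td{\Sigmab}_k^2$ piece). One small slip in your write-up: Theorem~\ref{thm:sigma1} is applied directly to $\Ab$, not to $\Ab^T$ --- its hypothesis $\norm{\Ab\Ab^T-\td{\Ab}\td{\Ab}^T}\le\mu\sigma_1^2$ from Equation~\eqref{eq:setup-0} is exactly the sketch guarantee~\eqref{eq:cov2} used in this section, and passing to the transpose is only needed in the pointwise setting of Section~\ref{sec:point}, where the sketch instead controls $\Ab^T\Ab$.
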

\begin{proof}
	By Equations \eqref{eq:T-batch0} and \eqref{eq:T-batch1}, we have
	\begin{align*}
	\sum_{i=1}^n |T^k(i) - \td{T}^k(i)| = \sum_{i=1}^n|\sqnorm{e_i^T\Ub_k\Sigmab_k} -
	\sqnorm{e_i^T\td{\Ub}_k\td{\Sigmab}_k}| = \sum_{i=1}^n|e_i^t\Ub_k\Sigmab_k^2\Ub_k^Te_i -
	e_i^t\tilde{\Ub}_k\tilde{\Sigmab}_k^2\tilde{\Ub}_k^Te_i|
	\end{align*}
	Let $\Cb = \Ub_k\Sigmab_k^2\Ub_k^T -
	\tilde{\Ub}_k\tilde{\Sigmab}_k^2\tilde{\Ub}_k^T$. Then $\rank(\Cb)
	\leq 2k$. We now bound its operator norm as follows
	\begin{align*}
	\norm{\Cb} \leq \norm{\Ub_k\Sigmab_k^2\Ub_k^T -
		\tilde{\Ub}_k\Sigmab_k^2\tilde{\Ub}_k^T} + \norm{
		\tilde{\Ub}_k(\Sigmab_k^2 - \tilde{\Sigmab}_k^2)\tilde{\Ub}_k^T}.
	\end{align*}
	To bound the first term, we use Theorem \ref{thm:sigma1} (the
	condition on $\mu$ holds by our choice of $\eps$ in Equation \eqref{eq:ugly-4} and $\mu$ in Equation \eqref{eq:ugly-3}) to get
	\[ \norm{\Ub_{\leq k}\Sigmab_k^2\Ub^t_{\leq k} - \tilde{\Ub}_{\leq
			k}\Sigmab_k^2\tilde{\Ub}^t_{\leq k}} \leq 4\sigma_1(\Ab)^2(\mu
	k)^{1/3}.\]
	For the second term, we use
	\[ \norm{\tilde{\Ub}_k(\Sigmab_k^2 -
		\tilde{\Sigmab}_k^2)\tilde{\Ub}_k^T} \leq \norm{\Sigmab_k^2 -
		\tilde{\Sigmab}_k^2} \leq \mu \sigma_1(\Ab)^2.
	\]
	Overall, we get $\norm{\Cb} \leq 5\sigma(\Ab)^2(\mu k)^{1/3}$. So
	applying Lemma \ref{lem:tr_abs}, we get
	\begin{align*}
	\sum_{i=1}^n |e_i^t\Ub_k\Sigmab_k^2\Ub_k^Te_i -
	e_i^t\tilde{\Ub}_k\tilde{\Sigmab}_k^2\tilde{\Ub}_k^Te_i| & \leq
	5\sigma(\Ab)^2(\mu k)^{1/3} \cdot 2k\\
	& \leq 5\sigma(\Ab)^2\mu^{1/3}k^{4/3}\\
	& \leq \eps \sr(\Ab)\sigma(\Ab)^2 = \eps\normfsq{\Ab}.
	\end{align*}
	In order to obtain the result in the form stated in the Lemma, note that $\sum_{i=1}^n T^k(i)=\normfsq{\Ab-\Ab_k}$.
\end{proof}

Typically, we expect $\normfsq{\Ab- \Ab_k}$ to be
a constant fraction of $\normfsq{\Ab}$. Hence the guarantee above
says that on average, we get good additive guarantees.

\subsection{Guarantees for Random Projections}\label{sec:rp}

\begin{theorem}\label{thm:rp}\citep{cohen2015optimal,cohen2015dimensionality}
	Consider any matrix $\Ab
	\in \mathbb{R}^{n \times d}$. Let $\Rb = (1/\sqrt{\ell})\Xb$ where $\Xb \in \R^{d \times
		\ell}$ is a random matrix drawn from any of the following distributions of matrices. Let $\tilde{\Ab} = \Ab\Rb$. Then
	with probability $1-\delta$, 	
	\begin{align*}
	\norm{\Ab\Ab^T-\tilde{\Ab}\tilde{\Ab}^T}\le \mu \norm{\Ab}^2.
	\end{align*}
\begin{enumerate}
	\item $\Rb$ is a dense random matrix with each entry being an
	i.i.d. sub-Gaussian random variable and $\ell = O(\frac{\sr(A) + \log(1/\delta)}{\epsilon^2})$.
	\item $\Rb$ is a fully sparse embedding matrix , where
	each column has a single $\pm 1$ in a random position (sign and position chosen uniformly and
	independently) and $\ell = O(\frac{\sr(A)^2}{\epsilon^2\delta})$. Additionally, the same matrix family except where the position and sign for
	each column are determined by a 4-independent hash function.
	\item $\Rb$ is a Subsampled Randomized Hadamard Transform (SRHT) \citep{ailon2009fast} with $\ell = O(\frac{\sr(A) + \log(1/(\epsilon\delta))}{\epsilon^2})\log(\sr(A)/\delta)$.
	\item $\Rb$ is a dense random matrix with each entry being $\pm \sqrt{\frac{1}{\ell}}$ for $\ell = O(\frac{\sr(A) + \log(1/\delta)}{\epsilon^2})$ and the entries are drawn from a $\log(\sr(A)/\delta)$-wise independent family of hash functions. Such a hash family can be constructed with $O(\log(d)\log(\sr(A)/\delta))$ random bits use standard techniques (see for e.g. \citet{vadhan2012pseudorandomness} Sec 3.5.5).
\end{enumerate}	
	
\end{theorem}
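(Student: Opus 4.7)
The plan is to reduce all four distributions to a single spectral approximate matrix product (AMM) claim and then invoke the corresponding bound from \citep{cohen2015optimal,cohen2015dimensionality} for each case. Writing $\Ab = \Ub\Sigmab\Vb^T$, we have
\[ \norm{\Ab\Ab^T - \td{\Ab}\td{\Ab}^T} = \norm{\Ab(\Ib - \Rb\Rb^T)\Ab^T} = \norm{\Sigmab(\Vb^T \Rb\Rb^T \Vb - \Ib)\Sigmab},\]
so the statement is equivalent to a weighted spectral bound on how close $\Vb^T \Rb\Rb^T \Vb$ is to the identity in the singular-vector basis of $\Ab$. The crucial point is that the required sketch size $\ell$ must scale with the stable rank $\sr(\Ab) = \normfsq{\Ab}/\norm{\Ab}^2$ rather than with $d$ or $\rank(\Ab)$, which is exactly what the covariance-estimation machinery of Koltchinskii-Lounici and its derandomized variants furnish.

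For case (1), observe that $\td{\Ab}\td{\Ab}^T = \frac{1}{\ell}\sum_{i=1}^{\ell} (\Ab\xb_i)(\Ab\xb_i)^T$ is an unbiased i.i.d. average of rank-one sub-Gaussian matrices with mean $\Ab\Ab^T$; applying Koltchinskii-Lounici covariance concentration to this sum directly yields the stated $\ell = O((\sr(\Ab) + \log(1/\delta))/\mu^2)$. For case (2), I would invoke the Clarkson-Woodruff-Meng-Mahoney CountSketch AMM inequality, whose proof is a second-moment calculation on the Frobenius error followed by Markov's inequality; this yields the slightly worse $\sr(\Ab)^2$ dependence and uses only $4$-wise independence of the hash positions and signs. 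For case (3), I would combine Ailon-Chazelle's coordinate-boundedness lemma for the Walsh-Hadamard transform with a chaining argument in the style of Tropp to obtain the SRHT bound, picking up the extra $\log(\sr(\Ab)/\delta)$ factor.

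The most delicate case is (4), the pseudorandom $\pm 1/\sqrt{\ell}$ construction. The plan here is to note that the moment-method proof of the dense Rademacher case depends only on trace moments of $\Ab\Rb\Rb^T\Ab^T$ of total degree $O(\log(\sr(\Ab)/\delta))$, so $O(\log(\sr(\Ab)/\delta))$-wise independence of the entries of $\Rb$ already suffices to reproduce the matrix-Bernstein / trace-moment bound used in case (1). A $k$-wise independent $\pm 1$ family on a domain of size $d$ can be realized by evaluations of a degree-$(k-1)$ polynomial over $\F_{2^{O(\log d)}}$, requiring $O(k \log d) = O(\log d \cdot \log(\sr(\Ab)/\delta))$ seed bits and $O(\log d \cdot \log(\sr(\Ab)/\delta))$ time per entry; this is the standard construction from \citet{vadhan2012pseudorandomness} that the theorem statement already points to, and assembling it with the moment calculation gives the claim.

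The main technical obstacle is case (4): one has to verify that the matrix-concentration proof underlying the dense sub-Gaussian case is driven entirely by bounded-degree moments and therefore survives derandomization via bounded independence. Cases (1)-(3) are, by contrast, direct appeals to well-established AMM inequalities whose stable-rank dependence is the only feature we genuinely need.
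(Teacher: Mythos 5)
The paper does not prove Theorem~\ref{thm:rp}; it is a black-box citation to \citet{cohen2015optimal,cohen2015dimensionality}, and there is no internal proof to compare your sketch against. Your reduction identity $\norm{\Ab\Ab^T-\td{\Ab}\td{\Ab}^T}=\norm{\Ab(\Ib-\Rb\Rb^T)\Ab^T}=\norm{\Sigmab(\Vb^T\Rb\Rb^T\Vb-\Ib)\Sigmab}$ is correct and is indeed the standard way to recast the claim as a stable-rank-sensitive spectral approximate matrix product bound, which is what the cited works establish. Your attributions for cases (1)--(3) (Koltchinskii--Lounici style concentration for sub-Gaussian, second-moment/Markov for CountSketch with $4$-wise independence, chaining for SRHT) track the literature, and the observation that case (4) follows by derandomizing a trace-moment argument of bounded degree $O(\log(\sr(\Ab)/\delta))$ with a $k$-wise independent $\pm 1$ family of seed length $O(\log d\cdot\log(\sr(\Ab)/\delta))$ is the right high-level story for why the theorem statement points at \citet{vadhan2012pseudorandomness}. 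The one caution is that \citet{cohen2015optimal} in fact proves the AMM bounds via a unified moment/oblivious-subspace-embedding framework rather than literally invoking Koltchinskii--Lounici, so your case-(1) attribution is a slightly different route to the same $\ell=O((\sr(\Ab)+\log(1/\delta))/\mu^2)$ bound; both are sound, but if you wanted a self-contained derivation matching the citation you would follow the moment-method path uniformly across all four cases, which is also what makes the limited-independence derandomization in case (4) fall out naturally.
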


Using Theorem \ref{thm:rp} along with Lemma \ref{lem:batch-mhnbs} and Lemma \ref{lem:batch-proj} and with the condition that $\sr(A)=O(k)$ gives Theorem \ref{thm:random_proj} from the introduction---which shows that taking a random projection with $\ell = k^3 \cdot \poly(\Delta, \eps^{-1})$ ensures that the error guarantees in Lemma \ref{lem:batch-mhnbs} and Lemma \ref{lem:batch-proj} hold with high probability.

\eat{

The following theorem from \citet{koltchinskii2017}
implies that random projections give good sketches of a matrix.

\begin{theorem}\label{thm:cov}\citep{koltchinskii2017}
	Consider any matrix $\Ab
	\in \mathbb{R}^{n \times d}$. Let $\Rb = (1/\sqrt{\ell})\Xb$ where $\Xb \in \R^{d \times
		\ell}$ is a random matrix with each entry being an
	i.i.d. sub-Gaussian random variable. Let $\tilde{\Ab} = \Ab\Rb$. Then
	with probability $1-e^{-t}$, for some fixed constant $C$,
	\begin{align*}
	\norm{\Ab\Ab^T-\tilde{\Ab}\tilde{\Ab}^T}\le
	C\norm{\Ab\Ab^T}\Big(\sqrt{\frac{\sr(\Ab)}{\ell}} +
	{\frac{\sr(\Ab)}{\ell}}+ \sqrt{\frac{t}{\ell}}+ {\frac{t}{\ell}} \Big).
	\end{align*}
\end{theorem}

It has the following corollary.
\begin{corollary}\label{cor:covariance}
	Let $\ell \geq C'\mu^{-2}\max(\sr(\Ab), \log(1/\delta))$. Then with
	probability $1 - \delta$,
	\begin{align*}
	\norm{\Ab\Ab^T-\tilde{\Ab}\tilde{\Ab}^T}\le \mu \norm{\Ab}^2.
	\end{align*}
\end{corollary}

Using Corollary \ref{cor:covariance} along with Lemma \ref{lem:batch-mhnbs} and Lemma \ref{lem:batch-proj} gives Theorem \ref{thm:random_proj} from the introduction---which shows that taking a random projection with $\ell = \poly(k, \eps^{-1}, \sr(A))$ ensures that the error guarantees in Lemma \ref{lem:batch-mhnbs} and Lemma \ref{lem:batch-proj} hold with high probability.

}
\subsection{Results on Subsampling Based Sketches}\label{sec:subsample}

Subsampling based sketches can yield both row and column space
approximations. The algorithm for preserving the row space,
i.e. approximating $\Ab^T\Ab$,  using row subsampling is
straightforward. The sketch samples $\ell$ rows of $\Ab$ proportional
to their squared lengths to obtain a sketch $\td{\Ab}\in
\mathbb{R}^{\ell \times d}$. This can be done in a single pass in a
row streaming model using reservior sampling. Our streaming algorithm
for approximating anomaly scores using row subsampling follows the
same outline as Algorithm \ref{alg:FD} for Frequent Directions. We
also obtain the same guarantees as Theorem~\ref{thm:FD} for Frequent
Directions, using the guarantee for subsampling sketches stated at the
end of the section (in Theorem ~\ref{thm:sample}). The guarantees in
Theorem~\ref{thm:FD} are satisfied by subsampling $\ell=k^2\cdot
\poly(\kappa,\eps^{-1},\Delta)$ columns, and the algorithm needs
$O(d\ell)$ space and $O(nd)$ time.

In order to preserve the column space using subsampling,
i.e. approximate $\Ab\Ab^T$, we need to subsample the columns of the
matrix. Our streaming algorithm follows a similar outline as
Algorithm~\ref{alg:random} which does a random projections of the rows
and also approximates $\Ab\Ab^T$. However, there is a subtlety
involved. We need to subsample the columns, but the matrix is arrives
in row-streaming order. We show that using an additional pass, we can
subsample the columns of $\Ab$ based on their squared lengths. This
additional pass does reservoir sampling on the squared entries of the
matrix, and uses the column index of the sampled entry as the column
to be subsampled. The algorithm is stated in Algorithm
\ref{alg:subsample}. It requires space $O(\ell \log d)$ in order to
store the $\ell$ indices to subsample, and space $O(\ell^2)$ to store
the covariance matrix of the subsampled data. Using the guarantees for
subsampling in Theorem ~\ref{thm:sample}, we can get the same
guarantees for approximating anomaly scores as for a random projection
of the rows. The guarantees for random projection in Theorem~\ref{thm:rp} are satisfied by subsampling $\ell=k^3\cdot
\poly(\eps^{-1},\Delta)$ columns, and the algorithm needs
$O(d\ell+\log d)$ space and $O(nd)$ time.

\begin{algorithm}
	\DontPrintSemicolon
	\SetAlgoLined
	\SetAlgoNoEnd
	\SetKwFunction{FDot}{Dot}
	\SetKwFunction{FUpdate}{Update}

	\newcommand\mycommfont[1]{\rmfamily{#1}}
	\SetCommentSty{mycommfont}
	\SetKwComment{Comment}{$\triangleright$ }{}

	\Input{Choice of $k$ and $\ell$.}
	\Init{}{
		Set covariance $\tilde{\Ab}^T\tilde{\Ab}\leftarrow 0$\;
		For $1\le t\le \ell$, set $S_t=1$
		 { \Comment*[f]{$S_i$ stores the $\ell$ column indices we will subsample}}\;
		Set $s\rightarrow 0  $   \Comment*[f]{$s$ stores the sum of the squares of entries seen so far}\;
		}

	\ZeroPass{As each element $a_{ij}$ of $\Ab$ streams in,}{
		 Update $s\rightarrow s + a_{ij}^2$\;
		 \For{$1\le t\le \ell$}{Set $S_t\rightarrow j$ with probability $a_{ij}^2/s$ }

	}
	\FirstPass{As each row $a_{(i)}$ streams in,}{
		Project by $\Rb$ to get $\Rb^T a_{(i)}$  \;
		Update covariance $\tilde{\Ab}^T\tilde{\Ab}\leftarrow \tilde{\Ab}^T\tilde{\Ab} + (\Rb^T a_{(i)})(\Rb^T a_{(i)})^T $
	}
	\SVD{}{
		Compute the top $k$ right singular vectors of $\tilde{\Ab}^T\tilde{\Ab}$
	}
	\SecondPass{As each row $a_{(i)}$ streams in,}{
		Project by $\Rb$ to get $\Rb^T a_{(i)}$  \;
		For each projected row, use the estimated right singular vectors to compute the leverage scores and projection distances \;
	}

	\caption{Algorithm to approximate anomaly scores using column subsampling}
	\label{alg:subsample}
\end{algorithm}

\paragraph{Guarantees for subsampling based sketches.}

\citet{drineas2006fast} showed that sampling columns proportional to
their squared lengths approximates $\Ab\Ab^T$ with high
probability. They show a stronger Frobenius norm guarantee than the
operator norm guarantee that we need, but this worsens the dependence
on the stable rank. We will instead use the following guarantee due to
\citet{magen2011low}.

\begin{theorem}\citep{magen2011low}\label{thm:sample}
	Consider any matrix $\Ab
	\in \mathbb{R}^{n \times d}$. Let $\td{\Ab}\in \mathbb{R}^{n \times \ell}$ be a matrix obtained by subsampling the columns of $\Ab$ with probability proportional to their squared lengths. Then with probability $1-1/\poly{(\sr(\Ab))}$, for $\ell \ge \sr(\Ab)\log(\sr(\Ab)/\mu^2)/\mu^2$
	\begin{align*}
	\norm{\Ab\Ab^T-\tilde{\Ab}\tilde{\Ab}^T}\le \mu \norm{\Ab}^2.
	\end{align*}

\end{theorem}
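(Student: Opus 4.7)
The plan is to apply a matrix Bernstein (or Rudelson/Ahlswede-Winter type) concentration inequality to a sum of independent rank-one random matrices. Writing $\Ab\Ab^T = \sum_{j=1}^d \colof{a}{j}(\colof{a}{j})^T$, the length-squared sampling scheme picks a column $j$ with probability $p_j = \sqnorm{\colof{a}{j}}/\normfsq{\Ab}$ and rescales by $1/(\ell p_j)$, so that $\td{\Ab}\td{\Ab}^T = \sum_{t=1}^\ell Y_t$ where each $Y_t$ is i.i.d. with $\E[Y_t] = \Ab\Ab^T/\ell$. The key point of length-squared sampling is that the rescaled rank-one summand has a uniform operator norm bound independent of which column is chosen, namely $\|(1/(\ell p_j))\colof{a}{j}(\colof{a}{j})^T\| = \normfsq{\Ab}/\ell$.

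Next I would set $X_t = Y_t - \E[Y_t]$ so that $\sum X_t = \td{\Ab}\td{\Ab}^T - \Ab\Ab^T$, and check the two ingredients that matrix Bernstein needs. First, the deterministic norm bound: $\|X_t\| \le \normfsq{\Ab}/\ell + \norm{\Ab}^2/\ell \le 2\normfsq{\Ab}/\ell =: R$. Second, the matrix variance: a direct computation gives
\begin{align*}
\E[Y_t^2] = \sum_{j=1}^d p_j \cdot \frac{\colof{a}{j}(\colof{a}{j})^T \colof{a}{j}(\colof{a}{j})^T}{\ell^2 p_j^2} = \frac{\normfsq{\Ab}}{\ell^2}\, \Ab\Ab^T,
\end{align*}
so $\norm{\sum_{t=1}^\ell \E[X_t^2]} \le \norm{\sum_t \E[Y_t^2]} \le \normfsq{\Ab}\norm{\Ab}^2/\ell =: \sigma^2$.

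Plugging into the Bernstein bound $\Prob[\|\sum X_t\| \ge s] \le 2d \exp(-s^2/(2\sigma^2 + 2Rs/3))$ and setting $s = \mu \norm{\Ab}^2$ yields the requirement
\begin{align*}
\ell \;\gtrsim\; \frac{\sr(\Ab)\,\log(d/\delta)}{\mu^2} + \frac{\sr(\Ab)\,\log(d/\delta)}{\mu},
\end{align*}
which is the first form of the theorem once we absorb the $\log d$ into the failure probability. To get the cleaner statement with only $\log(\sr(\Ab)/\mu^2)$ rather than $\log d$, one replaces the dimension factor in Bernstein with the intrinsic dimension $\mathrm{tr}(\Ab\Ab^T)/\norm{\Ab\Ab^T} = \sr(\Ab)$, using the intrinsic-dimension matrix Bernstein inequality (Tropp, Hsu--Kakade--Zhang); this is the version \citet{magen2011low} rely on. The failure probability $1/\mathrm{poly}(\sr(\Ab))$ in the theorem statement then follows by choosing $\log(1/\delta) = \Theta(\log \sr(\Ab))$.

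The main obstacle is the last step: proving the $\log(\sr(\Ab)/\mu^2)$ factor rather than $\log d$. This requires the intrinsic-dimension refinement of matrix Bernstein, which is nontrivial and is precisely what makes the bound independent of the ambient dimension $d$. Everything else (the uniform $R$, the variance $\sigma^2$, and the zero-mean decomposition) follows from the length-squared sampling probabilities in a direct way.
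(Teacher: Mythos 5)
This theorem is not proved in the paper; it is stated with a citation to \citet{magen2011low} and used as a black box. Your sketch correctly reconstructs the argument behind that reference: length-squared sampling makes the rescaled rank-one summands uniformly bounded by $\normfsq{\Ab}/\ell$, the matrix variance is $\normfsq{\Ab}\norm{\Ab}^2/\ell$, and a dimension-free (intrinsic-dimension / stable-rank) matrix Chernoff--Bernstein bound is exactly what replaces $\log d$ with $\log(\sr(\Ab)/\mu^2)$ and yields the $1/\poly(\sr(\Ab))$ failure probability. This is the same approach as the cited source, so there is nothing to add.
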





\vspace{-8pt}
\section{Conclusion}
\vspace{-6pt}
We show that techniques from sketching can be used to derive
simple and practical algorithms for computing subspace-based anomaly scores which
provably approximate the true scores at a significantly lower cost in terms of
time and memory. A promising direction of future work is to use them in real-world
high-dimensional anomaly detection tasks.

\eat{

Another resource that this optimizes is communication,
which is important in the IoT regime where the rows
corresponds to data collected by different sensors, and the
communication cost of transmitting the data from the sensors to a
central server is a bottleneck, and where anomaly detection is an important problem.
\eat{In such a scenario, the sensors can
	compress the  data by subsampling or random projection before sending
	them to the server. The server would reply with (the SVD of) the
	aggregate covariance, which the sensors could then use to compute the
	anomaly scores locally.}
}

\section*{Acknowledgments}

The authors thank David Woodruff for suggestions on using communication complexity tools to show lower bounds on memory usage for approximating anomaly scores and Weihao Kong for several useful discussions on estimating singular values and vectors using random projections. We also thank Steve Mussmann, Neha Gupta, Yair Carmon and the anonymous reviewers for detailed feedback on initial versions of the paper. VS's contribution was partially supported by NSF award 1813049, and ONR award N00014-18-1-2295.

\bibliographystyle{unsrtnat}
\bibliography{references.bib}
\appendix



\section{Proofs for Section \ref{sec:prelims}}\label{sec:prelim_full}
We will prove bounds for items (2) and (3) listed in the beginning of Section \ref{sec:prelims}, which
are given by Theorems \ref{thm:sigma1} and \ref{thm:sigma2}
respectively. To prove these, the next two technical lemmas give
perturbation bounds on the operator norm of positive semi-definite
matrices of the from $\Ub_k\Lambdab\Ub^T_k$, where $\Lambdab$ is a
diagonal matrix with non-negative entries. In order to do this, we split the matrix $\Sigmab$ to a well-separated component and a small residual component. 

We now describe the decomposition of $\Sigmab$ .
Let $\delta$ be  a parameter so that
\begin{align}
\label{eq:choose-delta}
6\mu \leq \delta \leq \Delta
\end{align}
We partition the indices $[k]$ into a set of disjoint intervals $B(\Ab,
\delta) = \{B_1,\ldots,B_m\}$ based on the singular values of $\Ab$ so
that there is a separation of at least $\delta \sigma_1^2$ between intervals, and at most $\delta
\sigma_1^2$ within an interval. Formally, we start with $i =1$
assigned to $B_1$. For $i \geq 2$, assume that we have assigned $i-1$ to $B_j$. If
 \[ \sigma^2_{i}(\Ab) - \sigma^2_{i-1}(\Ab) \leq \delta
 \sigma_1^2(\Ab) \]
then $i$ is also assigned to $B_j$, whereas if
\[ \sigma^2_i(\Ab) - \sigma^2_{i-1}(\Ab) >  \delta \sigma_1^2(\Ab)\]
then it is assigned to a new bucket $B_{j+1}$. Let $b_j$ denote the
largest index in the interval $B_j$ for $j \in [m]$.

	\begin{figure}[hb]
		\centering
		\includegraphics[width=4 in]{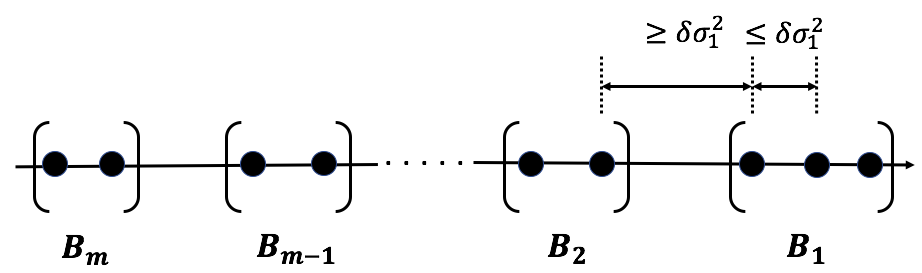}
		\caption{Illustration of the decomposition of the $k$ singular values into $m$ intervals such that there is a separation of at least $\delta \sigma_1^2$ between intervals, and at most $\delta
			\sigma_1^2$ within an interval.}
		\label{fig:interval}
	\end{figure}


Let $\Lambdab = \diag(\lambda_1,\ldots\lambda_k)$ be a diagonal matrix
with all non-negative entries which is constant on each interval $B_j$ and non-increasing across
intervals. In other words, if $i \geq j$, then  $\lambda_i \geq
\lambda_j$, with equality holding whenever $i, j$ belong to the same
interval $B_k$.  Call such a matrix a \emph{diagonal non-decreasing
matrix with respect to $B(\Ab, \delta)$}. Similarly, we define diagonal
non-increasing matrices with respect to $B(\Ab,\delta)$ to be
non-increasing but are constant on each interval $B_k$. The following
is the main technical lemma in this section, and  handles the case where the diagonal matrix is well-separated. It is a generalization of Lemma \ref{lem:diag2_simple}. In Lemma \ref{lem:diag2_simple} we assumed that the eigenvalues in each bucket are equal, here we generalize to the case where the eigenvalues in each bucket are separated by at most $\delta \sigma_1^2(\Ab)$.

\begin{lemma}
  \label{lem:diag2}
  Let $6\mu \leq \delta \leq \Delta$. Let $\Lambdab =
  \diag(\lambda_1,\ldots\lambda_k)$ be a diagonal non-increasing or a
  diagonal non-decreasing matrix with respect to $B(\Ab, \delta)$. Then
  \[ \norm{\Ub_k\Lambdab\Ub^T_{k} - \td{\Ub}_k\Lambdab\td{\Ub}^T_{k}}
  \leq 4\norm{\Lambdab}\sqrt{\frac{\mu}{\delta}}.\]
\end{lemma}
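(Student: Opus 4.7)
The plan is to follow the structure of the proof of Lemma \ref{lem:diag2_simple}, but replace the bucket structure induced by $\Lambdab$'s own entries with the singular-value bucket structure $B(\Ab, \delta) = \{B_1, \ldots, B_m\}$. By hypothesis $\Lambdab$ is constant on each bucket, with common value $\lambda_{b_j}$, so exactly the same telescoping identity used in the simple lemma still holds:
\[
\Ub_k \Lambdab \Ub_k^T = \sum_{j=1}^m (\lambda_{b_j} - \lambda_{b_{j+1}}) \Ub_{b_j} \Ub_{b_j}^T,
\]
with the convention $\lambda_{b_{m+1}} = 0$, and an analogous identity for $\td{\Ub}_k \Lambdab \td{\Ub}_k^T$. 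Subtracting and applying the triangle inequality gives
\[
\norm{\Ub_k \Lambdab \Ub_k^T - \td{\Ub}_k \Lambdab \td{\Ub}_k^T} \leq \sum_{j=1}^m |\lambda_{b_j} - \lambda_{b_{j+1}}| \cdot \norm{\Ub_{b_j} \Ub_{b_j}^T - \td{\Ub}_{b_j} \td{\Ub}_{b_j}^T}.
\]

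The first step is to bound each subspace perturbation $\norm{\Ub_{b_j} \Ub_{b_j}^T - \td{\Ub}_{b_j} \td{\Ub}_{b_j}^T}$ by $2\sqrt{\mu/\delta}$. By construction of $B(\Ab, \delta)$ the singular values of $\Ab$ satisfy $\sigma_{b_j}^2 - \sigma_{b_j+1}^2 \geq \delta \sigma_1^2$, so Wedin's theorem applied to $\Ab\Ab^T$ at the cut-off $b_j$ --- exactly as in Lemma \ref{lem:wed} and Lemma \ref{lem:non_dec1} but with $k$ replaced by $b_j$ and $\Delta$ by $\delta$ --- yields the claimed bound. The hypothesis $\delta \geq 6\mu$ ensures that the same gap survives for $\td{\Ab}$ via Weyl's inequality, so both applications of Wedin go through.

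The second step is a short telescoping computation showing $\sum_{j=1}^m |\lambda_{b_j} - \lambda_{b_{j+1}}| \leq 2\norm{\Lambdab}$ uniformly in the non-increasing and non-decreasing cases. If $\Lambdab$ is non-increasing across buckets, all differences are non-negative and the sum telescopes to $\lambda_{b_1} - \lambda_{b_{m+1}} = \lambda_{b_1} \leq \norm{\Lambdab}$. If $\Lambdab$ is non-decreasing, the first $m-1$ absolute differences telescope to $\lambda_{b_m} - \lambda_{b_1}$ while the final term $|\lambda_{b_m} - \lambda_{b_{m+1}}|$ contributes an extra $\lambda_{b_m}$, for a total of $2\lambda_{b_m} - \lambda_{b_1} \leq 2\norm{\Lambdab}$. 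Multiplying by the per-boundary bound $2\sqrt{\mu/\delta}$ yields the stated $4\norm{\Lambdab}\sqrt{\mu/\delta}$.

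The main obstacle is conceptual rather than computational: the generalization beyond Lemma \ref{lem:diag2_simple} hinges on realizing that Wedin's theorem only needs a gap in the singular values of $\Ab$ at each bucket boundary $b_j$, not any separation between the $\lambda$-values themselves. Once the buckets are defined by $B(\Ab, \delta)$ rather than by $\Lambdab$, this gap condition is handed to us by definition, and verifying that $\td{\Ab}$ inherits the gap via Weyl is the only non-trivial check. The extra factor of two compared to the simple lemma is intrinsic and arises solely from accommodating both monotonicity directions in the telescoping bound.
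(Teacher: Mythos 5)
Your proposal is correct and follows essentially the same approach as the paper's proof: the same telescoping decomposition over the bucket boundaries $b_j$, the same invocation of Wedin's theorem (via Lemma~\ref{lem:non_dec1}) at each cut $b_j$, and the same bound $\sum_j |\lambda_{b_j} - \lambda_{b_{j+1}}| \leq 2\norm{\Lambdab}$ with the factor of two coming from the non-decreasing case. Your observation that the separation needed for Wedin lives in $\Ab$'s spectrum rather than in $\Lambdab$ is precisely the insight that generalizes Lemma~\ref{lem:diag2_simple}, and your Weyl check for $\td{\Ab}$'s gap matches the paper's.
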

\begin{proof}
  Let us set $\lambda_{b_{m+1}} =  0$ for convenience. Since
  \begin{align*}
    \Ub_{b_j}\Ub_{b_j}^T  = \sum_{i =
      1}^{b_j}\colof{u}{i}{\colof{u}{i}}^T
  \end{align*}
  we can write
  \begin{align*}
    \Ub_{k} \Lambdab \Ub_{k}^T  = \sum_{j=1}^m\lambda_{b_j}\sum_{i
      \in B_j}\colof{u}{i}{\colof{u}{i}}^T = \sum_{j=1}^m(\lambda_{b_j}
    - \lambda_{b_{j+1}})\Ub_{b_j}\Ub_{b_j}^T
  \end{align*}
  and similarly for $\td{\Ub}_{k} \Lambdab\td{\Ub}_{k}^T$. So we can
  write
  \begin{align*}
    \Ub_{k} \Lambdab \Ub_{k}^T -  \td{\Ub}_{k} \Lambdab\td{\Ub}_{k}^T
    = \sum_{j=1}^{m}(\lambda_{b_j}-\lambda_{b_{j+1}})(\Ub_{b_j}\Ub_{b_j}^T-\td{\Ub}_{b_j}\td{\Ub}_{b_j}^T).
  \end{align*}
  Therefore, by the triangle inequality and Lemma \ref{lem:non_dec1},
  \begin{align*}
    \norm{ \Ub_{k} \Lambdab \Ub_{k}^T -  \td{\Ub}_{k} \Lambdab\td{\Ub}_{k}^T}
    =
    \sum_{j=1}^{m}|\lambda_{b_j}-\lambda_{b_{j+1}}|\norm{(\Ub_{b_j}\Ub_{b_j}^T-\td{\Ub}_{b_j}\td{\Ub}_{b_j}^T)}
      \leq 2\sqrt\frac{\mu}{\delta} \sum_{j=1}^m|\lambda_{b_j}-\lambda_{b_{j+1}}|
  \end{align*}
  thus proving that $\sum_{j=1}^m|\lambda_{b_j}-\lambda_{b_{j+1}}|
  \leq 2 \norm{\Lambdab}$ would imply the claim.

  When $\Lambdab$ is diagonal non-increasing with respect to
  $B(\Ab, \delta)$, then $\lambda_{b_j} -
  \lambda_{b_{j+1}} \geq 0$ for all $j \in [m]$, and $\norm{\Lambdab} =
    \lambda_{b_1}$. Hence
    \[ \sum_{j=1}^m|\lambda_{b_j}-\lambda_{b_{j+1}}| =
    \sum_{j=1}^m(\lambda_{b_j}-\lambda_{b_{j+1}}) =  \lambda_{b_1} -
    \lambda_{b_{m+1}} = \norm{\Lambdab}.\]

  When $\Lambdab$ is diagonal non-decreasing with respect to
  $B(\Ab, \delta)$, then for $j \leq m -1$, $\lambda_{b_j} \leq \lambda_{b_{j+1}}$, and $\norm{\Lambdab}
  = \lambda_{b_m}$. Hence
    \[ \sum_{j=1}^{m-1}|\lambda_{b_j}-\lambda_{b_{j+1}}| = \sum_{j=1}^{m-1} \lambda_{b_{j+1}}
      - \lambda_{b_j} = \lambda_{b_m} - \lambda_{b_1} \leq \lambda_{b_m} \]
      whereas $|\lambda_{b_m} - \lambda_{b_{m+1}}| = \lambda_{b_m}$. Thus overall,
        \[ \sum_{j=1}^m|\lambda_{b_j}-\lambda_{b_{j+1}}| \leq 2\lambda_{b_m}
      = 2\norm{\Lambdab}.\]
\end{proof}



We use this to prove our perturbation bound for $\Ub_k\Sigmab_k^2\Ub^T_k$.
\sigmaone*
\begin{proof}
Define $\Lambdab$ to be the $k\times k$ diagonal non-increasing matrix such that
all the entries in the interval $B_j$ are $\sigma_{b_j}^2$.
Define $\Omegab$ to be the $k\times k$
diagonal matrix such that $\Lambdab + \Omegab = \Sigmab_k^2$.
With this notation,
\begin{align}
  \label{eq:l+o}
  \norm{\Ub_{k} \Sigmab_k^2 \Ub_{k}^T -  \td{\Ub}_{k}
    {\Sigmab}_k^2\td{\Ub}_{k}^T}
  & = \norm{(\Ub_{k} \Lambdab \Ub_{k}^T -  \td{\Ub}_{k}
  \Lambdab\td{\Ub}_{k}^T) + (\Ub_{k} \Omegab \Ub_{k}^T -  \td{\Ub}_{k}
  \Omegab \td{\Ub}_{k}^T)}\notag\\
  &\leq \norm{\Ub_{k} \Lambdab \Ub_{k}^T -  \td{\Ub}_{k}
  \Lambdab\td{\Ub}_{k}^T} + \norm{\Ub_{k} \Omegab \Ub_{k}^T -  \td{\Ub}_{k}
  \Omegab \td{\Ub}_{k}^T)}
\end{align}

By definition, $\Lambdab$ is diagonal non-increasing,
$\norm{\Lambdab}=\sigma_{b_1}^2 \leq \sigma_1^2 $. Hence
by part (1) of Lemma \ref{lem:diag2},
\[   \norm{\Ub_{k} \Lambdab \Ub_{k}^T -  \td{\Ub}_{k}
  \Lambdab\td{\Ub}_{k}^T} \leq
4\sigma_1^2\sqrt{\frac{\mu}{\delta}}. \]

By our definition of the $B_j$s, if $i, i+1 \in B_j$
then $\sigma_i^2 - \sigma_{i+1}^2 \leq \delta \sigma_1^2$, hence for
any pair $i ,i' \ in B_j$, $\sigma_i^2 - \sigma_{i'}^2 \leq
k \delta \sigma_1^2$. Hence
\[ \norm{\Omegab} = \max_{i \in B_j}(\sigma_i^2 - \sigma_{b_j}^2)
\leq k\delta \sigma_1^2\]
We use Lemma \ref{lem:diag1} to get
\[ \norm{\Ub_{k} \Omegab \Ub_{k}^T -  \td{\Ub}_{k} \Omegab
  \td{\Ub}_{k}^T} \leq 2k\delta \sigma_1^2. \]
Plugging these bounds into Equation \eqref{eq:l+o}, we get
\begin{align*}
    \norm{\Ub_{k} \Sigmab_k^2 \Ub_{k}^T -
      \td{\Ub}_k\Sigmab_k^2\td{\Ub}_{k}^T} \leq
    4\sigma_1^2(\sqrt{\frac{\mu}{\delta}} + k \delta).
\end{align*}
We choose $\delta = \mu^{1/3}/k^{2/3}$ that minimizes the RHS, to get
\begin{align*}
\norm{	{\Ub_{k} \Sigmab_k^2 \Ub_{k}^T -  \td{\Ub}_{k}
    {\Sigmab}_k^2\td{\Ub}_{k}^T}} &\le  8\sigma_1^2(\mu k)^{1/3}.
\end{align*}
We need to ensure that this choice satisfies $6\mu \leq \delta$. This
holds since it is equivalent to $6(\mu k)^{2/3} \leq 1$, which is
implied by $\mu k \leq 1/20$. We need $\delta\leq \Delta$ which holds
since $\mu \leq \Delta^3k^2$.
\end{proof}

Next we derive our perturbation bound for
$\Ub_{k}\Sigmab_k^{-2}\Ub^T_{k}$, which will depend on the condition
number $\kappa =\sigma_1^2/\sigma_k^2$. The proof is similar to the
proof of Theorem \ref{thm:sigma1}.
\sigmatwo*
\begin{proof}
	We use a similar decomposition as in Theorem
	\ref{thm:sigma1}. Define $\Lambdab$ to be diagonal
        non-decreasing such that all the entries in the interval $B_j$
        are $1/\sigma_{b_j}^2$. Note that $\norm{\Lambdab} \leq
        \sigma_k^{-2}$.
        Using Lemma \ref{lem:diag2}, we get
	\begin{align}
          \label{eq:diag}
	\norm{\Ub_{k} \Lambdab_k^{-2} \Ub_{k}^T -  \td{\Ub}_k \Lambdab
          \td{\Ub}_{k}^T} \leq \frac{4}{\sigma_k^2}\sqrt{\frac{\mu}{\delta}}.
	\end{align}

        Define $\Omegab = \Sigmab_k^{-2}-\Lambdab$. Note that
        \begin{align*}
        \norm{\Omegab} = \max_{i \in B_j} \frac{1}{\sigma^2_{b_j}} - \frac{1}{\sigma_i^2}
        = \max_{i \in B_j} \frac{\sigma_i^2 -
          \sigma_{b_j}^2}{\sigma_i^2\sigma^2_{b_j}}
        \leq \frac{k\delta\sigma_1^2}{\sigma_k^4} = \frac{k \kappa
          \delta}{\sigma_k^2}.
        \end{align*}
        By using this in Lemma \ref{lem:diag1},
	\begin{align}
          \label{eq:noise}
          \norm{\Ub_{k} \Omegab \Ub_{k}^T -  \td{\Ub}_{k} \Omegab
            \td{\Ub}_{k}^T} \leq \frac{2k\kappa\delta}{\sigma_k^2}.
        \end{align}
        Putting Equation \eqref{eq:diag} and \eqref{eq:noise} together, we get
        \begin{align*}
        \norm{\Ub_{k}\Sigmab_k^{-2}\Ub^T_{k} -
          \td{\Ub}_k\Sigmab_k^{-2}\td{\Ub}^T_{k}} \leq
        \frac{4}{\sigma_k^2}\left(\sqrt{\frac{\mu}{\delta}} + k\kappa\delta\right).
        \end{align*}

	The optimum value of $\delta$ is $\mu^{1/3}/(k
	\kappa)^{2/3}$ which gives the claimed bound. A routine
        calculation shows that the condition $6 \mu \leq \delta \leq
	\Delta$ holds because  $\mu \leq \min(\Delta^3(k\kappa)^2, 1/(20 k \kappa))$.
\end{proof}

\section{Ridge Leverage Scores}\label{sec:ridge}

Regularizing the spectrum (or alternately, assuming that the data
itself has some ambient Gaussian noise) is closely tied to the notion
of ridge leverage scores \citep{Alaoui15}. Various versions of ridge
leverages had been shown to be good estimators for the Mahalanobis
distance in the high dimensional case and were demonstrated to be an
effective tool for anomaly detection \citep{Holgersson12}. There are efficient
sketches that approximate the ridge leverage score for specific values
of the parameter $\lambda$~\citep{CohenMM17}.

Recall that we measured deviation in the tail by the distance from the
principal $k$-dimensional subspace, given by
\[ T^k(i)= \sum_{j = k+1}^d\alpha_j^2.\]
We prefer this to using
\[ L^{> k}_i \defeq \sum_{j= k + 1}^d \frac{\alpha_j^2}{\sigma_j}^2\]
since it is more robust to the small $\sigma_j$,
and is easier to compute in the streaming
setting.\footnote{Although the latter measure is also studied in the
	literature and may be preferred in settings where there is structure in the tail.}

An alternative approach is to consider the ridge leverage
scores, which effectively replaces the covariance matrix $\Ab^T\Ab$
with $\Ab^T\Ab + \lambda \Ib$, which increases all the singular values by
$\lambda$, with the effect of damping the effect of small singular
values. We have
\[ L_\lambda(i) = \sum_{j=1}^d \frac{\alpha_j^2}{\sigma_j^2 + \lambda}.\]

Consider the case when the data is generated from a true
$k$-dimensional distribution, and then corrupted with a small amount of
white noise. It is easy to see that the data points
will satisfy both concentration and separation
assumptions. In this case, all the notions suggested above will
essentially converge. In this case, we expect $\sigma_{k+1}^2 \approx \cdots
\approx \sigma_n^2$. So
\[ L^{> k}_i = \sum_{j= k + 1}^d \frac{\alpha_j^2}{\sigma_j^2}
\approx \frac{T^k(i)}{\sigma_{k+1}^2}. \]
If $\lambda$ is chosen so that $\sigma^2_{k} \gg \lambda \gg \sigma_{k+1}^2$, it follows that
\[ L_\lambda(i) \approx L^k(i) + \frac{T^k(i)}{\lambda}. \]

\section{Streaming Lower Bounds}\label{sec:lower_bound}

In this section we prove lower bounds on streaming algorithms for computing  leverage scores, rank $k$ leverage scores and ridge leverage scores for small values of $\lambda$. Our lower bounds are based on reductions from the multi party set disjointness problem denoted as $\textsc{DISJ}_{t,d}$. In this problem, each of $t$ parties is given a set from the universe $[d]=\{1,2,\dots,d\}$, together with the promise that either the sets are \emph{uniquely intersecting}, i.e. all sets have exactly one element in common, or the sets are pairwise disjoint. The parties also have access to a common source of random bits.  \citet{chakrabarti2003near} showed a $\Omega(d/(t\log t))$ lower bound on the communication complexity of this problem. As usual, the lower bound on the communication in the set-disjointness problem translates to a lower bound on the space complexity of the streaming algorithm. 

\begin{theorem}\label{thm:lower_bound}
	For sufficiently large $d$ and $n\ge O(d)$, let the input matrix be $\Ab \in \mathbb{R}^{n \times d}$. Consider a row-wise streaming model the algorithm may make a constant number passes over the data.
	\begin{enumerate}
		\item Any randomized algorithm which computes a $\sqrt{t}$-approximation to all the leverage scores for every matrix $\Ab$ with probability at least $2/3$ and with $p$ passes over the data uses space $\Omega(d/(t^2p\log t))$.
		\item For $\lambda\le \frac{\sr(\Ab)}{2d} \sigma_1(\Ab)^2$, any randomized streaming algorithm which computes a $\sqrt{t/2}$-approximation to all the $\lambda$-ridge leverage scores for every matrix $\Ab$ with $p$ passes over the data with probability at least $2/3$ uses space $\Omega(d/(pt^2\log t))$.
		\item For $2\le k\le d/2$, any randomized streaming algorithm which computes any multiplicative approximation to all the rank $k$ leverage scores for every matrix $\Ab$ using $p$ passes and with probability at least $2/3$ uses space $\Omega(d/p)$.
		\item For $2\le k\le d/2$, any randomized algorithm which computes any multiplicative approximation to the distances from the principal $k$-dimensional subspace of every row for every matrix $\Ab$ with $p$ passes and with probability at least $2/3$ uses space $\Omega(d/p)$.
	\end{enumerate}
\end{theorem}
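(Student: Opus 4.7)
The plan is to reduce from multi-party set-disjointness $\textsc{DISJ}_{t,d}$, for which \citet{chakrabarti2003near} proved an $\Omega(d/(t \log t))$ communication lower bound. The general template: each of the $t$ parties turns its set $S_i \subseteq [d]$ into a block of rows of an input matrix $\Ab$, loads the streaming algorithm's state from the previous party, feeds its block to the algorithm, and forwards the resulting state to the next party (repeating for each pass). A $p$-pass algorithm with working space $s$ thereby induces a protocol with total communication $O(pts)$, so Chakrabarti's bound will give $s = \Omega(d/(pt^2\log t))$ for parts (1)-(2), and $s = \Omega(d/p)$ for parts (3)-(4) upon setting $t=2$ (where two-party disjointness has an $\Omega(d)$ lower bound without the $\log$ factor).

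For parts (1) and (2), the reduction uses the simplest possible gadget: party $i$ contributes rows $\{e_j : j \in S_i\}$, where $e_j \in \R^d$ is the $j$-th standard basis vector. Then $\Ab^T\Ab$ is diagonal with $(\Ab^T\Ab)_{jj}$ equal to the multiplicity $m_j$ of $e_j$ among the rows, so the leverage score of each $e_j$-row is exactly $1/m_j$. In the disjoint case every $m_j \le 1$, so all leverage scores equal $1$; in the uniquely-intersecting case the common element $j^*$ has $m_{j^*} = t$, giving its rows leverage score $1/t$. A $\sqrt{t}$-approximation distinguishes $1/t$ from $1$, proving part (1). For part (2), the same construction yields ridge leverage scores $1/(m_j+\lambda)$; under the hypothesis $\lambda \le \sr(\Ab)\sigma_1(\Ab)^2/(2d)$---which in this gadget amounts to $\lambda$ being a small constant since $\sigma_1^2 = t$ and $\sr(\Ab) = O(d/t)$---the values $1/(1+\lambda)$ and $1/(t+\lambda)$ remain separated by a factor exceeding $\sqrt{t/2}$.

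For parts (3) and (4), fix $t=2$ and design a slightly richer gadget so that the rank-$k$ score of some specific row is \emph{exactly zero} in one case and \emph{strictly positive} in the other; any multiplicative approximation must then distinguish them. Augment the $\{e_j : j \in S_i\}$ rows with $k-1$ fixed heavy anchor rows $\alpha_1 e_{f_1}, \ldots, \alpha_{k-1} e_{f_{k-1}}$ on fresh coordinates $f_1,\ldots,f_{k-1}\notin[d]$, with distinct squared weights $\alpha_1^2 > \cdots > \alpha_{k-1}^2$ chosen so that $\alpha_{k-1}^2 \in (1,2)$. In the disjoint case, the top-$k$ singular directions are exactly the $k-1$ anchor directions plus the direction of largest remaining weight (which, by choice of $\alpha_{k-1}^2>1$, is also an anchor or a single set-row); the direction $e_{j^*}$ does \emph{not} lie in the top-$k$ subspace, so the row $e_{j^*}$ gets $L^k = 0$ (respectively $T^k = \sqnorm{e_{j^*}} = 1$). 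In the uniquely intersecting case, $e_{j^*}$ has squared norm $2 > \alpha_{k-1}^2$ and thus enters the top-$k$ subspace, yielding $L^k(e_{j^*}) = 1/2$ (respectively $T^k(e_{j^*}) = 0$). No finite multiplicative approximation can distinguish zero from a fixed positive value, yielding the $\Omega(d/p)$ bound.

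The main obstacle is verifying part (3)/(4): one must choose the $\alpha_i$ so that the top-$k$ subspace is well-defined (no eigenvalue ties), the identity of the top-$k$ directions really flips as described between the two cases, and the spectral-gap/separation assumptions elsewhere in the paper are not needed for the \emph{lower bound} (which concerns arbitrary inputs). Slight perturbations of the weights---choosing $\alpha_i^2$ to be distinct irrationals bounded away from $1$ and $2$---handle the first two, and the latter is automatic since lower bounds hold over \emph{all} inputs. Once the gadget is verified, the reduction-to-communication argument gives all four bounds by plugging into the Chakrabarti lower bound (resp.\ two-party disjointness for parts (3)-(4)).
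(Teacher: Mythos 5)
Your reduction template and the constructions for parts (1) and (2) coincide with the paper's: party $i$ contributes the rows $\{e_j : j \in S_i\}$, leverage scores become $1/m_j$ (resp.\ $1/(m_j+\lambda)$), and the DISJ communication lower bound carries over at cost $s \cdot p \cdot t$. The only small omission in part (2) is that the paper scales the gadget by a factor $\sigma \geq \sqrt{\lambda}$ so the argument covers \emph{arbitrary} $\lambda$ satisfying the hypothesis rather than only $\lambda$ up to a fixed constant; this is a trivial fix.

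The construction for parts (3)/(4) has a genuine off-by-one gap. You add $k-1$ anchor rows with distinct weights $\alpha_1^2 > \cdots > \alpha_{k-1}^2 \in (1,2)$, so that in the disjoint case the top $k-1$ singular directions are the anchors and the $k$-th singular value equals $1$. But that value of $1$ is shared by \emph{every} set row (they all have multiplicity one), so the $k$-th singular vector is an arbitrary element of a large eigenspace and the rank-$k$ leverage score / projection distance of the individual set rows is not well defined; in particular your claim that $e_{j^*}$ ``does not lie in the top-$k$ subspace'' has no basis, since $e_{j^*}$ is just one of the tied candidates. Your proposed remedy---perturbing the $\alpha_i$---does nothing, because the degeneracy is among the set rows, not among the anchors. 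In fact even if you perturb the set-row weights to break the tie, both cases end up with exactly one set row of nonzero $L^k$, so you would need a further protocol step in which each party tests membership of that one element in its own set to distinguish disjoint from intersecting. The paper avoids all of this by introducing $k$ (not $k-1$) anchor rows, each scaled by $1.1$: then in the disjoint case the top-$k$ space is exactly the anchor span (well-defined gap between $1.1^2$ and $1$), so \emph{every} set row has $L^k = 0$ and $T^k = 1$, while in the intersecting case $e_{j^*}$ enters the top-$k$ with $L^k(j^*) = 1/2$ and $T^k(j^*) = 0$, and a zero-versus-nonzero test suffices.
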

We make a few remarks:
\begin{itemize}
\item The lower bounds are independent of the stable rank of the matrix. Indeed they hold both when $\sr(\Ab)=o(d)$ and  when $\sr(\Ab)=\Theta(d)$. 
\item The Theorem is concerned only with the  working space; the algorithms are permitted to have separate access to a random string. 
\item In the first two cases an additional $\log t$ factor in the space requirement can be obtained if we limit the streaming  algorithm to one pass over the data.
\end{itemize}

Note that Theorem \ref{thm:lower_bound} shows that the Frequent Directions sketch for computing outlier scores is close to optimal as it uses $O(d\ell)$ space, where the projection dimension $\ell$ is a constant for many relevant parameter regimes. The lower bound also shows that the average case guarantees for the random projection based sketch which uses working space $O(\ell^2)$ cannot be improved to a point-wise approximation. We now prove Theorem \ref{thm:lower_bound}.

\begin{proof}
	We describe the reduction from $\textsc{DISJ}_{t,d}$ to computing each of the four quantities.
	
	\paragraph{$(1)$ Leverage scores:} Say for contradiction we have an algorithm which computes a $\sqrt{t}$-approximation to all the leverage scores for every matrix $\Ab \in \R^{n \times d}$ using space $O(d/(kt^2\log t))$ and $k=O(1)$ passes. We will use this algorithm to design a protocol for $\textsc{DISJ}_{t,d}$ using a communication complexity of $O(d/(t\log t))$.  In other words we need the following lemma.
	
	\begin{lemma}
		A streaming algorithm which approximates all leverage scores within $\sqrt t$ with $p$ passes over the data, and which uses space $s$ implies a protocol for $\textsc{DISJ}_{t,d}$ with communication complexity $s\cdot p \cdot t$ 
	\end{lemma}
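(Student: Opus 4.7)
The plan is to reduce from $\textsc{DISJ}_{t,d}$ via a round-robin simulation of the streaming algorithm. Given an instance in which party $i \in [t]$ holds a set $S_i \subseteq [d]$, party $i$ forms a block $\Ab_i \in \mathbb{R}^{|S_i|\times d}$ whose rows are the standard basis vectors $\{e_j : j \in S_i\}$, and the parties feed the streaming algorithm the vertical concatenation $\Ab = [\Ab_1^T \mid \cdots \mid \Ab_t^T]^T$, processed in the natural order: party $1$ runs the algorithm on $\Ab_1$, transmits the $s$-bit internal state to party $2$, who continues with $\Ab_2$, and so on. Once party $t$ finishes a pass, the state returns to party $1$ for the next pass. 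Each of the $p$ passes consists of $t$ state transmissions of $s$ bits, for total communication $s \cdot p \cdot t$.

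Next I would argue that approximate leverage scores already decide the $\textsc{DISJ}_{t,d}$ instance. Observe that $\Ab^T \Ab = \diag(c_1, \ldots, c_d)$ where $c_j = |\{ i : j \in S_i \}|$, so each row of $\Ab$ (which is some $e_j$) has exact leverage score $1/c_j$. In the pairwise-disjoint case every non-zero $c_j$ equals $1$, and so every realized row has leverage score exactly $1$. In the uniquely-intersecting case some $j^\star$ has $c_{j^\star} = t$, and the $t$ rows equal to $e_{j^\star}$ have leverage score $1/t$. A (strict) factor-$\sqrt{t}$ approximation separates these ranges at the threshold $1/\sqrt{t}$: in the disjoint case every reported score exceeds $1/\sqrt{t}$, while in the uniquely-intersecting case the $t$ scores for $e_{j^\star}$ fall below $1/\sqrt{t}$. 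Whichever party holds the state at the end of the final pass locally checks whether any reported score falls below this threshold and broadcasts the single-bit verdict, producing a randomized protocol for $\textsc{DISJ}_{t,d}$ with success probability at least $2/3$ and communication $s \cdot p \cdot t + O(1)$, which matches the bound in the lemma.

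The only subtlety is the quantitative one of strict separation between the two regimes. The factor-$t$ gap between the true leverage scores in the YES and NO cases means that a $\sqrt{t}$-approximation with strict inequality suffices; if one prefers a non-strict approximation, one replaces $\sqrt{t}$ by a slightly smaller factor such as $\sqrt{t}/2$, which affects only constants in the eventual space bound $\Omega(d/(pt^2 \log t))$ obtained by combining with the \citet{chakrabarti2003near} communication lower bound. No other technical obstacle arises beyond the standard round-robin state-passing argument, and the reduction extends verbatim to the rank-$k$ leverage score, projection-distance, and ridge leverage score variants in the remaining parts of Theorem~\ref{thm:lower_bound} by suitably modifying the hard instance (e.g., padding with additional structured rows so that an appropriate gap persists in the relevant score).
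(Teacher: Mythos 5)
Your proof is correct and follows essentially the same route as the paper: the identical construction (rows $e_j$ for $j\in S_i$, vertically concatenated), the same round-robin state-passing giving $s\cdot p\cdot t$ bits, and the same gap between leverage scores $1$ (disjoint) and $1/t$ (uniquely intersecting) decided by a $\sqrt t$-approximation. Your explicit computation via $\Ab^T\Ab = \diag(c_1,\ldots,c_d)$ and the remark about the boundary case at the threshold $1/\sqrt t$ are slightly more careful than the paper's ``a moment's reflection'' but do not change the argument.
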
 
	
\begin{proof}
	Given an $\textsc{DISJ}_{t,d}$ instance, we create a matrix $\Ab$ with $d$ columns as follows: 
	Let $e_i$ be  the $i$th row of the $(d \times d)$ identity matrix  $\Ib_{d\times d}$. The vector $e_i$ is associated with the $i$th element of the universe $[d]$. Each player $j$ prepares a matrix $\Ab_j$ with $d$ columns by adding the row $e_i$ for each  $i \in [d]$ in its set.  $\Ab$ is composed of the rows of the $t$ matrices $\Ab_j$.

We claim that $\sqrt t$ approximation to the leverage scores of $\Ab$ suffices to differentiate between the case the sets are disjoint and the case they are uniquely intersecting. To see this note that if the sets are all disjoint then each row is linearly independent from the rest and therefore all rows have leverage score $1$. If the sets are \emph{uniquely intersecting}, then exactly one row in $\Ab$ is repeated $t$ times. A moment's reflection reveals that in this case, each of these rows has leverage score $1/t$. Hence a $\sqrt{t}$-approximation to all leverage scores allows the parties to distinguish between the two cases.

The actual communication protocol is now straight forward. Each party $i$ in turn runs the algorithm over its own matrix $\Ab_i$ and passes the $s$ bits which are the state of the algorithm to the party $i+1$. The last party outputs the result. If the algorithm requires $p$ passes over the data the total communication is $p \cdot s \cdot t$. 
\end{proof}
Theorem~\ref{thm:lower_bound} now follows directly from the lower bound on the communication complexity of $\textsc{DISJ}_{t,d}$.

Note that in the construction above the stable rank $\sr(\Ab)$ is $\Theta(d)$. The dependency on $d$ could be avoided by adding a row and column to $\Ab$:  A column of all zeros is added to $\Ab$ and then the last party adds a row to $\Ab_t$ having the entry $\sqrt{K}\ge 1$ in the last column. Now since $K>1$ the last row will dominate both the Frobenius and the operator norm of the matrix but does not affect the leverage score of the other rows. Note that $\sr(\Ab)\le \frac{K+d}{K}$. By choosing $K$ large enough, we can now decrease $\sr(\Ab)$ to be arbitrarily close to $1$.	
Note also that if the algorithm is restricted to one pass, the resulting protocol is one directional and has a slightly higher lower bound of $\Omega(d/t^2)$.  
	
\paragraph{$(2)$ Ridge leverage scores: } We use the same construction as before and multiply $\Ab$ by $\sigma \geq \sqrt \lambda$. Note that as required, the matrix $\Ab$ has operator norm $\sigma$.
As before, it is sufficient to claim that by approximately computing all ridge leverage scores the parties can distinguish between the case their sets are mutually disjoint and the case they are uniquely intersecting. Indeed, if the sets are mutually disjoint then all rows have ridge leverage scores $\frac{\sigma^2}{\sigma^2+\lambda}$.  If the sets are {uniquely intersecting}, then exactly one element is repeated $t$ times in the matrix $\Ab$, in which case this element has ridge leverage score $\frac{\sigma^2}{t\sigma^2+\lambda}$. These two cases can be distinguished by a $\sqrt{t/2}$-approximation to the ridge leverage scores if $\lambda\le \sigma^2$.  
	
To modify the stable rank $\sr(\Ab)$ in this case we do the same trick as before, add a column of zeros and the last party adds an additional row having the entry $\sqrt{K}\sigma\ge \sigma$ in its last column. Note that $\sr(\Ab)\le\frac{K+d}{K}$, and by increasing $K$ we can decrease $\sr(\Ab)$ as necessary. However, $\norm{\Ab}^2$ now equals ${K}\sigma^2$, and hence we need to upper bound $K$ in terms of the stable rank $\sr(\Ab)$ to state the final bound for $\lambda$ in terms of $\norm{\Ab}^2$. Note that $\sr(\Ab)\le \frac{K+d}{K} \implies K\le 2d/\sr(\Ab)$. Hence $\lambda\le \frac{\sr(\Ab)}{2d}\norm{\Ab}^2$ ensures that $\lambda\le \sigma^2$.
	
\paragraph{$(3)$ Rank-$k$ leverage scores:} The construction is similar to the previous ones, with some modifications to ensure that the top $k$ singular vectors are well defined. We set number of parties to be $2$, and let the universe be of size $d' = d-k$, so the matrix is wider that the size of universe by $k$ columns. As before, for $i\leq d'$ the $i$'th row of  $\Ib_{d\times d}$ is associated with the $i$'th element of the universe $[d']$. 
	The first set of rows in $\Ab$ are the rows corresponding to the elements in the first party's set and the next set of rows in $\Ab$ correspond to the elements in the second party's set. The second party also adds the last $k$ rows of $\Ib_{d\times d}$, scaled by $1.1$, to the matrix $\Ab$.  
	
	We claim that by computing a multiplicative approximation to all rank $k$ leverage scores the parties can determine whether their sets are disjoint. If the sets are all disjoint, then the top $k$ right singular vectors correspond to the last $k$ rows of the matrix $\Ib_{d\times d}$, and these are orthogonal to the rest of the matrix and hence the rank $k$ leverage scores of all rows except the additional ones added by the second party are all $0$. If the sets are  intersecting, then the row corresponding to the intersecting element is the top right singular vector of $\Ab$, as it has singular value $\sqrt{2} > 1.1$. Hence the rank $k$ leverage score of this row is $1/2$. Hence the parties can distinguish whether they have disjoint sets by finding any multiplicative approximation to all rank $k$ leverage scores.
	\remove{
	As before the reduction could be extended to the case where the matrix $\Ab$ has any number of rows by repeating the $(d'+1)$th row of $\Ib_{d\times d}$ any number of times as necessary with the appropriate scaling constant such that the singular value corresponding to that row is still $1.1$. Note that if the sets intersect, the row corresponding to the intersecting element is still the top right singular vector of $\Ab$, and hence a multiplicative approximation suffices to distinguish whether the sets intersect or not.
}

	We apply a final modification to decrease the stable rank $\sr(\Ab)$ as necessary. We scale the $d$th row of $\Ib_{d\times d}$ by a constant $\sqrt{K}$. Note that $\sr(\Ab)\le \frac{K+d+2k}{K}$. By choosing $K$ accordingly, we can now decrease $\sr(\Ab)$ as desired. We now examine how this scaling affects the rank $k$ leverage scores for the rows corresponding to the sets. When the sets are not intersecting, the rank $k$ leverage scores of all the rows corresponding to the set elements are still 0. When the sets are intersecting, the row corresponding to the intersecting element is at the least second largest right singular vector of $\Ab$ even after the scaling, as $\sqrt{2}>1.1$. In this case, for $k\ge2$ the rank $k$ leverage score of this row is 1/2, hence the parties can distinguish whether they have disjoint sets by finding any multiplicative approximation to all rank $k$ leverage scores for any $2\le k\le d/2$.
	
	\paragraph{Distance from principal $k$-dimensional subspace:} We use the same construction as in statement $(3)$. If the sets are non-intersecting, all the rows corresponding to the sets of the two-parties have distance $1$ from the principal $k$-dimensional subspace. If the sets are intersecting, the row corresponding to the  element in the intersection has distance $0$ from the principal  $k$-dimensional subspace, as that row is either the top or the second right singular vector of $\Ab$. Hence, any multiplicative approximation could be used to distinguish between the two cases.
	
\end{proof}

\end{document}